\definecolor{tagcolor}{rgb}{0,0,0}
\definecolor{attrcolor}{rgb}{0.6,0,0.8}
\definecolor{valcolor}{rgb}{0,0,1}
\lstdefinelanguage{CustomXML}{
  morekeywords={/>},
  morecomment=[s]{<!--}{-->},
  morestring=[b]",
  sensitive=true,
}
\newcommand{\norm}[1]{\left\lVert #1 \right\rVert}
\newcommand{\tr}{\mathrm{Tr}}
\theoremstyle{plain}
\newtheorem{theorem}{Theorem}[section]
\newtheorem{lemma}[theorem]{Lemma}
\newtheorem{proposition}[theorem]{Proposition} 
\theoremstyle{definition}
\newtheorem{definition}[theorem]{Definition}
\theoremstyle{remark}
\newtheorem{remark}[theorem]{Remark}
\DeclareMathOperator{\Tr}{Tr}
\newcommand{\ours}{\textsc{CompFlow}\xspace}
\title{Composite Flow Matching for Reinforcement Learning with Shifted-Dynamics Data}
\author{%
  Lingkai Kong\thanks{Equal contribution. Corresponding author: \texttt{lingkaikong@g.harvard.edu}} \quad
  Haichuan Wang\footnotemark[1] \quad
  Tonghan Wang\footnotemark[1]  \quad
  Guojun Xiong \quad
  Milind Tambe \\
  School of Engineering and Applied Sciences \\
  Harvard University \\
}
\begin{document}

\maketitle

\begin{abstract}

Incorporating pre-collected offline data can substantially improve the sample efficiency of reinforcement learning (RL), but its benefits can break down when the transition dynamics in the offline dataset differ from those encountered online. Existing approaches typically mitigate this issue by penalizing or filtering offline transitions in regions with large dynamics gap. However, their dynamics-gap estimators often rely on KL divergence or mutual information, which can be ill-defined when offline and online dynamics have mismatched support. To address this challenge, we propose \ours, a principled framework built on the theoretical connection between flow matching and optimal transport. Specifically, we model the online dynamics as a conditional flow built upon the output distribution of a pretrained offline flow, rather than learning it directly from a Gaussian prior. This composite structure provides two advantages: (1) improved generalization when learning online dynamics under limited interaction data, and (2) a well-defined and stable estimate of the dynamics gap via the Wasserstein distance between offline and online transitions. Building on this dynamics-gap estimator, we further develop an optimistic active data collection strategy that prioritizes exploration in high-gap regions, and show theoretically that it reduces the performance gap to the optimal policy. Empirically, \ours consistently outperforms strong baselines across a range of RL benchmarks with shifted-dynamics data.


\end{abstract}

\section{Introduction}

Reinforcement Learning (RL) has demonstrated remarkable performance in complex sequential decision-making tasks such as playing Go and Atari games~\cite{schrittwieser2020mastering, silver2017mastering}, supported by access to large amounts of online interactions with the environment. However, in many real-world domains such as robotics~\cite{kober2013reinforcement, tang2025deep}, healthcare~\cite{yu2021reinforcement}, and wildlife conservation~\cite{xu2021dual, xu2021robust, kong2025robust,kong2025generative}, access to such interactions is often prohibitively expensive, unsafe, or infeasible. The limited availability of interactions presents a major challenge for learning effective and reliable policies. To address this challenge and improve sample efficiency during online training, a promising strategy is to incorporate a pre-collected offline dataset generated by a previous policy~\cite{song2022hybrid, nakamoto2023cal}. This approach enables the agent to learn from a broader set of experiences, which can help accelerate learning and improve performance~\citep{song2022hybrid}.

A critical challenge in online RL with offline data arises when the transition dynamics in the offline dataset differ from those in the online environment where the agent actively interacts and learns~\citep{qu2025hybrid}. This issue, commonly referred to as \textit{shifted dynamics}, can create severe distribution mismatch, bias policy updates, destabilize learning, and ultimately degrade performance. For instance, in robotics, the transition dynamics specify how the robot’s state (e.g., position and velocity) evolves after executing an action. During deployment, changes in physical parameters such as surface friction can alter this state evolution, causing the true next-state distribution to deviate from that implied by the historical data. Similarly, in conservation planning, the transition dynamics describe how the spatial risk of poaching evolves in response to patrol actions. Data collected in one region may not transfer to another because differences in terrain, accessibility, and human activity patterns lead to different state transitions and behavioral responses under the same patrol strategy~\cite{xu2021dual}.


To address these challenges, existing methods either penalize the rewards or value estimates of offline transitions with high dynamics gap~\cite{niu2022trust, lyu2024cross}, or filter out such transitions entirely~\cite{wen2024contrastive}. However, these approaches face key limitations. Most notably, the estimation of the dynamics gap typically relies on KL divergence or mutual information, both of which can be ill-defined when the offline and online transition dynamics have different supports~\cite{arjovsky2017wasserstein, poole2019variational}.

\begin{wrapfigure}{r}{0.5\linewidth}
    \centering
    \vspace{-5pt} 
    \includegraphics[width=\linewidth]{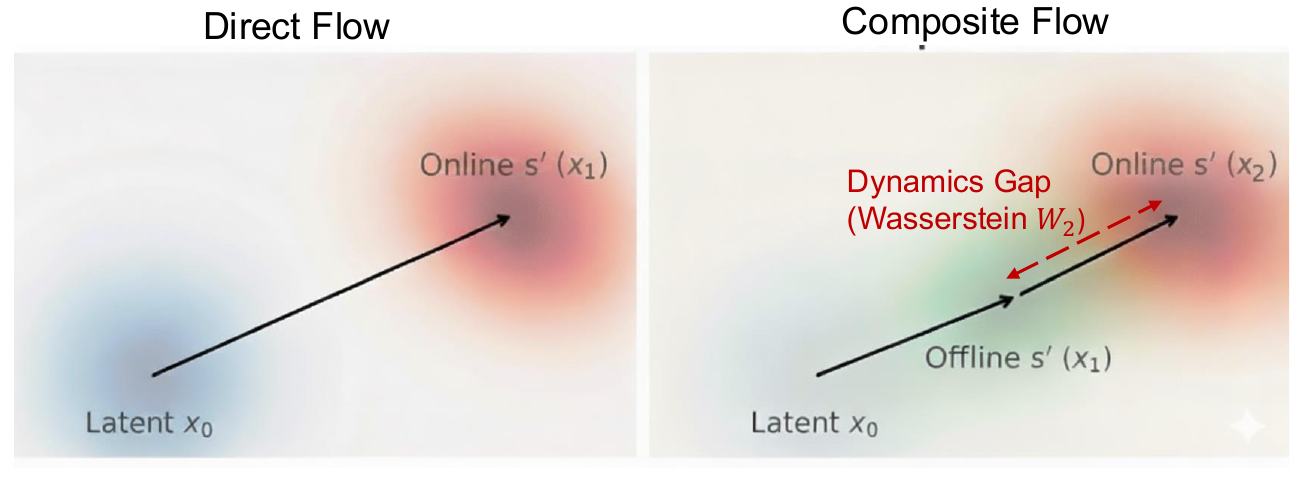}
    \vspace{-1.5em} 
    \caption{Comparison between direct and composite flow matching. Composite flow first transports from a Gaussian latent variable to the  offline transition distribution, then adapts to the online distribution via optimal transport flow matching.}
    \label{fig:composite_flow_matching}
\end{wrapfigure}

In this paper, we propose \ours, a new method for RL with shifted-dynamics data that leverages the theoretical connection between flow matching and optimal transport. We model the transition dynamics of the online environment using a composite flow architecture, where the online flow is defined on top of the output distribution of a learned offline flow rather than being initialized from a Gaussian prior. This design enables a principled estimation of the dynamics gap using the Wasserstein distance between the offline and online transition dynamics. On the theoretical side, we show that the composite flow formulation reduces generalization error compared standard flow matching by reusing structural knowledge embedded in the offline data, particularly when online interactions are limited. 

Building on the dynamics gap estimated by composite flow matching using the Wasserstein distance, we go beyond selectively merging offline transitions with low dynamics gap and further propose an active data collection strategy that targets regions in the online environment where the dynamics gap relative to the offline data is high. Such regions are often underrepresented in the replay buffer due to the dominance of low-gap samples. Our theoretical analysis shows that targeted exploration in these high-gap regions can further close the performance gap with respect to the optimal policy.

Our contributions are summarized as follows:
(1) We introduce a composite flow model that estimates the dynamics gap by computing the Wasserstein distance between conditional transition distributions. We provide theoretical analysis showing that this approach achieves lower generalization error compared to learning the online dynamics from scratch.
(2) Leveraging this principled estimation, we propose a new data collection strategy that encourages the policy to actively explore regions with high dynamics gap. We also provide a theoretical analysis of its performance benefits.
(3) We empirically validate our method on various  RL benchmarks with shifted dynamics and demonstrate that \ours outperforms or matches state-of-the-art baselines across these tasks.


\section{Problem Statement and Background}

\subsection{Problem Definition}
\label{sec:problem-definition}
We consider two infinite-horizon MDPs:
$\mathcal{M}_{\text{off}} := (\mathcal{S}, \mathcal{A}, p_{\text{off}}, r, \gamma)$
and
$\mathcal{M}_{\text{on}} := (\mathcal{S}, \mathcal{A}, p_{\text{on}}, r, \gamma)$,
sharing the same state/action spaces, reward function $r:\mathcal{S}\times\mathcal{A}\to\mathbb{R}$,
and discount factor $\gamma\in(0,1)$, but differing in transition dynamics:
\[
p_{\text{off}}(s'|s,a) \neq p_{\text{on}}(s' |s,a) \quad \text{for some } (s,a).
\]
We assume rewards are bounded, i.e., $|r(s,a)| \le r_{\max}$ for all $s,a$.
For any policy $\pi$, let $(s_t,a_t)_{t\ge 0}$ be the trajectory generated by $\mathcal{M}$ and $\pi$. We define the discounted state–action visitation (occupancy) measure as $\rho_{\mathcal{M}}^\pi(s,a)
:= (1-\gamma)\,\mathbb{E}\!\left[\sum_{t=0}^{\infty}\gamma^t\,\mathbf{1}\{s_t=s,\,a_t=a\}\right]$, and the discounted state visitation as
$d_{\mathcal{M}}^\pi(s):=\sum_a \rho_{\mathcal{M}}^\pi(s,a).$ The expected return is \(\eta_{\mathcal{M}}(\pi) := \mathbb{E}_{(s,a) \sim \rho_{\mathcal{M}}^\pi}[r(s,a)]\).




\begin{definition}[Online Policy Learning with Shifted-Dynamics Offline Data]
\label{def:adaptation_problem}
Given an offline dataset \(\mathcal{D}_{\text{off}} = \{(s_i, a_i, s'_i, r_i)\}_{i=1}^N\) from \(\mathcal{M}_{\text{off}}\) and limited online access to \(\mathcal{M}_{\text{on}}\), the objective is to learn a policy \(\pi\) maximizing \(\eta_{\mathcal{M}_{\text{on}}}(\pi)\), ideally approaching \(\eta_{\mathcal{M}_{\text{on}}}(\pi^\star)\), where \(\pi^\star := \arg\max_\pi \eta_{\mathcal{M}_{\text{on}}}(\pi)\).
\end{definition}


\begin{restatable}[Return Bound between Two Environments~\cite{lyu2024cross}]{lemma}{returnbound}\label{lemma:return-bound}
Let the empirical behavior policy in \(\mathcal{D}_{\rm off}\) be \(\pi_{\mathcal{D}_{\rm off}}(a \mid s)\). Define \(C_1 = \frac{2r_{\max}}{(1 - \gamma)^2}\). Then for any policy \(\pi\),
\begin{align*}
\eta_{\mathcal{M}_{\rm on}}(\pi) - \eta_{\mathcal{M}_{\rm off}}(\pi) \geq\;
& - 2C_1 \, \mathbb{E}_{(s,a) \sim \rho_{\mathcal{M}}^{\pi_{\mathcal{D}_{\rm off}}}, s' \sim p_{\rm off}} \left[ D_{\mathrm{TV}}(\pi(\cdot |s') \parallel \pi_{\mathcal{D}_{\rm off}}(\cdot | s')) \right] \\
& - C_1 \, \mathbb{E}_{(s,a) \sim \rho_{\mathcal{M}}^{\pi_{\mathcal{D}_{\rm off}}}} \left[ D_{\mathrm{TV}}(p_{\rm on}(\cdot | s,a) \parallel p_{\rm off}(\cdot|s,a)) \right].
\end{align*}
\end{restatable}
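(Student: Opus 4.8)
The plan is to obtain the inequality in two moves: a \emph{simulation‑lemma} step that turns the return gap between $\mathcal{M}_{\rm on}$ and $\mathcal{M}_{\rm off}$ for the \emph{same} policy $\pi$ into a dynamics‑mismatch term measured under $\pi$'s visitation in $\mathcal{M}_{\rm off}$, followed by a \emph{change‑of‑occupancy} step that re‑anchors that term at the behavior visitation $\rho_{\mathcal{M}}^{\pi_{\mathcal{D}_{\rm off}}}$ (the distribution the offline data actually covers), at the cost of the policy‑mismatch term. No reward augmentation / importance weighting is needed, which is precisely why $D_{\mathrm{TV}}$ rather than a KL appears.

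\textbf{Step 1 (simulation lemma).} Let $V_{\mathcal{M}}^\pi$ be the value function of $\pi$ in $\mathcal{M}$, so that $\eta_{\mathcal{M}}(\pi)=(1-\gamma)\,\mathbb{E}_{s_0}[V_{\mathcal{M}}^\pi(s_0)]$ and $\|V_{\mathcal{M}}^\pi\|_\infty\le r_{\max}/(1-\gamma)$. Subtract the Bellman equations for $V_{\mathcal{M}_{\rm on}}^\pi$ and $V_{\mathcal{M}_{\rm off}}^\pi$ and insert $\pm\,\mathbb{E}_{s'\sim p_{\rm off}}[V_{\mathcal{M}_{\rm on}}^\pi(s')]$; the pointwise gap splits into a recursive part $\gamma\,\mathbb{E}_{a\sim\pi,\,s'\sim p_{\rm off}}[V_{\mathcal{M}_{\rm on}}^\pi(s')-V_{\mathcal{M}_{\rm off}}^\pi(s')]$ and a one‑step part $\gamma\,\mathbb{E}_{a\sim\pi}\big[\mathbb{E}_{s'\sim p_{\rm on}}V_{\mathcal{M}_{\rm on}}^\pi(s')-\mathbb{E}_{s'\sim p_{\rm off}}V_{\mathcal{M}_{\rm on}}^\pi(s')\big]$, whose magnitude is at most $\tfrac{2\gamma r_{\max}}{1-\gamma}\,D_{\mathrm{TV}}(p_{\rm on}(\cdot|s,a)\,\|\,p_{\rm off}(\cdot|s,a))$. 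Unrolling the recursive part along trajectories of $(\pi,p_{\rm off})$ and taking the initial‑state expectation collapses the geometric series into an expectation under $\rho_{\mathcal{M}_{\rm off}}^{\pi}$, yielding
\begin{align*}
\eta_{\mathcal{M}_{\rm on}}(\pi)-\eta_{\mathcal{M}_{\rm off}}(\pi)\;\ge\;-\,\tfrac{2\gamma r_{\max}}{1-\gamma}\;\mathbb{E}_{(s,a)\sim\rho_{\mathcal{M}_{\rm off}}^{\pi}}\!\left[D_{\mathrm{TV}}\big(p_{\rm on}(\cdot|s,a)\,\|\,p_{\rm off}(\cdot|s,a)\big)\right].
\end{align*}

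\textbf{Step 2 (change of occupancy and assembly).} Since $D_{\mathrm{TV}}\le 1$, Hölder's inequality bounds the error of replacing $\rho_{\mathcal{M}_{\rm off}}^{\pi}$ by $\rho_{\mathcal{M}_{\rm off}}^{\pi_{\mathcal{D}_{\rm off}}}$ in the term above by $\|\rho_{\mathcal{M}_{\rm off}}^{\pi}-\rho_{\mathcal{M}_{\rm off}}^{\pi_{\mathcal{D}_{\rm off}}}\|_1$. Applying the standard occupancy‑perturbation inequality — write the flow identity for the two state–action occupancies, peel off one $p_{\rm off}$‑transition so the driving term becomes $\pi(\cdot|s')-\pi_{\mathcal{D}_{\rm off}}(\cdot|s')$ at a state $s'$ one step downstream of the behavior occupancy, and use that the residual map is a $\gamma$‑contraction in $\ell_1$ — gives $\|\rho_{\mathcal{M}_{\rm off}}^{\pi}-\rho_{\mathcal{M}_{\rm off}}^{\pi_{\mathcal{D}_{\rm off}}}\|_1\le \tfrac{2\gamma}{1-\gamma}\,\mathbb{E}_{(s,a)\sim\rho_{\mathcal{M}}^{\pi_{\mathcal{D}_{\rm off}}},\,s'\sim p_{\rm off}}[D_{\mathrm{TV}}(\pi(\cdot|s')\,\|\,\pi_{\mathcal{D}_{\rm off}}(\cdot|s'))]$. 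Substituting back, the dynamics term acquires prefactor $\tfrac{2\gamma r_{\max}}{1-\gamma}$ under $\rho_{\mathcal{M}}^{\pi_{\mathcal{D}_{\rm off}}}$ and the policy term acquires prefactor $\tfrac{2\gamma r_{\max}}{1-\gamma}\cdot\tfrac{2\gamma}{1-\gamma}$; bounding $\gamma\le 1$ coarsely turns these into $C_1=\tfrac{2r_{\max}}{(1-\gamma)^2}$ and $2C_1$, which is the claimed bound.

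\textbf{Main obstacle.} The delicate point is the occupancy‑change step: the vanilla perturbation bound most naturally produces the policy total variation at the \emph{current} state (under $\rho^{\pi}$ or under $\rho^{\pi_{\mathcal{D}_{\rm off}}}$, depending on which cross term is retained), whereas the lemma states it one transition downstream, at $s'\sim p_{\rm off}(\cdot|s,a)$ with $(s,a)\sim\rho_{\mathcal{M}}^{\pi_{\mathcal{D}_{\rm off}}}$. Getting exactly this form requires peeling the flow identity at the right place and being careful not to shed the discount‑horizon factor or to leave an uncontrolled initial‑state term $\mathbb{E}_{s'\sim\mu_0}[\,\cdot\,]$ hanging outside; folding that boundary contribution into the stated constants (so it does not appear as a stray additive term) is where the bookkeeping is least routine. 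The two contraction arguments — on values in Step 1 and on occupancies in Step 2 — and the crude constant bounds are otherwise mechanical, and one must only check that the two factors of $1/(1-\gamma)$ compose without double counting.
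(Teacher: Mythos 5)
Your overall strategy (simulation lemma for the dynamics gap, then a change of occupancy from $\rho_{\mathcal{M}_{\rm off}}^{\pi}$ to $\rho_{\mathcal{M}_{\rm off}}^{\pi_{\mathcal{D}_{\rm off}}}$ paid for by the policy-mismatch term) is genuinely different from the paper's, which never invokes an occupancy-perturbation bound: the paper splits $\eta_{\mathcal{M}_{\rm on}}(\pi)-\eta_{\mathcal{M}_{\rm off}}(\pi)$ through the behavior policy's return, $\bigl(\eta_{\mathcal{M}_{\rm on}}(\pi)-\eta_{\mathcal{M}_{\rm off}}(\pi_{\mathcal{D}_{\rm off}})\bigr)+\bigl(\eta_{\mathcal{M}_{\rm off}}(\pi_{\mathcal{D}_{\rm off}})-\eta_{\mathcal{M}_{\rm off}}(\pi)\bigr)$, and applies two exact telescoping identities (an extended telescoping lemma for the cross-dynamics, cross-policy piece and a performance-difference lemma for the same-dynamics piece), both of which are already anchored at $\rho^{\pi_{\mathcal{D}_{\rm off}}}_{\mathcal{M}_{\rm off}}$ with the policy discrepancy appearing at the downstream state $s'\sim p_{\rm off}(\cdot\mid s,a)$. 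That is precisely why no boundary term ever arises in the paper's argument.

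The gap in your proposal is the point you yourself flag and then dismiss. The occupancy-perturbation inequality you invoke in Step~2 does not close without a residue: writing the flow identity $\rho^{\pi}=(1-\gamma)\mu_0\otimes\pi+\gamma P^{\pi}\rho^{\pi}$ and subtracting, the driving term is
\begin{equation*}
(1-\gamma)\bigl(\mu_0\otimes\pi-\mu_0\otimes\pi_{\mathcal{D}_{\rm off}}\bigr)\;+\;\gamma\bigl(P^{\pi}-P^{\pi_{\mathcal{D}_{\rm off}}}\bigr)\rho^{\pi_{\mathcal{D}_{\rm off}}},
\end{equation*}
and inverting the $\gamma$-contraction yields
\begin{equation*}
\bigl\lVert\rho^{\pi}-\rho^{\pi_{\mathcal{D}_{\rm off}}}\bigr\rVert_1
\;\le\;
2\,\mathbb{E}_{s\sim\mu_0}\!\bigl[D_{\mathrm{TV}}(\pi\,\Vert\,\pi_{\mathcal{D}_{\rm off}})\bigr]
\;+\;
\tfrac{2\gamma}{1-\gamma}\,\mathbb{E}_{(s,a)\sim\rho^{\pi_{\mathcal{D}_{\rm off}}},\,s'\sim p_{\rm off}}\!\bigl[D_{\mathrm{TV}}(\pi(\cdot\mid s')\,\Vert\,\pi_{\mathcal{D}_{\rm off}}(\cdot\mid s'))\bigr].
\end{equation*}
The first term lives on the initial distribution $\mu_0$, which is exactly the component that the downstream distribution $s'\sim p_{\rm off}(\cdot\mid s,a)$, $(s,a)\sim\rho^{\pi_{\mathcal{D}_{\rm off}}}$ subtracts off (since $d^{\pi_{\mathcal{D}_{\rm off}}}=(1-\gamma)\mu_0+\gamma\,(\text{downstream})$). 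It therefore cannot be ``folded into the stated constants'': if $\pi$ and $\pi_{\mathcal{D}_{\rm off}}$ agree everywhere except on initial states that are never revisited, the downstream expectation in the lemma is zero while your boundary term is strictly positive, so no multiplicative constant absorbs it. As written, your route proves a weaker statement with an extra additive $\mu_0$-term, not the lemma. To repair it you would either have to state the policy term under $d^{\pi_{\mathcal{D}_{\rm off}}}$ rather than the one-step-downstream law, or switch to the paper's decomposition through $\eta_{\mathcal{M}_{\rm off}}(\pi_{\mathcal{D}_{\rm off}})$, where the downstream placement comes out of the exact identities for free. (Your Step~1 and the constant bookkeeping via $\gamma\le 1$ and $1-\gamma\le 1$ are fine.)
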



This bound highlights two key sources of return gap between domains: (1) the mismatch between the learned policy and the behavior policy in the offline dataset, and (2) the shift in environment dynamics. The former can be mitigated through behavior cloning~\cite{lyu2024cross, xu2023cross}, while the latter can be addressed by filtering out source transitions with large dynamics gaps~\cite{wen2024contrastive, lyu2025cross}. A central challenge lies in accurately estimating this gap. Existing methods often rely on KL divergence or mutual information~\cite{niu2022trust, lyu2024cross}, which can be ill-defined when the two dynamics have different supports.


\subsection{Flow Matching}

In this paper, we will adopt flow matching~\cite{lipman2023flow, liu2022flow, albergo2023building} to model the transition dynamics, owing to its ability to capture complex distributions. Flow Matching (FM) offers a simpler alternative to denoising diffusion models~\cite{ho2020denoising, song2020score}, which are typically formulated using stochastic differential equations (SDEs). In contrast, FM is based on deterministic ordinary differential equations (ODEs), providing advantages such faster inference, and often improved sample quality.

The goal of FM is to learn a time-dependent velocity field \( v_\theta(x, t): \mathbb{R}^d \times [0,1] \rightarrow \mathbb{R}^d \), parameterized by \(\theta\), which defines a flow map \(\psi_\theta(x_0, t)\). This map is the solution to the ODE
\[
\frac{d}{dt} \psi_\theta(x_0, t) = v_\theta(\psi_\theta(x_0, t), t), \quad \psi_\theta(x_0, 0) = x_0,
\]
and transports samples from a simple source distribution \(p_0(x)\) (e.g., an isotropic Gaussian) at time \(t = 0\) to a target distribution \(p_1(x)\) at time \(t = 1\). In practice, generating a sample from the target distribution involves drawing \(x_0 \sim p_0(x)\) and integrating the learned ODE to obtain \(x_1 = \psi_\theta(x_0, 1)\). 


A commonly used training objective in flow matching is the \textit{linear path matching loss}
\begin{align}\label{eq:flow_matching_objective}
\mathcal{L}(\theta)
=
\mathbb{E}_{t \sim \mathcal{U}[0,1],\, x_0\sim p_0(x_0), x_1\sim p_1(x_1)}
\left[
  \left\| v_{\theta}(x_t, t) - (x_1 - x_0) \right\|_2^2
\right],
\end{align}
where \( x_t = (1 - t)x_0 + t x_1 \) denotes the linear interpolation between \(x_0\) and \(x_1\). Using linear interpolation paths encourages the learned flow to follow nearly straight-line trajectories, which reduces discretization error and improves the computational efficiency of ODE solvers during sampling~\citep{liu2022flow}.




\subsection{Optimal Transport Flow Matching and Wasserstein Distance}

Optimal Transport Flow Matching (OT-FM) establishes a direct connection between flow-based modeling and Optimal Transport (OT), providing a principled framework for quantifying the discrepancy between two distributions. This connection is especially valuable in our setting, where a key challenge is to measure the distance between two conditional transition distributions.




Let \(c: \mathbb{R}^d \times \mathbb{R}^d \rightarrow \mathbb{R}\) be a cost function. Optimal transport aims to find a coupling \(q^\star \in \Pi(p_0, p_1)\)—a joint distribution with marginals \(p_0\) and \(p_1\)—that minimizes the expected transport cost:
\[
\inf_{q \in \Pi(p_0, p_1)} \int c(x_0, x_1) \, \mathrm{d}q(x_0, x_1).
\]
This minimum defines the Wasserstein distance \(W_c(p_0, p_1)\) between the two distributions under the cost function \(c\). When \(c(x_0, x_1) = \|x_0 - x_1\|^2\), the resulting distance is known as the squared 2-Wasserstein distance.

In the training objective of Eq.~\ref{eq:flow_matching_objective}, when sample pairs \((x_0, x_1)\) are drawn from the optimal coupling \(q^\star\), the flow model trained with the linear path matching loss learns a vector field that approximates the optimal transport plan. 
The transport cost of the learned flow approximates the Wasserstein distance
\[
\mathbb{E}_{x_0 \sim p_0} \left[ \left\| \psi_\theta(x_0, 1) - x_0 \right\|_2^2 \right] \approx W^2_2(p_0,p_1).
\]
For theoretical justification, see Theorem 4.2 in~\cite{pooladian2023multisample}.

In practice, the optimal coupling \(\pi^\star\) is approximated using mini-batches by solving a discrete OT problem between empirical samples from \(p_0\) and \(p_1\). The use of mini-batch OT has also been shown to implicitly regularize the transport plan~\cite{fatras2019learning, fatras2021minibatch}, as the stochasticity from independently sampled batches induces behavior similar to entropic regularization~\cite{cuturi2013sinkhorn}. Further details of the OT-FM training procedure are provided in Appendix~\ref{sec:appendix:ot-fm}.

\begin{figure}[t]
    \centering
    \includegraphics[width=0.95\linewidth]{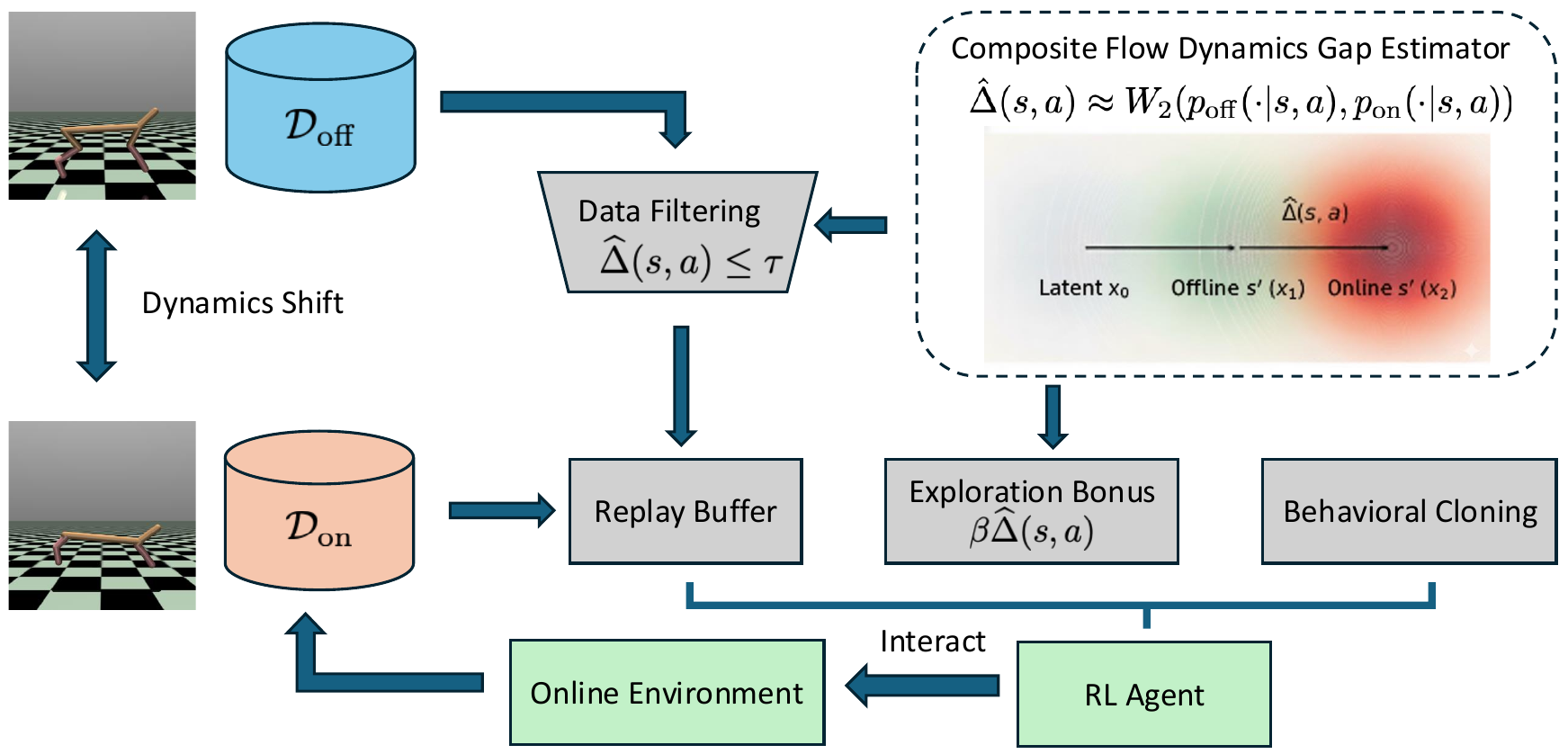}
    \caption{Overall Framework of \ours.
  To estimate the dynamics gap, we propose composite flow matching, which computes the Wasserstein distance between offline and online transition dynamics. Guided by the estimated dynamics gap, we augment policy training with offline transitions that exhibit low discrepancy from the online dynamics, and incorporate a behavior cloning objective to stabilize learning. To enhance data diversity and facilitate adaptation, we further encourage exploration in regions with high dynamics gap.
}
    \label{fig:framework}
\end{figure}

\section{Proposed Method}
In this section, we introduce our method, \ours. We begin by presenting a composite flow matching approach to estimate the dynamics gap via  Wasserstein distance. Next, we propose a data collection strategy that actively explores high dynamics-gap regions and provide a theoretical analysis of its benefits. Finally, we describe the practical implementation details of our method.

\subsection{Estimating Dynamics Gap via Composite Flow}
\label{subsec:composite_flow}



To estimate the gap between dynamics, we first learn the transition models for both the offline dataset and the online environment. Flow matching provides a flexible framework for modeling complex transition dynamics; however, the limited number of samples available in the online environment presents a significant challenge. Training separate flow models for each environment can result in poor generalization in the online environment, as the model may overfit to the small amount of available data.

To mitigate this, we propose a \textbf{composite flow} formulation. Instead of learning the online transition model \(p_{\text{on}}(s' | s, a)\) from scratch, we leverage structural knowledge from a well-trained offline model \(p_{\text{off}}(s' | s, a)\). This enables to incorporate prior knowledge to improve the generalization.

\noindent\textbf{Offline flow.}
We begin by learning a conditional flow model for the offline data. Let \( x_0 \sim \mathcal{N}(0, \mathbf{I}) \) be the initial latent variable. The offline flow map \( \psi^{\text{off}}_\theta(x_0, t | s, a) \) is defined as the solution to the following ODE:
\[
\frac{\mathrm{d}}{\mathrm{d}t} \psi^{\text{off}}_\theta(x_0, t | s, a) = v^{\text{off}}_\theta(\psi^{\text{off}}_\theta(x_0, t | s, a), t, s, a), \quad \psi^{\text{off}}_\theta(x_0, 0 | s, a) = x_0.
\]
Solving this ODE from \( t = 0 \) to \(1\) produces an intermediate representation:
$x_1 = \psi^{\text{off}}_\theta(x_0, 1 | s, a).$


\noindent\textbf{Online flow.}
Instead of learning a online flow directly from a Gaussian prior, we initialize it from the intermediate representation \( x_1 \) produced by the offline flow. The target flow map \( \psi^{\text{on}}_\phi(x, t | s, a) \) is defined as:
\[
\frac{\mathrm{d}}{\mathrm{d}t} \psi^{\text{on}}_\phi(x, t | s, a) = v^{\text{on}}_\phi(\psi^{\text{on}}_\phi(x, t | s, a), t, s, a), \quad \psi^{\text{on}}_\phi(x, 1 | s, a) = x_1.
\]
Solving this ODE from \( t=1 \) to $2$ yields the final prediction for the online environment:
$s' \doteq x_2= \psi^{\text{on}}_\phi(x_1, 2 | s, a).$ 

We now show that when the offline flow induces a distribution $\hat p_{\mathrm{off}}(s'|s,a)$ that is closer to $p_{\mathrm{on}}(s'|s,a)$ than the standard Gaussian distribution, the proposed composite flow method enjoys a smaller generalization error bound.



\begin{restatable}[Conditions for Composite Flow Yielding Smaller Errors]{theorem}{compound}\label{theorem:compound}
Assume that composite flow and direct flow share the same  hypothesis class $\mathcal{H}$ of (measurable) vector fields $v:\mathbb{R}^d \times [0,1]\to \mathbb{R}^d$, and that there exists $B\in(0,\infty)$ such that
$\sup_{v\in\mathcal{H}} \sup_{x\in\mathbb{R}^d,\,t\in[0,1]} \norm{v(x,t)}_2 \le B$. Also, assume that $\max \{\tr(\Sigma_{\rm on}), \tr(\widehat{\Sigma}_{\rm off})\} \leq C_{\rm TR}$ for some $C_{\rm TR}$. The composite flow enjoys a strictly tighter high-probability generalization bound than the direct flow if and only if 
\begin{equation}
W_2(p_G, p_{\rm on}) > W_2(\hat{p}_{\rm off}, p_{\rm on})
\end{equation}
Here, $p_G := \mathcal{N}(0, I_d)$, $\Sigma_{\rm on}$ is the covariance of $p_{\rm on}$, and  $\widehat\Sigma_{\rm off}$ is the  covariance of $\hat p_{\mathrm{off}}$.

\end{restatable}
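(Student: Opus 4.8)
The plan is to derive a generalization bound for flow matching that decomposes into (i) a statistical estimation term depending only on the shared hypothesis class $\mathcal{H}$ and the sample size, and (ii) an approximation term governed by how far the \emph{source} distribution of the flow is from the target online distribution $p_{\rm on}$, measured in $W_2$. Since composite flow and direct flow differ only in their source distribution ($\hat p_{\rm off}$ versus $p_G$), the statistical term will be identical for both (same $\mathcal{H}$, same $B$, same online sample budget), and the comparison between the two bounds reduces entirely to comparing the approximation terms. I expect the final ``if and only if'' to fall out of showing the approximation term is \emph{monotone} in $W_2(p_{\rm source}, p_{\rm on})$ and that this term strictly dominates the difference.

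First I would set up the estimation term: standard flow-matching generalization results (e.g.\ via Rademacher complexity of $\mathcal{H}$ under the uniform bound $\|v\|_2 \le B$ over $x,t$) give, with probability $1-\delta$, a uniform deviation between empirical and population flow-matching risk of order $\Rad_n(\mathcal{H}) + B^2\sqrt{\log(1/\delta)/n}$, where $n$ is the number of online samples. This is the same for both flows. Next I would handle the approximation term: the minimal achievable population risk over $\mathcal{H}$, when learning a flow from source $\mu$ to target $p_{\rm on}$ on the linear-interpolation path, is controlled by the transport cost of the best coupling, i.e.\ by $W_2^2(\mu, p_{\rm on})$ — this is exactly the OT-FM connection invoked in the excerpt (Theorem 4.2 of \cite{pooladian2023multisample}), combined with the Benamou--Brenier/displacement-interpolation identity that the straight-line path between optimally coupled endpoints has action equal to $W_2^2$. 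The trace bound $\max\{\tr(\Sigma_{\rm on}), \tr(\widehat\Sigma_{\rm off})\} \le C_{\rm TR}$ enters here to keep second moments finite so that $W_2$ is well-defined and the velocity field along the path stays controlled; I would use it to bound the moments of $x_t$ uniformly along the interpolation.

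Assembling the two pieces, the generalization bound for a flow with source $\mu$ takes the form $W_2^2(\mu, p_{\rm on}) + (\text{estimation term})$, where the estimation term is identical across the two constructions. Hence the composite bound is strictly tighter if and only if $W_2^2(\hat p_{\rm off}, p_{\rm on}) < W_2^2(p_G, p_{\rm on})$, which (since $W_2 \ge 0$) is equivalent to $W_2(\hat p_{\rm off}, p_{\rm on}) < W_2(p_G, p_{\rm on})$, i.e.\ $W_2(p_G, p_{\rm on}) > W_2(\hat p_{\rm off}, p_{\rm on})$ as claimed. One subtlety I would be careful about is that the composite flow's online stage is trained on samples from $\hat p_{\rm off}$ rather than exact $p_{\rm off}$-induced outputs, but since $\hat p_{\rm off}$ is \emph{defined} as the pushforward of the (already-trained, frozen) offline flow, we may treat it as the genuine source distribution for the online stage, so no extra error is incurred at this level of analysis.

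The main obstacle will be making the approximation term a \emph{clean, two-sided} function of $W_2$ — i.e.\ establishing not just an upper bound $\lesssim W_2^2$ but a tight characterization, so that the ``only if'' direction holds. The upper bound (optimal coupling $\Rightarrow$ straight-line path realizes transport cost $W_2^2$, and this path is expressible by a velocity field) is the easy direction; for the converse one needs that \emph{no} flow in $\mathcal{H}$ can transport $\mu$ to $p_{\rm on}$ with action below $W_2^2(\mu,p_{\rm on})$, which is the Benamou--Brenier lower bound and holds for \emph{all} measurable velocity fields, hence in particular over $\mathcal{H}$ — so the minimal population flow-matching risk equals $W_2^2(\mu,p_{\rm on})$ up to the (shared) expressivity gap of $\mathcal{H}$, and that shared gap cancels in the comparison. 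Ensuring this cancellation is legitimate (rather than leaving an uncontrolled $\mathcal{H}$-dependent slack that could differ between the two source distributions) is where I would spend the most care, likely by stating the bound in the idealized form where $\mathcal{H}$ is rich enough to represent the optimal-transport velocity field for both sources, which is the natural regime in which the stated ``iff'' is exactly true.
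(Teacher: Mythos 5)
Your overall logical skeleton --- both flows share the same estimation machinery, so comparing the two bounds reduces to comparing a monotone function of $W_2(\cdot,p_{\rm on})$, whence the ``iff'' --- matches the paper's. But the mechanism by which $W_2$ enters your bound is not the paper's, and it contains a genuine gap. You place $W_2^2(\mu,p_{\rm on})$ in the bound as an \emph{approximation} term, identified with the minimal achievable population flow-matching risk via Benamou--Brenier and the OT-FM connection. That identification fails: the FM objective is a regression loss on the velocity field, $R(v)=\mathbb{E}\,\|v(X_t,t)-(X_1-X_0)\|_2^2$, whose infimum over all measurable $v$ is the conditional variance of the displacement $X_1-X_0$ given $(X_t,t)$; under the independent coupling used to generate training pairs this has no clean relation to $W_2^2$, and under an exact Monge coupling it can be near zero rather than $W_2^2$. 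The Benamou--Brenier lower bound controls the kinetic action $\int_0^1\mathbb{E}\,\|v(X_t,t)\|_2^2\,dt$ of a field that actually transports $\mu$ to $p_{\rm on}$, not the FM regression risk of an arbitrary $\hat v\in\mathcal{H}$; the quantity that approximates $W_2^2$ is the expected squared displacement of the learned map (the content of Proposition~\ref{prop:latent_optimal_new}), not the training risk. So your ``approximation term'' is not a valid component of a generalization bound for this loss, and the two-sided characterization you correctly flag as the main obstacle cannot be obtained along this route.

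The paper's argument is different and avoids the issue entirely. It treats FM as plain regression, writes the excess risk as $\mathbb{E}\,\|\hat v-v^\star\|_2^2$ via the standard squared-loss identity, and applies a Rademacher bound in which the Lipschitz constant of the loss on the $B$-ball is $2(B+\|Y\|)$; the bound therefore scales as $c\,(B+\sqrt{\mathbb{E}\,\|Y\|_2^2})\,\Gamma_{n,\delta}$, and $\mathbb{E}\,\|Y\|_2^2=\mathbb{E}\,\|X_1-X_0\|_2^2=D_2(p_0,p_{\rm on})$ is the only place the source distribution enters. Lemma~\ref{lemma:D_2_W_2} --- which is where the trace assumption $C_{\rm TR}$ is actually used, not merely to keep moments finite --- converts $\sqrt{D_2}$ into $\sqrt{\tr(\Sigma_{\rm on})+C_{\rm TR}}+W_2(p_0,p_{\rm on})$, yielding a master bound of the form $\bigl(C_0+C_1\,W_2(p_0,p_{\rm on})\bigr)\Gamma_{n,\delta}$ with $C_0$, $C_1$, $\Gamma_{n,\delta}$ independent of $p_0$; the ``iff'' is then a direct comparison of two such expressions. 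Note also that the paper's bound is multiplicative in $\Gamma_{n,\delta}$ and vanishes as $n\to\infty$, whereas your additive $W_2^2$ term would leave a persistent floor --- a symptom of bounding a different (and here, the wrong) quantity. To repair your proposal you would need to replace the Benamou--Brenier step with this label-magnitude argument.
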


\begin{remark}
      In our setting, we assume that the offline data and the online environment share meaningful similarities. Given the abundance of offline data available to accurately learn the offline flow, this assumption is expected to hold well.  
\end{remark}

To compute the Wasserstein distance between the offline and online transition dynamics, we can use the OT-FM objective to train the online flow, initialized from the offline flow distribution. However, when the state-action pair $(s, a)$ lies in a continuous space, it is not feasible to obtain a batch of samples corresponding to a fixed $(s, a)$ from the online environment.

To address this issue, we follow prior works~\cite{kerrigan2024dynamic, generale2024conditional}
and incorporate the conditioning variables directly into the transport cost.
Specifically, consider pairs of transition samples
$(s_{\rm off}, a_{\rm off}, s'_{\rm off})$ and $(s_{\rm on}, a_{\rm on}, s'_{\rm on})$.
We define the transport cost as
\[
c\big(
(s_{\rm off}, a_{\rm off}, s'_{\rm off}),
(s_{\rm on}, a_{\rm on}, s'_{\rm on})
\big)
=
\|s'_{\rm off} - s'_{\rm on}\|_2^2
+ \eta \Big(
\|s_{\rm off} - s_{\rm on}\|_2^2
+ \|a_{\rm off} - a_{\rm on}\|_2^2
\Big),
\]
where $\eta>0$ controls the strength of alignment between conditioning variables.

Let $q^*$ denote the optimal coupling between the empirical distributions of
$(s_{\rm off}, a_{\rm off}, s'_{\rm off})$ and $(s_{\rm on}, a_{\rm on}, s'_{\rm on})$
under the above cost.
The online flow is trained using the objective
\begin{align}
    \mathcal{L}_{\rm on}(\phi) =
\mathbb{E}_{((s_{\rm off}, a_{\rm off}, s'_{\rm off}\doteq x_1), (s_{\rm on}, a_{\rm on}, s'_{\rm on}\doteq x_2)) \sim q^*, t \sim \mathcal{U}[1, 2]}
\left[
\left\|
v_\phi\left(x_t, t, s_{\rm on}, a_{\rm on} \right) - (s'_{\rm on} - s'_{\rm off})
\right\|^2
\right],
\label{eq:target_flow_training}
\end{align}
where $x_t$ is the linear interpolation between $x_1$ and $x_2$.

\begin{proposition}[Informal; shared-conditioning coupling is $W_2$-optimal]
\label{prop:latent_optimal_new}
Assume that, when training the online flow in Eq.~\eqref{eq:target_flow_training}, the marginals over the conditioning variables are matched,
i.e., $p_{\rm off}(s,a)=p_{\rm on}(s,a)$.
Then, for any fixed $(s,a)\in\mathcal{S}\times\mathcal{A}$, as $\eta\to\infty$ and the minibatch size used to compute the OT coupling
tends to infinity, the expected squared displacement induced by the composite flow recovers the squared
2-Wasserstein distance between the offline and online transition distributions:
\[
\mathbb{E}_{x_0 \sim \mathcal{N}(0,\mathbf{I})}
\left[
\left\|
\psi^{\rm off}_\theta(x_0,1|s,a)
-
\psi^{\rm on}_\phi\!\left(
\psi^{\rm off}_\theta(x_0,1|s,a),
2 | s,a
\right)
\right\|_2^2
\right]
\;\longrightarrow\;
W_2^2\!\left(
p_{\rm off}(\cdot|s,a),
p_{\rm on}(\cdot|s,a)
\right).
\]
\end{proposition}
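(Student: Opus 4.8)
The plan is to reduce the claim to three facts, chained in order: (i) the penalized transport cost, as $\eta\to\infty$, forces the optimal coupling to match the conditioning variables exactly; (ii) under the marginal assumption $p_{\rm off}(s,a)=p_{\rm on}(s,a)$ this matched coupling disintegrates over $(s,a)$ into per-condition couplings, each of which is forced to be $W_2$-optimal between $p_{\rm off}(\cdot\mid s,a)$ and $p_{\rm on}(\cdot\mid s,a)$; and (iii) the OT-FM consistency result turns ``training the linear-path loss on optimally coupled endpoints'' into ``the learned flow map realizes that optimal transport,'' so that its expected squared displacement equals the conditional squared $2$-Wasserstein distance.

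First I would set up the population objects. Write $\mu_{\rm off}=\rho\otimes p_{\rm off}(\cdot\mid s,a)$ and $\mu_{\rm on}=\rho\otimes p_{\rm on}(\cdot\mid s,a)$ for the joint laws of $(s,a,s')$, where $\rho:=p_{\rm off}(s,a)=p_{\rm on}(s,a)$ is the common conditioning marginal (this is exactly the stated assumption). For finite $\eta$, let $q^*_\eta$ be an optimal plan between $\mu_{\rm off}$ and $\mu_{\rm on}$ for the cost $c_\eta\big((s_{\rm off},a_{\rm off},s'_{\rm off}),(s_{\rm on},a_{\rm on},s'_{\rm on})\big)=\|s'_{\rm off}-s'_{\rm on}\|_2^2+\eta\big(\|s_{\rm off}-s_{\rm on}\|_2^2+\|a_{\rm off}-a_{\rm on}\|_2^2\big)$. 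Let $\Pi_{\rm match}$ be the set of plans putting all mass on $\{(s_{\rm off},a_{\rm off})=(s_{\rm on},a_{\rm on})\}$; this set is nonempty precisely because the conditioning marginals coincide, and for plans in $\Pi_{\rm match}$ the $\eta$-term vanishes, so their $c_\eta$-cost is finite and independent of $\eta$. Hence $\inf c_\eta$ is bounded uniformly in $\eta$, which forces $\eta\,\mathbb{E}_{q^*_\eta}\big[\|s_{\rm off}-s_{\rm on}\|_2^2+\|a_{\rm off}-a_{\rm on}\|_2^2\big]$ to stay bounded, i.e.\ the conditioning-mismatch under $q^*_\eta$ vanishes as $\eta\to\infty$. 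A standard tightness and $\Gamma$-convergence argument (tightness of $\{q^*_\eta\}$ from fixed marginals, lower semicontinuity of $(s',s')\mapsto\|s'_{\rm off}-s'_{\rm on}\|_2^2$, and the uniform upper bound from $\Pi_{\rm match}$) then gives that every weak limit point $q^*_\infty$ lies in $\Pi_{\rm match}$ and minimizes the reduced cost $\mathbb{E}_q\|s'_{\rm off}-s'_{\rm on}\|_2^2$ over $\Pi_{\rm match}$. Disintegrating $q^*_\infty=\int\rho(\mathrm ds,\mathrm da)\,\gamma_{s,a}$ with $\gamma_{s,a}\in\Pi\big(p_{\rm off}(\cdot\mid s,a),p_{\rm on}(\cdot\mid s,a)\big)$ and noting that the reduced cost decomposes as $\int\rho(\mathrm ds,\mathrm da)\,\mathbb{E}_{\gamma_{s,a}}\|s'_{\rm off}-s'_{\rm on}\|_2^2$, each inner term is minimized independently, so $\gamma_{s,a}$ is a $W_2$-optimal coupling for $\rho$-a.e.\ $(s,a)$; since the offline-flow pushforward of a Gaussian is absolutely continuous, Brenier's theorem makes $\gamma_{s,a}$ a deterministic map $T^\star_{s,a}$ with $\mathbb{E}_{x_1\sim p_{\rm off}(\cdot\mid s,a)}\|x_1-T^\star_{s,a}(x_1)\|_2^2=W_2^2\big(p_{\rm off}(\cdot\mid s,a),p_{\rm on}(\cdot\mid s,a)\big)$. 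The minibatch-to-population passage is handled by the standard consistency of empirical (mini-batch) OT: as the batch size grows, the empirical optimal plan converges weakly to $q^*_\eta$ and its cost converges, so the two limits may be taken in sequence. Finally, applying the OT-FM consistency result (Theorem~4.2 of \cite{pooladian2023multisample}) on the interval $[1,2]$ with endpoints $(x_1,x_2)\sim\gamma_{s,a}$ and regression target $s'_{\rm on}-s'_{\rm off}$, the minimizer of $\mathcal{L}_{\rm on}(\phi)$ in \eqref{eq:target_flow_training} produces a flow with $\psi^{\rm on}_\phi(x_1,2\mid s,a)=T^\star_{s,a}(x_1)$; using that the offline flow is trained well enough that $\psi^{\rm off}_\theta(\cdot,1\mid s,a)_\#\mathcal{N}(0,I)=p_{\rm off}(\cdot\mid s,a)$ (the idealization of the remark), we conclude $\mathbb{E}_{x_0\sim\mathcal{N}(0,I)}\big\|\psi^{\rm off}_\theta(x_0,1\mid s,a)-\psi^{\rm on}_\phi\big(\psi^{\rm off}_\theta(x_0,1\mid s,a),2\mid s,a\big)\big\|_2^2\to W_2^2\big(p_{\rm off}(\cdot\mid s,a),p_{\rm on}(\cdot\mid s,a)\big)$.

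The main obstacle is the first reduction: making the $\eta\to\infty$ ``conditioning collapse'' fully rigorous, including tightness of the family $\{q^*_\eta\}$, the $\Gamma$-limit identification of the reduced problem on $\Pi_{\rm match}$, and the measurable disintegration so that per-condition optimality is well posed $\rho$-a.e. A secondary subtlety is sequencing the $\eta\to\infty$ and batch-size$\to\infty$ limits and checking that empirical-OT consistency (together with the implicit entropic-regularization effect of mini-batching noted in the background) does not perturb the limiting coupling. Since the statement is informal, I would state the needed regularity explicitly—compact or sub-Gaussian supports and absolutely continuous conditionals so that Brenier maps exist—and defer the measure-theoretic bookkeeping (lower semicontinuity, disintegration, interchange of limits) to an appendix.
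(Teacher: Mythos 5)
Your proposal is correct and follows essentially the same route as the paper's proof: a diagonal coupling gives the $\eta$-uniform upper bound, the penalty forces the conditioning mismatch to vanish so weak limit points concentrate on the matched set and disintegrate into conditionally $W_2$-optimal couplings, and Theorem~4.2 of \cite{pooladian2023multisample} (with a Brenier/Monge map under regularity of the conditionals) converts the trained flow's displacement into the conditional $W_2^2$. The only cosmetic difference is that you phrase the $\eta\to\infty$ reduction as $\Gamma$-convergence while the paper argues the matching upper and lower bounds directly, and the paper is slightly more explicit about passing through the barycentric projection at finite $\eta$ before identifying it with the Monge map in the limit.
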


\textbf{Monte Carlo Estimator.}
This result yields a practical Monte Carlo estimator of the dynamics gap
$\Delta(s,a)$:
\begin{equation}
\widehat{\Delta}(s,a)
=
\left(
\frac{1}{M}
\sum_{j=1}^{M}
\left\|
\psi^{\rm off}_\theta(x_0^{(j)},1|s,a)
-
\psi^{\rm on}_\phi(
\psi^{\rm off}_\theta(x_0^{(j)},1|s,a),
2|s,a
)
\right\|_2^2
\right)^{1/2},
\quad
x_0^{(j)} \sim \mathcal{N}(0,\mathbf{I}).
\label{eq:gap_mc}
\end{equation}

\begin{remark}[Enforcing the shared marginal condition]
\label{rem:shared_marginal}
The marginal condition $p_{\rm off}(s,a)=p_{\rm on}(s,a)$ can be enforced by construction.
In practice, we construct the empirical distribution of offline triples
$(s_{\rm off},a_{\rm off},s'_{\rm off})$ by first sampling $(s_{\rm off},a_{\rm off})$
from the \textbf{online} dataset $\mathcal{D}_{\rm on}$, and then generating
$s'_{\rm off}\sim p_{\rm off}(\cdot| s_{\rm off},a_{\rm off})$ using the pretrained offline flow.
Separately, we sample online transitions $(s_{\rm on},a_{\rm on},s'_{\rm on})$ from
$\mathcal{D}_{\rm on}$.
As a result, the two empirical measures used in the OT coupling have matched
$(s,a)$-marginals in expectation, while their conditional transitions differ. The complete training algorithms of the offline flow and the online flow are in Appendix~\ref{sec:appendix:ot-fm}.
\end{remark}

\subsection{Data Collection at High Dynamics Gap Region}

As discussed in Section~\ref{sec:problem-definition}, using behavior cloning and augmenting the replay buffer with low dynamics gap offline data can alleviate performance drop from distribution shift. However, relying solely on such data may limit state-action coverage and hinder policy learning. To address this, we propose a new data collection strategy.

At each training iteration, we construct the replay buffer by selectively incorporating offline transitions with small estimated dynamics gap:
\begin{equation}
\mathcal{B} =
\left\{(s,a) \in \mathcal{D}_{\text{off}} : \widehat{\Delta}(s,a) \leq \tau \right\}
\cup
\mathcal{D}_{\text{on}},
\end{equation}
where \( \tau \) is a predefined threshold. To improve data diversity and encourage better generalization, we actively explore regions with high dynamics gap—areas likely underrepresented in the buffer due to the dominance of low-gap samples. We adopt an optimistic exploration policy that selects actions by
\begin{equation}
a = \arg\max_{a \in \mathcal{A}} \left[ Q(s,a) + \beta\,\widehat{\Delta}(s,a) \right],
\end{equation}
where \( \beta \) is a hyperparameter that trades off return and exploration of underexplored dynamics. To ensure sufficient coverage during training, we can further incorporate stochasticity by adding small perturbations to the selected actions~\citep{fujimoto2018addressing} or following a stochastic policy~\citep{haarnoja2018soft} consistent with widely used deep RL frameworks.

\begin{restatable}[Large Dynamics Gap Exploration Reduces Performance Gap]{theorem}{gap}\label{thm:pgap}

Compared to behavior cloning policy $\pi_{\rm bc}$ on the offline dataset, training a policy \(\hat{\pi}\) by replacing all offline samples with a dynamics gap exceeding \(\kappa\) (as estimated by the composite flow) with online environment samples can reduce the performance gap to the optimal online policy
 \(\pi^*_{\text{on}}\) with high probability by
\begin{align}
 \frac{2 L_r (1 + \gamma)}{(1 - \gamma)(1 - \gamma L_p)} \bigl(\Delta_{W_2}-\kappa-\sqrt{\left(C_0 + C_1\,W_2(\hat{p}_{\rm off},p_{\rm on})\right) \Gamma_{N_{\rm on},\delta}}
 \bigr).
\end{align}
%
Here $L_r$ and $L_P$ are the Lipschitz constants for the reward and transition function, respectively. $\gamma L_p<1$. $\Delta_{W_2}\;:=\;\sup_{s,a}W_2\!\bigl(p_{\rm off}(\cdot| s,a),\;p_{\rm on}(\cdot| s,a)\bigr) $ is the largest dynamics gap. $C_0$ and $C_1$ are two constants. $\Gamma_{N_{\rm on},\delta} := \mathfrak{R}_{N_{\rm on}}(\mathcal{H}) \;+\; \sqrt{\tfrac{\log(1/\delta)}{N_{\rm on}}}$, where $\mathcal{H}$ is the same as in Theorem~\ref{theorem:compound} and $N_{\mathrm{on}}$ is the number of samples used to train the online flow.
\end{restatable}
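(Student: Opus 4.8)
The plan is to combine three ingredients: (i) a simulation-lemma-style bound expressing the return gap $\eta_{\mathcal{M}_{\rm on}}(\pi^*_{\rm on}) - \eta_{\mathcal{M}_{\rm on}}(\pi)$ in terms of how well $\pi$'s training distribution matches $p_{\rm on}$ along the relevant occupancy measure; (ii) the observation that the behavior-cloning baseline $\pi_{\rm bc}$ is effectively trained on transitions drawn from $p_{\rm off}$, so its return gap is controlled by the dynamics mismatch $W_2(p_{\rm off}(\cdot\mid s,a), p_{\rm on}(\cdot\mid s,a))$, which can be as large as $\Delta_{W_2}$; and (iii) for the modified policy $\hat\pi$, every state-action pair whose \emph{estimated} gap $\widehat\Delta(s,a)$ exceeds $\kappa$ has its offline transition replaced by a genuine online sample, so the residual mismatch contributing to $\hat\pi$'s return gap is bounded by $\kappa$ plus the estimation error $|\widehat\Delta(s,a) - W_2(p_{\rm off}(\cdot\mid s,a),p_{\rm on}(\cdot\mid s,a))|$. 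Subtracting the two return gaps gives the claimed improvement.

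Concretely, I would first establish a Lipschitz simulation lemma: under the assumptions that $r$ is $L_r$-Lipschitz, $p$ is $L_P$-Lipschitz (in $W_2$, with respect to $(s,a)$), and $\gamma L_p < 1$, the difference in value between rolling out a policy under the true dynamics versus under a surrogate kernel $\tilde p$ is bounded by $\frac{L_r(1+\gamma)}{(1-\gamma)(1-\gamma L_p)}\,\mathbb{E}[W_2(\tilde p(\cdot\mid s,a), p_{\rm on}(\cdot\mid s,a))]$, where the expectation is over the appropriate occupancy measure. This is the standard telescoping argument over the Bellman operator, using $W_2$-contraction of the value function under the Lipschitz transition assumption; the factor $\frac{1}{1-\gamma L_p}$ comes from summing the geometric series induced by the kernel perturbation, and the $(1+\gamma)$ from handling both the reward and the bootstrap term. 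For $\pi_{\rm bc}$ the surrogate is $p_{\rm off}$ everywhere, contributing $\Delta_{W_2}$ in the worst case; for $\hat\pi$ the surrogate equals $p_{\rm on}$ on the replaced (high-gap) region and $p_{\rm off}$ elsewhere, so its mismatch is at most the retained threshold, i.e.\ points with $\widehat\Delta(s,a)\le\kappa$.

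Next I would convert the estimation-error term into the stated form. By Proposition~\ref{prop:latent_optimal_new}, $\widehat\Delta(s,a)$ is a (Monte Carlo, finite-minibatch, finite-$\eta$) estimate of $W_2(p_{\rm off}(\cdot\mid s,a),p_{\rm on}(\cdot\mid s,a))$; its error decomposes into a statistical part from learning the online flow with $N_{\rm on}$ samples and a uniform-convergence part over the hypothesis class $\mathcal{H}$. A standard Rademacher-complexity bound (using the uniform velocity bound $B$ from Theorem~\ref{theorem:compound} to control the loss range, and the trace bound $C_{\rm TR}$ to control the displacement magnitude) yields, with probability $1-\delta$, a bound of the form $\sqrt{(C_0 + C_1 W_2(\hat p_{\rm off}, p_{\rm on}))\,\Gamma_{N_{\rm on},\delta}}$ on $\sup_{s,a}|\widehat\Delta(s,a) - W_2(\cdots)|$, where $\Gamma_{N_{\rm on},\delta} = \mathfrak{R}_{N_{\rm on}}(\mathcal{H}) + \sqrt{\log(1/\delta)/N_{\rm on}}$ and the $W_2(\hat p_{\rm off},p_{\rm on})$ dependence enters because the online flow's target displacement scale is governed by the offline-to-online transport cost. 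Therefore a point with true gap $W_2 > \kappa + \sqrt{(C_0+C_1 W_2(\hat p_{\rm off},p_{\rm on}))\Gamma_{N_{\rm on},\delta}}$ is guaranteed to have $\widehat\Delta > \kappa$ and thus be replaced; equivalently, the retained-offline region has true mismatch at most $\kappa + \sqrt{(\cdots)\Gamma_{N_{\rm on},\delta}}$. Plugging this into the simulation lemma for $\hat\pi$ and into $\Delta_{W_2}$ for $\pi_{\rm bc}$, the difference of return gaps is at least $\frac{2L_r(1+\gamma)}{(1-\gamma)(1-\gamma L_p)}\bigl(\Delta_{W_2} - \kappa - \sqrt{(C_0+C_1 W_2(\hat p_{\rm off},p_{\rm on}))\Gamma_{N_{\rm on},\delta}}\bigr)$, as claimed (the factor $2$ absorbing the bound on $\pi_{\rm bc}$'s gap together with the structure of the telescoping).

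The main obstacle I anticipate is making the occupancy-measure bookkeeping rigorous: the two policies $\pi_{\rm bc}$ and $\hat\pi$ induce \emph{different} visitation distributions, and the replacement of high-gap transitions changes which states are actually visited online, so the comparison of their return gaps is not a clean pointwise inequality. I would handle this by bounding both return gaps against a common reference — e.g.\ the occupancy measure of $\pi^*_{\rm on}$ under $p_{\rm on}$, or using a concentrability-type coefficient absorbed into $C_0, C_1$ — and by noting that the theorem only claims a \emph{reduction} (a lower bound on the improvement), so it suffices to upper-bound $\hat\pi$'s gap by $\kappa + (\text{est.\ error})$ times the horizon factor while lower-bounding the worst-case $\pi_{\rm bc}$ gap by $\Delta_{W_2}$ times the same factor. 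A secondary subtlety is that $\widehat\Delta$ is the square root of a Monte Carlo average, so turning the squared-displacement concentration of Proposition~\ref{prop:latent_optimal_new} into a bound on the square root requires care near zero; this is why the error bound appears under a square root and with additive constant $C_0$ rather than as a clean multiplicative term.
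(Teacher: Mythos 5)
Your proposal follows essentially the same route as the paper's proof: a Lipschitz simulation-lemma bound on the model gap carrying the factor $\frac{2L_r(1+\gamma)}{(1-\gamma)(1-\gamma L_p)}$, a high-probability uniform bound of the form $\sqrt{(C_0+C_1\,W_2(\hat p_{\rm off},p_{\rm on}))\,\Gamma_{N_{\rm on},\delta}}$ on the error of the flow-based $W_2$ estimator (obtained, as you suspected, from the excess-risk bound of Theorem~\ref{theorem:compound} plus a Gronwall-type stability step from velocity error to terminal-map error), so that every retained offline sample has true gap at most $\kappa$ plus that error, followed by a subtraction of the resulting upper bounds. The one obstacle you flag --- occupancy-measure bookkeeping --- is resolved in the paper more simply than you anticipate: the model-gap term is bounded in sup-norm over states via $\left\|V^\pi_{\rm on}-V^\pi_{\rm off}\right\|_\infty$ with $\Delta_{W_1}:=\sup_{s,a}W_1(\cdot,\cdot)$, so no visitation distribution appears there at all, and the remaining policy-dependent ``learning'' term $\eta_{\rm off}(\pi^\star_{\rm on})-\eta_{\rm off}(\pi)$ is bounded by a common ERM-generalization envelope for both $\pi_{\rm bc}$ and $\hat\pi$ that cancels when the two bounds are differenced.
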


From Theorem~\ref{thm:pgap}, exploration in regions with high dynamics gap can reduce the performance gap relative to the optimal policy. The parameter \(\kappa\) serves as a threshold: the bound holds when the policy is trained without offline samples whose dynamics gap exceeds \(\kappa\). As \(\beta\) increases, more samples are collected from high-gap regions, which decreases \(\kappa\). This increases performance gap reduction, resulting in a policy with higher expected return.

\subsection{Practical Implementation}

Our method can be instantiated using standard actor-critic algorithms with a critic \( Q_{\varsigma}(s,a) \) and a policy \( \pi_{\varphi}(a | s) \). To incorporate the dynamics gap, we apply rejection sampling to retain a fixed percentage of offline transitions with the lowest estimated gap in each iteration. The critic is trained by minimizing
\begin{align}
\mathcal{L}_Q = \mathbb{E}_{(s,a,s') \sim D_{\text{on}}}[(Q_{\varsigma}(s,a) - y)^2] 
+ \mathbb{E}_{(s,a,s') \sim D_{\text{off}}}[\mathbf{1}(\widehat{\Delta}(s,a) \leq \widehat{\Delta}_{\xi\%})(Q_{\varsigma}(s,a) - y)^2],
\end{align}
where \( \widehat{\Delta}_{\xi\%} \) is the \(\xi\)-quantile of the estimated dynamics gap in the offline minibatch. The target value is $y = r + \gamma Q_{\varsigma}(s', a') + \beta \widehat{\Delta}(s,a),$ with $a' \sim \pi_{\varphi}(\cdot | s').$

The policy is updated by combining policy improvement with a behavior cloning regularizer, using the objective
\begin{align}
\mathcal{L}_{\pi} 
=
\mathbb{E}_{s \sim \mathcal{D}_{\rm off}\cup \mathcal{D}_{\rm on},\, a \sim \pi_{\varphi}(\cdot \mid s)}
\big[ Q_{\varsigma}(s,a) \big]
\;-\;
\omega\,
\mathbb{E}_{(s,a) \sim \mathcal{D}_{\rm off},\, \tilde{a} \sim \pi_{\varphi}(\cdot \mid s)}
\big[\|a-\tilde{a}\|_2^2\big],
\end{align}
where $\omega>0$ controls the trade-off between maximizing the estimated $Q$-value and staying close to the offline behavior.
The behavior cloning term encourages the policy's sampled action $\tilde{a}$ to match the offline action $a$,
as motivated by Lemma~\ref{lemma:return-bound}.

The pseudocode of \ours, instantiated with Soft Actor-Critic (SAC)~\cite{haarnoja2018soft}, is presented in Appendix~\ref{sec:appendix:flow}.

\section{Related Work}

\textbf{Online RL with offline dataset.} Online RL often requires extensive environment interactions \cite{silver2016mastering, ye2020towards}, which can be costly or impractical in real-world settings. To improve sample efficiency, Offline-to-Online RL leverages pre-collected offline data to bootstrap online learning \cite{nair2020awac}. A typical two-phase approach trains an initial policy offline, then fine-tunes it online \cite{nair2020awac, lee2022offline, nakamoto2023cal}. However, conservative strategies used to mitigate distributional shift, such as pessimistic value estimation~\cite{kumar2020conservative}, can result in suboptimal initial policies and limit effective exploration~\cite{mark2022fine, nakamoto2023cal}. To resolve this tension, recent methods propose ensemble-based pessimism \cite{lee2022offline}, value calibration \cite{nakamoto2023cal}, optimistic action selection \cite{lee2022offline}, and policy expansion \cite{zhang2023policy}. Others directly incorporate offline data into the replay buffer of off-policy algorithms \cite{ball2023efficient}, improving stability via ensemble distillation and layer normalization. However, these approaches typically assume that the offline dataset is generated under the same transition dynamics as the online environment. In contrast, our work explicitly accounts for dynamics shift between the offline data and the online environment.

 \textbf{RL with dynamics shift.}  
Our work is related to cross-domain RL, where the source and target domains share the same observation and action spaces but differ in transition dynamics. Prior work has addressed such discrepancies via system identification~\cite{clavera2018learning, du2021auto, chebotar2019closing}, domain randomization~\cite{slaoui2019robust, tobin2017domain, mehta2020active}, imitation learning~\cite{kim2020domain, hejna2020hierarchically}, and meta-RL~\cite{nagabandi2018learning, raileanu2020fast}, often assuming shared environment distributions~\cite{song2020provably} or requiring expert demonstrations.   More recent work has relaxed these assumptions. One line of research focuses on \textit{reward modification}, which adjusts the reward function to penalize source transitions that are unlikely under the target dynamics~\cite{eysenbach2020off, liu2022dara}, or down-weights value estimates in regions with high dynamics gap~\cite{niu2022trust}. Another line of work explores \textit{data filtering}, which selects only source transitions with low estimated dynamics gap~\cite{xu2023cross}, using metrics such as transition probability ratios~\cite{eysenbach2020off}, mutual information~\cite{wen2024contrastive}, value inconsistency~\cite{xu2023cross}, or representation-based KL divergence~\cite{lyu2024cross}. However, these metrics can become unstable or ill-defined when the transition dynamics have different supports, and value-based methods are prone to instability caused by bootstrapping bias.

In contrast, our approach leverages the theoretical connection between optimal transport and flow matching to estimate the dynamics gap in a principled manner. A closely related work by~\cite{lyu2025cross} also applies optimal transport to quantify the dynamics gap. However, their setting assumes both the source and target domains are offline, and their method estimates the gap based on the concatenated tuple $(s, a, s')$ observed in offline data, rather than comparing conditional transition distributions. As a result, their metric does not accurately capture the gap in transition dynamics and cannot be used to compute exploration bonuses, which require gap estimation conditioned on a given state-action pair.

\textbf{RL with diffusion and flow models.}
Diffusion models \cite{ho2020denoising, song2020score,kong2024diffusion} and flow matching \cite{lipman2023flow, esser2024scaling} have emerged as powerful generative tools capable of modeling complex, high-dimensional distributions. Their application in RL is consequently expanding. Researchers have employed these generative models for various tasks, including planning and trajectory synthesis \cite{janner2022planning}, representing expressive multimodal policies \cite{chi2023diffusionpolicy, ren2025diffusion}, providing behavior regularization \cite{chen2024score}, or augmenting training datasets with synthesized experiences. While these works often focus on policy learning or modeling dynamics within a single environment, our approach targets the transfer learning setting. Specifically, we address scenarios characterized by a \textit{dynamics gap} between the offline data and the online environment. We utilize Flow Matching, leveraging its connection to optimal transport, to estimate this gap. 


\section{Experiments}
In this section\footnote{Our code is available at \url{https://github.com/Haichuan23/CompositeFlow}}, we first evaluate our approach across a range of environments in Gym-MuJoCo that exhibit different types of dynamics shifts. We then conduct ablation and hyperparameter studies to better understand the design choices and behavior of \ours. Finally, we assess the effectiveness of our method in a real-world inspired wildlife conservation task.

\subsection{Gym-MuJoCo}
\subsubsection{Experimental Setup}
\textbf{Tasks and datasets.} 
We evaluate our algorithm under three types of dynamics shifts, namely morphology, kinematic and friction changes, across three OpenAI Gym locomotion tasks: HalfCheetah, Hopper, and Walker2d~\cite{brockman2016openai}. Each experiment involves an offline environment and an online environment with modified transition dynamics following \cite{lyu2024odrl}. Morphology shifts alter the sizes of body parts, kinematic shifts impose constraints on joint angles, and friction shifts modify the static, dynamic, and rolling friction coefficients. For each task, we use three D4RL source datasets: medium, medium replay, and medium expert, which capture varying levels of data quality \cite{fu2020d4rl}. Additional  environment details are in Appendix~\ref{sec:appendix:exp-details}.




\textbf{Baselines.}
We compare \ours\ against the following baselines:
BC-SAC extends SAC by incorporating both offline and online data, with a behavior cloning (BC) term for the offline data.
H2O \cite{niu2022trust} penalizes Q-values for state-action pairs with large dynamics gaps.
BC-VGDF \cite{xu2023cross} selects offline transitions with value targets consistent with the online environment and adds a BC term.
BC-PAR \cite{lyu2024cross} applies a reward penalty based on representation mismatch between offline and online transitions, also including a BC term.
Implementation details are provided in Appendix~\ref{sec:appendix:exp-details}.

\subsubsection{Main Results}

\begin{table}[t]
\centering
\resizebox{\linewidth}{!}{%
\begin{tabular}{llcccccc}
\toprule
Dataset & Task Name & SAC & BC-SAC & H2O & BC-VGDF & BC-PAR & Ours \\
\midrule
MR & HalfCheetah (Morphology)
& \colorbox{white}{$1457 \pm 89$}
& \colorbox{white}{$2495 \pm 43$}
& \colorbox{white}{$1430 \pm 408$}
& \colorbox{white}{$2765 \pm 124$}
& \colorbox{white}{$1790 \pm 91$}
& \colorbox{green!25}{$3119 \pm 107$} \\
MR & HalfCheetah (Kinematic)
& \colorbox{white}{$2255 \pm 197$}
& \colorbox{white}{$4868 \pm 186$}
& \colorbox{white}{$4257 \pm 609$}
& \colorbox{white}{$4392 \pm 403$}
& \colorbox{white}{$4179 \pm 441$}
& \colorbox{green!25}{$5189 \pm 262$} \\
MR & HalfCheetah (Friction)
& \colorbox{white}{$2069 \pm 184$}
& \colorbox{white}{$7799 \pm 157$}
& \colorbox{white}{$6397 \pm 673$}
& \colorbox{white}{$7829 \pm 821$}
& \colorbox{white}{$8056 \pm 512$}
& \colorbox{green!25}{$8241 \pm 180$} \\
MR & Hopper (Morphology)
& \colorbox{yellow!25}{$364 \pm 82$}
& \colorbox{white}{$346 \pm 4$}
& \colorbox{yellow!25}{$361 \pm 18$}
& \colorbox{white}{$348 \pm 21$}
& \colorbox{white}{$354 \pm 25$}
& \colorbox{white}{$355 \pm 6$} \\
MR & Hopper (Kinematic)
& \colorbox{white}{$737 \pm 547$}
& \colorbox{yellow!25}{$1024 \pm 0$}
& \colorbox{yellow!25}{$1025 \pm 0$}
& \colorbox{yellow!25}{$1024 \pm 0$}
& \colorbox{yellow!25}{$1024 \pm 1$}
& \colorbox{yellow!25}{$1024 \pm 1$} \\
MR & Hopper (Friction)
& \colorbox{white}{$234 \pm 4$}
& \colorbox{white}{$228 \pm 2$}
& \colorbox{white}{$229 \pm 1$}
& \colorbox{white}{$230 \pm 3$}
& \colorbox{white}{$232 \pm 5$}
& \colorbox{green!25}{$280 \pm 27$} \\
MR & Walker2D (Morphology)
& \colorbox{white}{$253 \pm 60$}
& \colorbox{white}{$598 \pm 475$}
& \colorbox{white}{$1014 \pm 193$}
& \colorbox{white}{$672 \pm 576$}
& \colorbox{white}{$458 \pm 151$}
& \colorbox{green!25}{$1094 \pm 791$} \\
MR & Walker2D (Kinematic)
& \colorbox{white}{$152 \pm 22$}
& \colorbox{green!25}{$2973 \pm 185$}
& \colorbox{white}{$1967 \pm 851$}
& \colorbox{white}{$1586 \pm 923$}
& \colorbox{white}{$948 \pm 131$}
& \colorbox{white}{$1568 \pm 1315$} \\
MR & Walker2D (Friction)
& \colorbox{white}{$301 \pm 9$}
& \colorbox{white}{$311 \pm 5$}
& \colorbox{white}{$296 \pm 14$}
& \colorbox{white}{$302 \pm 12$}
& \colorbox{white}{$321 \pm 26$}
& \colorbox{green!25}{$344 \pm 20$} \\
\midrule
M & HalfCheetah (Morphology)
& \colorbox{white}{$1467 \pm 89$}
& \colorbox{white}{$1522 \pm 72$}
& \colorbox{white}{$1720 \pm 273$}
& \colorbox{white}{$1829 \pm 345$}
& \colorbox{white}{$1427 \pm 196$}
& \colorbox{green!25}{$2282 \pm 287$} \\
M & HalfCheetah (Kinematic)
& \colorbox{white}{$2316 \pm 92$}
& \colorbox{white}{$5451 \pm 195$}
& \colorbox{white}{$5019 \pm 773$}
& \colorbox{white}{$4972 \pm 381$}
& \colorbox{white}{$5243 \pm 120$}
& \colorbox{green!25}{$5593 \pm 44$} \\
M & HalfCheetah (Friction)
& \colorbox{white}{$2028 \pm 238$}
& \colorbox{white}{$7108 \pm 1001$}
& \colorbox{white}{$6968 \pm 846$}
& \colorbox{white}{$6802 \pm 956$}
& \colorbox{yellow!25}{$7800 \pm 525$}
& \colorbox{yellow!25}{$7871 \pm 238$} \\
M & Hopper (Morphology)
& \colorbox{white}{$396 \pm 60$}
& \colorbox{white}{$436 \pm 45$}
& \colorbox{white}{$410 \pm 8$}
& \colorbox{white}{$406 \pm 52$}
& \colorbox{white}{$418 \pm 13$}
& \colorbox{green!25}{$604 \pm 173$} \\
M & Hopper (Kinematic)
& \colorbox{white}{$724 \pm 535$}
& \colorbox{yellow!25}{$1022 \pm 1$}
        & \colorbox{white}{$970 \pm 98$}
& \colorbox{white}{$934 \pm 43$}
& \colorbox{yellow!25}{$1020 \pm 3$}
& \colorbox{yellow!25}{$1023 \pm 2$} \\
M & Hopper (Friction)
& \colorbox{white}{$229 \pm 5$}
& \colorbox{white}{$232 \pm 1$}
& \colorbox{white}{$228 \pm 4$}
& \colorbox{white}{$229 \pm 4$}
& \colorbox{white}{$233 \pm 5$}
& \colorbox{green!25}{$300 \pm 66$} \\
M & Walker2D (Morphology)
& \colorbox{white}{$301 \pm 177$}
& \colorbox{white}{$457 \pm 317$}
& \colorbox{white}{$577 \pm 201$}
& \colorbox{white}{$584 \pm 219$}
& \colorbox{white}{$431 \pm 177$}
& \colorbox{green!25}{$886 \pm 372$} \\
M & Walker2D (Kinematic)
& \colorbox{white}{$258 \pm 174$}
& \colorbox{white}{$1966 \pm 1155$}
& \colorbox{white}{$1965 \pm 568$}
& \colorbox{white}{$1921 \pm 928$}
& \colorbox{white}{$806 \pm 278$}
& \colorbox{green!25}{$2039 \pm 936$} \\
M & Walker2D (Friction)
& \colorbox{white}{$301 \pm 9$}
& \colorbox{white}{$286 \pm 54$}
& \colorbox{white}{$298 \pm 45$}
& \colorbox{white}{$289 \pm 47$}
& \colorbox{white}{$308 \pm 24$}
& \colorbox{green!25}{$320 \pm 31$} \\
\midrule
ME & HalfCheetah (Morphology)
& \colorbox{white}{$1392 \pm 238$}
& \colorbox{white}{$1195 \pm 241$}
& \colorbox{white}{$1147 \pm 169$}
& \colorbox{white}{$1072 \pm 102$}
& \colorbox{white}{$1207 \pm 53$}
& \colorbox{green!25}{$1485 \pm 67$} \\
ME & HalfCheetah (Kinematic)
& \colorbox{white}{$2323 \pm 97$}
& \colorbox{white}{$4211 \pm 262$}
& \colorbox{white}{$5143 \pm 330$}
& \colorbox{white}{$4603 \pm 498$}
& \colorbox{white}{$4399 \pm 164$}
& \colorbox{green!25}{$5750 \pm 84$} \\
ME & HalfCheetah (Friction)
& \colorbox{white}{$1950 \pm 312$}
& \colorbox{white}{$4185 \pm 732$}
& \colorbox{white}{$2140 \pm 733$}
& \colorbox{white}{$4078 \pm 1032$}
& \colorbox{white}{$4989 \pm 500$}
& \colorbox{green!25}{$5596 \pm 1557$} \\
ME & Hopper (Morphology)
& \colorbox{white}{$359 \pm 75$}
& \colorbox{white}{$349 \pm 47$}
& \colorbox{white}{$444 \pm 15$}
& \colorbox{white}{$357 \pm 63$}
& \colorbox{white}{$407 \pm 28$}
& \colorbox{green!25}{$462 \pm 89$} \\
ME & Hopper (Kinematic)
& \colorbox{white}{$724 \pm 535$}
& \colorbox{yellow!25}{$1024 \pm 1$}
& \colorbox{yellow!25}{$1031 \pm 3$}
& \colorbox{yellow!25}{$1022 \pm 3$}
& \colorbox{yellow!25}{$1027 \pm 8$}
& \colorbox{yellow!25}{$1022 \pm 2$} \\
ME & Hopper (Friction)
& \colorbox{white}{$229 \pm 4$}
& \colorbox{white}{$230 \pm 3$}
& \colorbox{white}{$232 \pm 5$}
& \colorbox{white}{$230 \pm 6$}
& \colorbox{white}{$232 \pm 8$}
& \colorbox{green!25}{$266 \pm 70$} \\
ME & Walker2D (Morphology)
& \colorbox{white}{$228 \pm 51$}
& \colorbox{white}{$429 \pm 117$}
& \colorbox{green!25}{$1103 \pm 444$}
& \colorbox{white}{$502 \pm 301$}
& \colorbox{white}{$380 \pm 231$}
& \colorbox{white}{$648 \pm 180$} \\
ME & Walker2D (Kinematic)
& \colorbox{white}{$386 \pm 184$}
& \colorbox{white}{$850 \pm 953$}
& \colorbox{yellow!25}{$1514 \pm 782$}
& \colorbox{white}{$1204 \pm 734$}
& \colorbox{white}{$755 \pm 268$}
& \colorbox{yellow!25}{$1511 \pm 1206$} \\
ME & Walker2D (Friction)
& \colorbox{white}{$266 \pm 66$}
& \colorbox{white}{$240 \pm 114$}
& \colorbox{white}{$258 \pm 8$}
& \colorbox{white}{$245 \pm 51$}
& \colorbox{white}{$242 \pm 24$}
& \colorbox{green!25}{$326 \pm 26$} \\
\midrule
\multicolumn{2}{c}{\textbf{Average Return}} 
& $\mathbf{878}$ & $\mathbf{1920}$ & $\mathbf{1783}$ & $\mathbf{1868}$ & $\mathbf{1803}$ & \colorbox{green!25}{$\mathbf{2193}$} \\
\bottomrule
\end{tabular}
}
\caption{Comparison of return under different dynamics shift scenarios and dataset types after
40K environment interactions. MR = Medium Replay, M = Medium, ME = Medium Expert. A cell is green if the method has the highest mean and improves over the second best by at least 2\%. Cells within 2\% of the top mean are marked in yellow.}
\label{tab:combined-dynamics-results}
\end{table}

We evaluate each algorithm by measuring its return in the target environment after 40K environment interactions and 400K gradient updates, reflecting a limited-interaction setting. The results are reported in Table~\ref{tab:combined-dynamics-results}. Our key findings are summarized as follows:

\noindent\textbf{(1)}
\ours consistently outperforms recent off-dynamics RL baselines across diverse offline dataset qualities and dynamics shifts. It achieves the best return on \textbf{19/27} tasks and ties for best on \textbf{5} more. On average, \ours reaches a score of \textbf{2193}, compared to \textbf{1920} for the strongest baseline (BC-SAC), a \textbf{14.2\%} relative improvement. Additional results under more severe dynamics shifts in Appendix~\ref{sec:appendix:additional_results} show the same trend.

\smallskip
\noindent\textbf{(2)}
Leveraging offline data substantially improves performance: on average, \ours improves over SAC by \textbf{149.8\%}. Moreover, \ours outperforms BC-SAC, which directly uses all offline data, on \textbf{23/27} tasks and matches it on the remaining \textbf{3}, highlighting the benefit of our dynamics-gap-aware design.

\smallskip
\noindent\textbf{(3)}
We also find that many recent baselines perform similarly to BC-SAC, consistent with \cite{lyu2024odrl}, suggesting limited ability to effectively exploit offline data under dynamics shift. One reason is that methods such as H2O and BC-PAR estimate the dynamics gap using KL divergence or mutual information, which can be ill-defined under large shifts or mismatched support. BC-VGDF instead filters data based on value estimates, which is often harder due to bootstrapping bias and target non-stationarity. In contrast, \ours models transition dynamics with flow matching and estimates the dynamics gap via the Wasserstein distance through its connection to optimal transport, yielding a more principled and robust criterion.

\subsubsection{Ablation and Hyperparameter Analysis}


\begin{wrapfigure}{r}{0.5\textwidth}
  \centering
\includegraphics[width=0.5\textwidth]{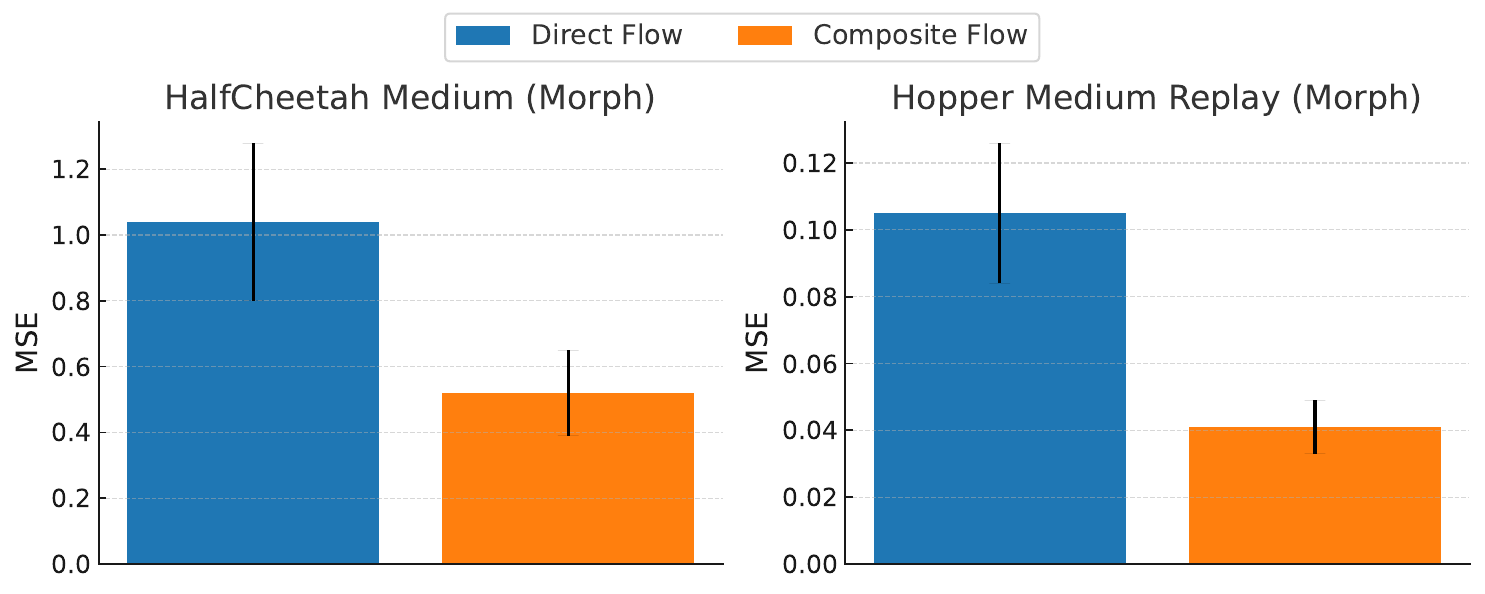}
  \vspace{-10pt}
  \caption{Comparison of MSE between direct flow and composite flow.}
  \label{fig:flow-mse}
\end{wrapfigure}

\textbf{Composite Flow.}
We first evaluate the effectiveness of the proposed composite flow matching design. For comparison, we train a direct flow model on the target environment initialized from a Gaussian prior. We compute the mean squared error (MSE) on a held-out 10\% validation set and report the average MSE across different epochs during RL training. As shown in Figure~\ref{fig:flow-mse}, the composite flow significantly reduces the MSE of the transition dynamics in the environment. This improvement stems from its ability to reuse structural knowledge learned from the abundant offline data. These empirical findings support the theoretical insight presented in Theorem~\ref{theorem:compound} that our composite flow can improve the generalization ability.

\textbf{Impact of data selection ratio $\xi\%$.} The data selection ratio $\xi$ decides how many offline data in a sampled batch can be shared for
policy training. A larger $\xi$ indicates that more offline data will be admitted. To examine its influence, We sweep $\xi$ across \{70, 50, 30, 20\}. The results is shown in Figure~\ref{fig:filter-percent-comparison}.  Although the optimal $\xi$ seems task dependent, moderate values of $\xi$ (e.g., 30 or 50) generally yield good performance across tasks, striking a balance between leveraging useful offline data and avoiding high dynamics mismatch. When $\xi$ is too large (e.g., 70), performance often degrades, particularly in Walker Medium (Morph) and Walker Medium Replay (Morph), likely due to the inclusion of low-quality transitions with large dynamics gap. HalfCheetah consistently achieves higher returns compared to other domains, suggesting that knowledge transfer and policy learning in this environment are easier. Consequently, overly conservative filtering (e.g., $\xi = 20$) may exclude valuable offline data, leading to slower learning. In this case, allowing more offline data (e.g., $\xi = 70$) appears beneficial.
\begin{figure}[htbp]
    \centering
\includegraphics[width=0.98\linewidth]{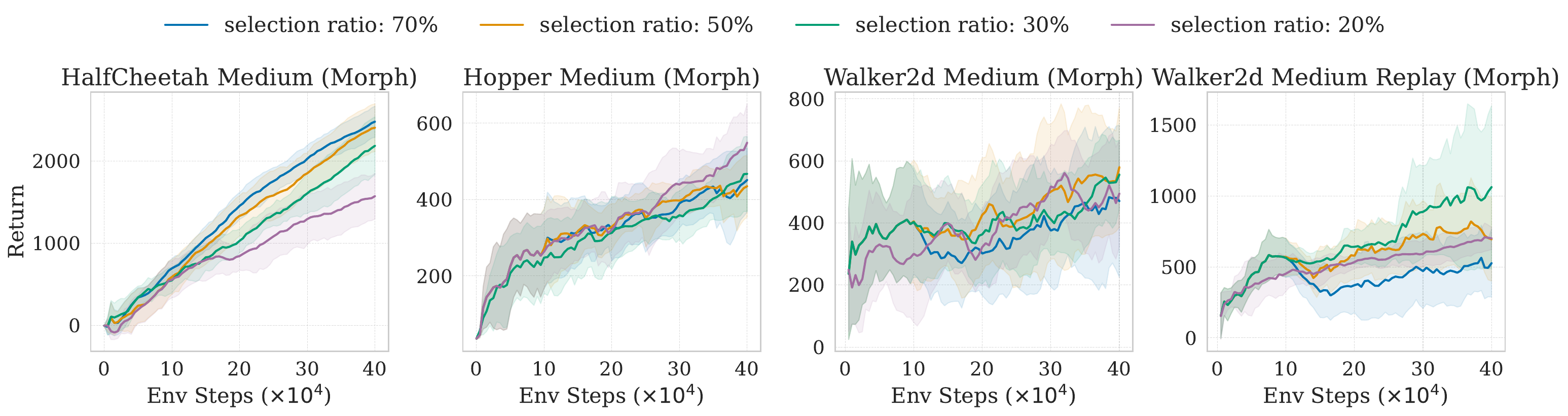}
    \caption{Comparison of return under different data selection ratios across tasks.}
    \label{fig:filter-percent-comparison}
\end{figure}

\textbf{Impact of exploration strength $\beta$.}
$\beta$ controls the strength of exploration toward regions with large dynamics gap. A large $\beta$ indicates higher incentive to explore such regions. As shown in Figure \ref{fig:exploration-strength}, the effect of $\beta$ on return is task-dependent, possibly due to differences in the underlying MDPs.   In the Friction tasks, we observe that larger exploration bonuses lead to higher asymptotic returns. This suggests that incentivizing exploration helps the algorithm discover high-reward regions more effectively. In contrast, in the Morphology tasks, moderate values of $\beta$ typically outperform both smaller and larger values. This indicates that excessive exploration may not be effective in some tasks. While the optimal $\beta$ varies by task, one trend is consistent: across all five experiments, the setting with no exploration ($\beta = 0$) consistently ranks as the lowest or the second lowest performers. This empirical finding confirms the importance of the exploration bonus and provides evidence supporting our algorithmic design.

\begin{figure}[h]
  \centering
\includegraphics[width=0.98\textwidth]{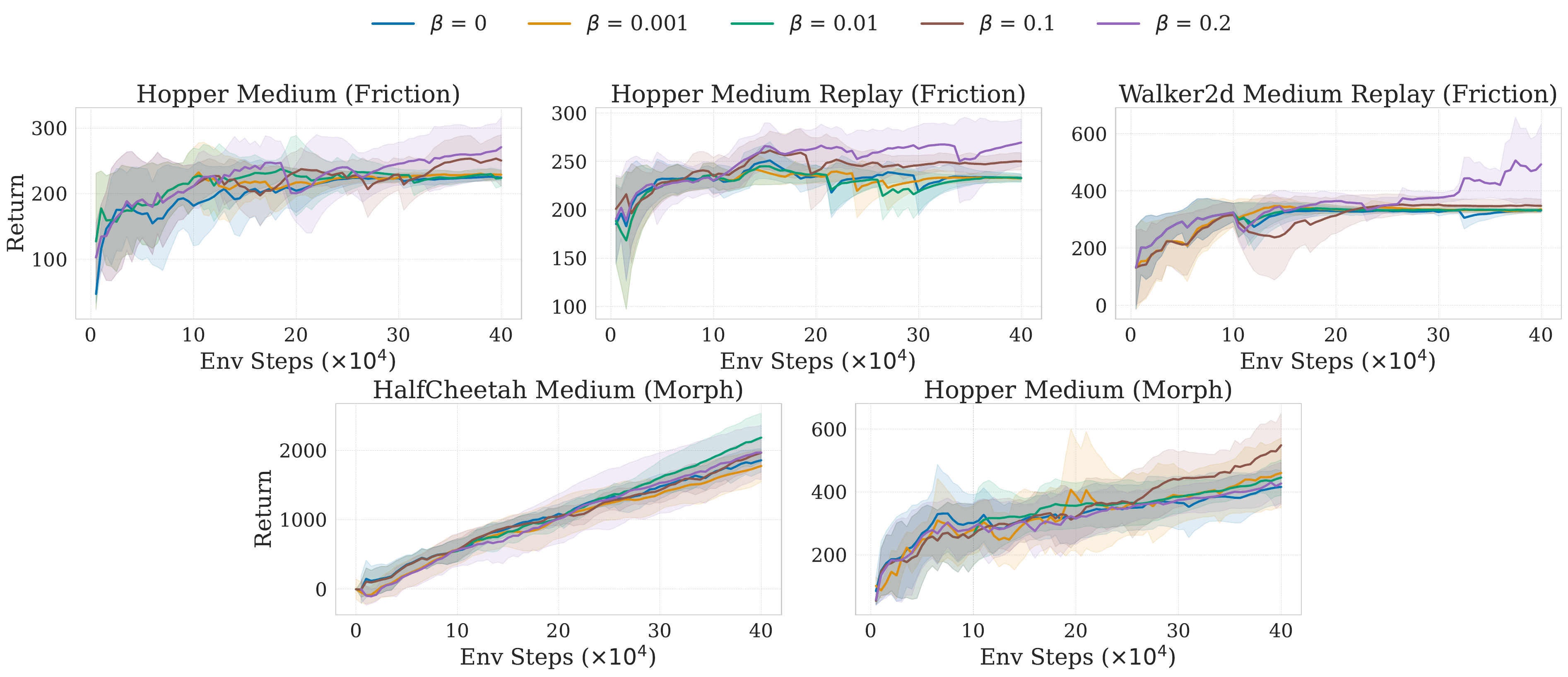}
  \caption{Comparison of return under different exploration strengths across tasks.}
  \label{fig:exploration-strength}
\end{figure}

\subsection{Patrol Policy Learning for Wildlife Conservation}


We evaluate \ours in a wildlife conservation setting, where the goal is to learn a patrol policy that allocates limited ranger effort to reduce poaching. We adopt the simulation environment of~\citet{xu2021robust}, which represents the park as a $1\times1$ km spatial grid. At each time step, the agent allocates a distribution of patrol effort across cells under a fixed budget. The state, including wildlife density and poaching risk, evolves according to dynamics driven by both poacher behavior and patrol deployment, and the per-step reward reflects the expected number of animals preserved through deterrence and spatial coverage.

\begin{wrapfigure}{r}{0.5\textwidth}
  \centering
\includegraphics[width=0.5\textwidth]{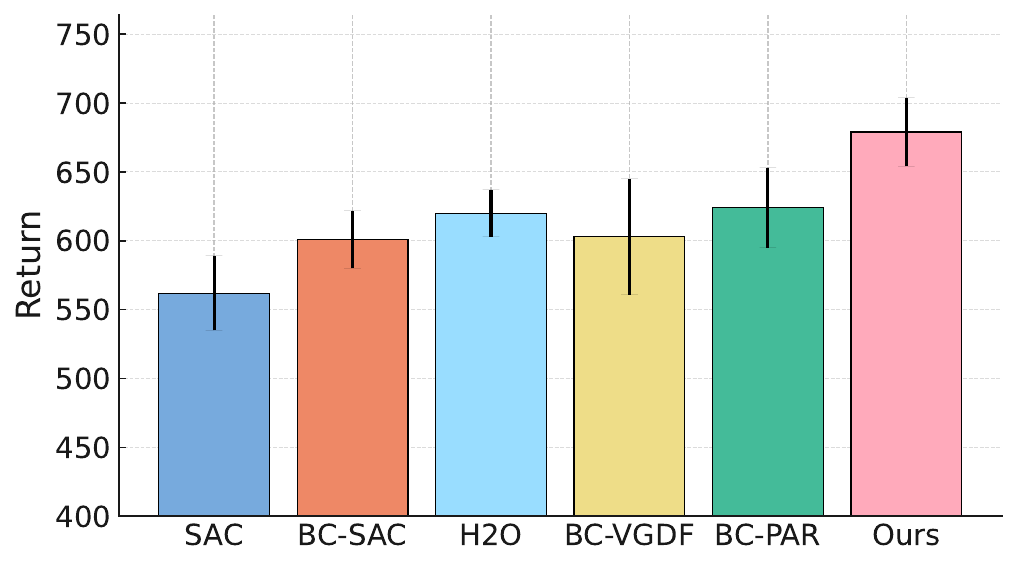}
  \vspace{-20pt}
  \caption{Reward on the wildlife conservation task.}
  \label{fig:flow-wildlife}
\end{wrapfigure}

We assume access to an offline dataset collected under a previous policy in Murchison Falls National Park, and aim to leverage it to learn an improved patrol policy for Queen Elizabeth National Park with only limited online interactions. Since ecological conditions and poacher behavior differ across parks, the transition dynamics are mismatched. As shown in Figure~\ref{fig:flow-wildlife}, \ours achieves the highest reward, outperforming the best baseline, BC-PAR, by \textbf{8.8\%}. It also improves over training from scratch with SAC by \textbf{20.8\%} under the same interaction budget. These gains are particularly valuable in large protected areas where ranger capacity is constrained: Murchison Falls spans roughly 3,900 km$^2$, while Queen Elizabeth covers about 1,980 km$^2$, making data-efficient learning essential for effective protection.

\section{Conclusion and Limitations}
\label{sec:conclusion- limitations}
In this paper, we proposed \ours, a new method for estimating the dynamics gap in reinforcement learning with shifted-dynamics data. \ours leverages the theoretical connection between flow matching and optimal transport. To address data scarcity in the online environment, we adopt a composite flow structure that builds the online flow model on top of the output distribution from the offline flow. This composite formulation improves generalization and enables the use of Wasserstein distance between offline and online transitions as a robust measure of the dynamics gap. Using this pricinpled estimation, we further encourage the policy to explore regions with high dynamics gap and provide a theoretical analysis of the benefits. Empirically, we demonstrate that \ours consistently outperforms or matches state-of-the-art baselines across diverse RL tasks with varying types of dynamics shift.  

Due to the space limit, we discuss the limitations of \ours in Appendix~\ref{sec:appendix:limitations}.

\section*{Acknowledgement}
We are thankful to the Uganda Wildlife Authority for granting us access to incident data from Murchison Falls and Queen Elizabeth National
Park. We also thank the anonymous reviewers for their valuable
feedback. This work was supported by ONR MURI N00014-24-1-2742.

\bibliographystyle{plainnat}  
\bibliography{references}



\newpage

\begin{center}
	{\Large \textbf{Appendix for Composite Flow Matching for Reinforcement Learning with Shifted-Dynamics Data}}
\end{center}

\startcontents[sections]
\printcontents[sections]{l}{1}{\setcounter{tocdepth}{2}}

\appendix

\section{Limitations}
\label{sec:appendix:limitations}
(1) \ours is currently limited to settings where the offline data and the online environment share the same state and action spaces. Future work could explore estimating the dynamics gap in a shared latent embedding space to relax this assumption~\cite{zhuang2023dygen}.
(2) Our evaluation is conducted entirely in simulated environments; applying \ours to real-world scenarios remains an important direction for future work.
(3) We have not yet incorporated the uncertainty in estimating the dynamics gap. Future work may explore how to quantify this uncertainty and use it to improve both data filtering and exploration strategies~\cite{kong2023uncertainty,li2024muben}.

\section{Broader Impacts}
\label{sec:appendix:impacts}

Our work aims to make reinforcement learning more practical in real-world domains such as healthcare, robotics, and conservation, where online interaction is often costly, limited, or unsafe. By addressing the dynamics shift between offline data and the online environment, our method enables more reliable and sample-efficient policy learning. This capability supports safer deployment in high-stakes applications, including clinical decision support and adaptive anti-poaching strategies. Nonetheless, we emphasize that policies trained on historical data should be applied with caution, as misaligned dynamics or biased datasets may lead to unintended consequences if not carefully validated.

\section{Algorithms of Training Optimal Transport Flow Matching}
\label{sec:appendix:ot-fm}

The full training algorithm of Optimal Transport Flow Matching is given in Algorithm~\ref{algo:ot-fm}.

\begin{algorithm}[H]
\caption{Training OT Flow Matching (OT-FM)}
\label{algo:ot-fm}
\begin{algorithmic}[1]
\Require Dataset $\mathcal{D}$; batch size $k$; learning rate $\mathsf{lr}$; vector field $v_\theta$
\Ensure Trained parameters $\theta$

\While{not converged}
    \State \textbf{(Sample)} Draw latent codes $(x_0^{(i)})_{i=1}^k \sim \mathcal{N}(0,\mathbf{I})$
    \State Draw data points $(x_1^{(j)})_{j=1}^k \sim \mathcal{D}$

    \State \textbf{(OT coupling)} Set $C_{ij} \gets \|x_0^{(i)}-x_1^{(j)}\|_2^2$ for all $(i,j)\in[k]\times[k]$
    \State Compute an optimal transport plan
    \[
    A \gets \arg\min_{A\in\mathcal{B}_k}\ \langle A, C\rangle,
    \qquad
    \mathcal{B}_k := \{A\in\mathbb{R}_+^{k\times k}: A\mathbf{1}=\tfrac{1}{k}\mathbf{1},\ A^\top\mathbf{1}=\tfrac{1}{k}\mathbf{1}\}.
    \]
    \Comment{$A$ is a (scaled) doubly-stochastic coupling}

    \State Sample index pairs $(i_\ell,j_\ell)_{\ell=1}^k$ i.i.d.\ from the categorical distribution on $[k]\times[k]$ with probabilities $A_{ij}$
    \State Sample times $(t_\ell)_{\ell=1}^k \sim \mathcal{U}[0,1]$

    \For{$\ell=1$ to $k$}
        \State $x_{t_\ell}^{(\ell)} \gets (1-t_\ell)\,x_0^{(i_\ell)} + t_\ell\,x_1^{(j_\ell)}$
        \State $\Delta^{(\ell)} \gets x_1^{(j_\ell)} - x_0^{(i_\ell)}$
        \State $\ell_\ell \gets \big\| v_\theta(x_{t_\ell}^{(\ell)}, t_\ell) - \Delta^{(\ell)} \big\|_2^2$
    \EndFor

    \State $\mathcal{L} \gets \frac{1}{k}\sum_{\ell=1}^k \ell_\ell$
    \State $\theta \gets \theta - \mathsf{lr}\,\nabla_\theta \mathcal{L}$
\EndWhile

\State \Return $\theta$
\end{algorithmic}
\end{algorithm}

\section{Algorithms of Training Offline and Online Flows}
\label{sec:appendix:flow}

The full training algorithms of the offline flow and online flow are given in Algorithm~\ref{alg:source-flow} and Algorithm~\ref{alg:target-flow} respectively.

\begin{algorithm}[H]
\caption{Training the Offline Flow via Flow Matching}
\label{alg:source-flow}
\begin{algorithmic}[1]
\Require Offline dataset $\mathcal{D}_{\rm off}$; batch size $k$; offline vector field $v_\theta$; learning rate $\mathsf{lr}$
\Ensure Trained offline flow parameters $\theta$

\While{not converged}
    \State Sample transitions $(s^{(i)},a^{(i)},s^{\prime(i)})_{i=1}^k \sim \mathcal{D}_{\rm off}$
    \State Set $x_1^{(i)} \gets s^{\prime(i)}$ for all $i\in[k]$
    \State Sample latent codes $(x_0^{(i)})_{i=1}^k \sim \mathcal{N}(0,\mathbf{I})$
    \State Sample times $(t_i)_{i=1}^k \sim \mathcal{U}[0,1]$

    \For{$i=1$ to $k$}
        \State $x_{t_i}^{(i)} \gets (1-t_i)\,x_0^{(i)} + t_i\,x_1^{(i)}$
        \State $\Delta^{(i)} \gets x_1^{(i)} - x_0^{(i)}$
        \State $\ell_i \gets \big\| v_\theta(x_{t_i}^{(i)}, t_i, s^{(i)}, a^{(i)}) - \Delta^{(i)} \big\|_2^2$
    \EndFor

    \State $\mathcal{L} \gets \frac{1}{k}\sum_{i=1}^k \ell_i$
    \State $\theta \gets \theta - \mathsf{lr}\,\nabla_\theta \mathcal{L}$
\EndWhile

\State \Return $\theta$
\end{algorithmic}
\end{algorithm}

\begin{algorithm}[H]
\caption{Training the Online Flow via Optimal Transport}
\label{alg:target-flow}
\begin{algorithmic}[1]
\Require Pre-trained offline flow sampler $\psi_{\theta}(x,t\mid s,a)$; online replay buffer $\mathcal{D}_{\rm on}$; batch size $k$; regularization weight $\eta>0$; learning rate $\mathsf{lr}$; initialized online vector field $v_\phi$
\Ensure Trained online flow parameters $\phi$

\While{not converged}
    \State \textbf{(Offline-synthetic batch)} Sample $(s_{\rm off}^{(i)},a_{\rm off}^{(i)})_{i=1}^k \sim \mathcal{D}_{\rm on}$
    \State Sample latent codes $(x_0^{(i)})_{i=1}^k \sim \mathcal{N}(0,\mathbf{I})$
    \For{$i=1$ to $k$}
        \State $x_1^{(i)} \gets s_{\rm off}^{\prime (i)} \gets \psi_{\theta}(x_0^{(i)},1| s_{\rm off}^{(i)},a_{\rm off}^{(i)})$
    \EndFor

    \State \textbf{(Online batch)} Sample $(s_{\rm on}^{(j)},a_{\rm on}^{(j)},s_{\rm on}^{\prime (j)})_{j=1}^k \sim \mathcal{D}_{\rm on}$
    \State $x_2^{(j)} \gets s_{\rm on}^{\prime (j)}$ for all $j\in[k]$

    \State \textbf{(OT coupling)} For all $(i,j)\in[k]\times[k]$, set
    \[
    C_{ij} \gets \|x_1^{(i)}-x_2^{(j)}\|_2^2
    + \eta\big(\|s_{\rm off}^{(i)}-s_{\rm on}^{(j)}\|_2^2+\|a_{\rm off}^{(i)}-a_{\rm on}^{(j)}\|_2^2\big).
    \]
    \State Compute an optimal transport plan
    \[
    A \gets \arg\min_{A\in\mathcal{B}_k}\ \langle A, C\rangle,
    \qquad
    \mathcal{B}_k := \{A\in\mathbb{R}_+^{k\times k}: A\mathbf{1}=\tfrac{1}{k}\mathbf{1},\ A^\top\mathbf{1}=\tfrac{1}{k}\mathbf{1}\}.
    \]
    \Comment{$A$ is a (scaled) doubly-stochastic coupling}

    \State Sample index pairs $(i_\ell,j_\ell)_{\ell=1}^k$ i.i.d.\ from the categorical distribution on $[k]\times[k]$ with probabilities $A_{ij}$
    \State Sample times $(t_\ell)_{\ell=1}^k \sim \mathcal{U}[1,2]$

    \For{$\ell=1$ to $k$}
        \State $x_{t_\ell}^{(\ell)} \gets (2-t_\ell)\,x_1^{(i_\ell)} + (t_\ell-1)\,x_2^{(j_\ell)}$
        \State $\Delta^{(\ell)} \gets x_2^{(j_\ell)} - x_1^{(i_\ell)}$
        \State $\ell_{\ell} \gets \big\| v_\phi(x_{t_\ell}^{(\ell)}, t_\ell, s_{\rm on}^{(j_\ell)}, a_{\rm on}^{(j_\ell)}) - \Delta^{(\ell)} \big\|_2^2$
    \EndFor

    \State $\mathcal{L} \gets \frac{1}{k}\sum_{\ell=1}^k \ell_{\ell}$
    \State $\phi \gets \phi - \mathsf{lr}\,\nabla_\phi \mathcal{L}$
\EndWhile

\State \Return $\phi$
\end{algorithmic}
\end{algorithm}

\section{Algorithm of \ours built on Soft-Actor-Critic}
\label{sec:appendix:rl-algo}

The full training algorithm of \ours built on SAC is given in Algorithm~\ref{alg:rl-algorithm}.

\newpage
\newpage

\begin{algorithm}[!h]
\caption{\ours built on Soft Actor-Critic (SAC)}
\label{alg:rl-algorithm}
\begin{algorithmic}[1]
\State \textbf{Input:} Offline dataset $\mathcal{D}_{\rm off}$, online environment $\mathcal{M}_{\rm on}$, max interaction steps $T_{\max}$,
update frequency $\mathsf{train\_freq}$, offline selection ratio $\xi\in(0,1]$, gap reward scale $\beta$,
batch size $B$, behavior cloning weight $\omega$, warmup steps $\mathsf{warmup\_steps}$,
gradient steps per interaction $K$, learning rate $\mathsf{lr}$, target update rate $\varpi$,
discount $\gamma$, entropy temperature $\alpha$.
\State \textbf{Initialize:} Policy $\pi_{\varphi}$, critics $\{Q_{\varsigma_i}\}_{i=1,2}$, target critics $\{Q^{\rm tgt}_{\varsigma_i}\}_{i=1,2}\gets \{Q_{\varsigma_i}\}_{i=1,2}$,
online replay buffer $\mathcal{D}_{\rm on}\leftarrow \emptyset$.
\State Pretrain offline flow $\psi_{\theta}(x,t\mid s,a)$ on $\mathcal{D}_{\rm off}$ via Algorithm~\ref{alg:source-flow}.

\For{$t=1$ to $T_{\max}$}
    \State Interact with $\mathcal{M}_{\rm on}$ using $\pi_\varphi$ and store transition $(s,a,r,s')$ into $\mathcal{D}_{\rm on}$.
    \Comment{$a\sim\pi_\varphi(\cdot\mid s)$, $(r,s')\sim\mathcal{M}_{\rm on}(s,a)$}

    \If{$t>\mathsf{warmup\_steps}$ \textbf{and} $t \bmod \mathsf{train\_freq}=0$}
        \State Train online flow $\psi_{\phi}(x,t\mid s,a)$ on $\mathcal{D}_{\rm on}$ via Algorithm~\ref{alg:target-flow}.
    \EndIf

    \For{$k=1$ to $K$}
        \State Sample offline minibatch $b_{\rm off}=\{(s^{(i)}_{\rm off},a^{(i)}_{\rm off},r^{(i)}_{\rm off},s_{\rm off}^{\prime(i)})\}_{i=1}^{B}\sim\mathcal{D}_{\rm off}$.
        \State Sample online minibatch $b_{\rm on}=\{(s^{(i)}_{\rm on},a^{(i)}_{\rm on},r^{(i)}_{\rm on},s_{\rm on}^{\prime(i)})\}_{i=1}^{B}\sim\mathcal{D}_{\rm on}$.

        \State Estimate dynamics gap $\widehat{\Delta}(s^{(i)}_{\rm off},a^{(i)}_{\rm off})$ for each $(s^{(i)}_{\rm off},a^{(i)}_{\rm off})$ via Eq.~\eqref{eq:gap_mc}.
        \State Select the lowest-gap subset $\tilde{b}_{\rm off}\subset b_{\rm off}$ with $|\tilde{b}_{\rm off}|=\lceil \xi B\rceil$.
        \State Shape rewards for selected offline transitions: for each $(s,a,r,s')\in\tilde{b}_{\rm off}$ set $\tilde{r}\gets r + \beta\,\widehat{\Delta}(s,a)$.

        \State \textit{\# Critic update (use online batch + selected offline batch)}
        \State Define critic batch $b_Q \leftarrow b_{\rm on}\cup \tilde{b}_{\rm off}$ with rewards $r$ for online and $\tilde r$ for selected offline.
        \For{$i=1,2$}
            \State Compute Bellman targets for $(s,a,r,s')\in b_Q$:
            \[
            y(s,a,r,s') \;=\; r + \gamma\,\mathbb{E}_{a'\sim \pi_\varphi(\cdot\mid s')}
            \Big[\min_{j=1,2}Q^{\rm tgt}_{\varsigma_j}(s',a')-\alpha\log \pi_\varphi(a'\mid s')\Big].
            \]
            \State Update critic by gradient descent:
            \[
            \varsigma_i \leftarrow \varsigma_i - \mathsf{lr}\,\nabla_{\varsigma_i}\Bigg(
            \frac{1}{|b_Q|}\sum_{(s,a,r,s')\in b_Q}\big(Q_{\varsigma_i}(s,a)-y(s,a,r,s')\big)^2\Bigg).
            \]
        \EndFor

        \State \textit{\# Soft update target critics}
        \For{$i=1,2$}
            \State $\varsigma_i^{\rm tgt} \leftarrow \varpi\,\varsigma_i + (1-\varpi)\,\varsigma_i^{\rm tgt}$.
        \EndFor

        \State \textit{\# Actor update (policy improvement + behavior cloning)}
        \State Define actor state batch $b_\pi \leftarrow \{s:(s,\cdot,\cdot,\cdot)\in b_{\rm on}\cup \tilde{b}_{\rm off}\}$.
        \State Sample $\tilde a \sim \pi_\varphi(\cdot\mid s)$ for each $s\in b_\pi$.
        \State Set normalization weight
        \[
        \lambda \;=\; \frac{\omega}{\frac{1}{|b_\pi|}\sum_{s\in b_\pi}\left|\min_{i=1,2}Q_{\varsigma_i}(s,\tilde a)\right|+\epsilon},
        \qquad \epsilon>0.
        \]
        \State Minimize actor loss:
        \begin{align*}
        \mathcal{L}_{\pi}(\varphi)
        \;=\;
        &\ \frac{1}{|b_\pi|}\sum_{s\in b_\pi}\Big(\alpha\log\pi_\varphi(\tilde a\mid s)-\min_{i=1,2}Q_{\varsigma_i}(s,\tilde a)\Big)
        \;+\;
        \lambda\,\frac{1}{|b_{\rm off}|}\sum_{(s,a,\cdot,\cdot)\in b_{\rm off}}\|\;a-\bar a\;\|_2^2,
        \end{align*}
        \Statex \hspace{1.6em}where $\tilde a\sim\pi_\varphi(\cdot\mid s)$ for $s\in b_\pi$, and $\bar a\sim\pi_\varphi(\cdot\mid s)$ for $(s,a)\in b_{\rm off}$.
        \State $\varphi \leftarrow \varphi - \mathsf{lr}\,\nabla_\varphi \mathcal{L}_{\pi}(\varphi)$.
    \EndFor
\EndFor

\State \textbf{Output:} Learned policy $\pi_{\varphi}$.
\end{algorithmic}
\end{algorithm}

\newpage
\newpage
\newpage
\section{Proof of Lemma~\ref{lemma:return-bound}}
\returnbound*
\begin{proof}
    Proof of Lemma ~\ref{lemma:return-bound} is already provided as an intermediate step for Theorem A.4 in \cite{lyu2024cross}, the offline performance bound case. We include their proof here with slight modifications for completeness. 

We first cite the following two necessary lemmas also used in \cite{lyu2024cross}. 
\end{proof}

\begin{lemma}[Extended telescoping lemma]
\label{lemma:extended_telescoping}
Denote $\mathcal{M}_1 = (\mathcal{S}, \mathcal{A}, P_1, r, \gamma)$ and $\mathcal{M}_2 = (\mathcal{S}, \mathcal{A}, P_2, r, \gamma)$ as two MDPs that \emph{only differ in their transition dynamics}. Suppose we have two policies $\pi_1, \pi_2$, we can reach the following conclusion:
\[
\eta_{\mathcal{M}_1}(\pi_1) - \eta_{\mathcal{M}_2}(\pi_2) = \frac{1}{1 - \gamma} \, \mathbb{E}_{\rho^{\pi_1}_{\mathcal{M}_1}(s,a)} \left[ 
\mathbb{E}_{s' \sim P_1,\, a' \sim \pi_1} \left[ Q^{\pi_2}_{\mathcal{M}_2}(s', a') \right] 
- \mathbb{E}_{s' \sim P_2,\, a' \sim \pi_2} \left[ Q^{\pi_2}_{\mathcal{M}_2}(s', a') \right] 
\right].
\]
\textbf{Proof.} This is Lemma C.2 in \cite{xu2023cross}. \hfill $\square$
\end{lemma}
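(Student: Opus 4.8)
The plan is to establish the identity by \emph{telescoping the reference action-value function} $Q^{\pi_2}_{\mathcal{M}_2}$ along the trajectory generated by running $\pi_1$ in $\mathcal{M}_1$. This is the natural way to produce a single identity that simultaneously absorbs a policy shift ($\pi_1$ versus $\pi_2$) and a dynamics shift ($P_1$ versus $P_2$), generalizing the usual performance-difference lemma (same MDP, two policies) and the simulation lemma (same policy, two MDPs). The reference quantity is $Q^{\pi_2}_{\mathcal{M}_2}$ precisely because its Bellman equation is expressed through $P_2$ and $\pi_2$, so substituting it along a $P_1,\pi_1$-trajectory is exactly what makes the two dynamics and the two policies appear side by side.

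First I would write the discounted return as $\frac{1}{1-\gamma}\eta_{\mathcal{M}_1}(\pi_1)=\mathbb{E}[\sum_{t\ge 0}\gamma^t r(s_t,a_t)]$, where the trajectory has $s_0\sim\mu$, $a_t\sim\pi_1(\cdot\mid s_t)$, and $s_{t+1}\sim P_1(\cdot\mid s_t,a_t)$. Next I would apply the geometric collapse $Q^{\pi_2}_{\mathcal{M}_2}(s_0,a_0)=\sum_{t\ge 0}\gamma^t(Q^{\pi_2}_{\mathcal{M}_2}(s_t,a_t)-\gamma\,Q^{\pi_2}_{\mathcal{M}_2}(s_{t+1},a_{t+1}))$, which is valid since bounded rewards force $\gamma^T Q^{\pi_2}_{\mathcal{M}_2}\to 0$. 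Taking expectations over the trajectory, the conditional law of $(s_{t+1},a_{t+1})$ given $(s_t,a_t)$ is $s'\sim P_1,\ a'\sim\pi_1$, which is what creates the first lookahead term $\mathbb{E}_{s'\sim P_1,\,a'\sim\pi_1}[Q^{\pi_2}_{\mathcal{M}_2}(s',a')]$. I would then substitute the Bellman identity $Q^{\pi_2}_{\mathcal{M}_2}(s_t,a_t)=r(s_t,a_t)+\gamma\,\mathbb{E}_{s'\sim P_2,\,a'\sim\pi_2}[Q^{\pi_2}_{\mathcal{M}_2}(s',a')]$ term-by-term: the reward part reassembles the discounted return of $(\pi_1,\mathcal{M}_1)$, while the leftover part contributes the second lookahead term $\mathbb{E}_{s'\sim P_2,\,a'\sim\pi_2}[Q^{\pi_2}_{\mathcal{M}_2}(s',a')]$, reproducing exactly the bracketed difference in the statement.

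Finally I would convert the residual discounted trajectory sum into an occupancy-measure expectation through the defining identity $(1-\gamma)\,\mathbb{E}[\sum_{t\ge 0}\gamma^t g(s_t,a_t)]=\mathbb{E}_{(s,a)\sim\rho^{\pi_1}_{\mathcal{M}_1}}[g(s,a)]$, applied to $g(s,a)=\mathbb{E}_{s'\sim P_1,\,a'\sim\pi_1}[Q^{\pi_2}_{\mathcal{M}_2}(s',a')]-\mathbb{E}_{s'\sim P_2,\,a'\sim\pi_2}[Q^{\pi_2}_{\mathcal{M}_2}(s',a')]$, and then identify the boundary value $\mathbb{E}_{s_0\sim\mu}[V^{\pi_2}_{\mathcal{M}_2}(s_0)]$ with $\frac{1}{1-\gamma}\eta_{\mathcal{M}_2}(\pi_2)$ before rearranging. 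The hard part will be the normalization bookkeeping: correctly tracking the $(1-\gamma)$ factors that relate the trajectory sum to $\rho^{\pi_1}_{\mathcal{M}_1}$, ensuring both lookahead terms are conditioned on the \emph{same} current pair $(s,a)\sim\rho^{\pi_1}_{\mathcal{M}_1}$, and checking that the initial state-action contribution collapses so that only $\eta_{\mathcal{M}_2}(\pi_2)$ survives rather than a stray $\pi_1$-weighted boundary term. I would pin down the leading constant by specializing to $P_1\ne P_2$ with $\pi_1=\pi_2$, a degenerate case in which the identity reduces to the simulation lemma and the prefactor can be read off unambiguously.
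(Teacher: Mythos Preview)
The paper does not actually prove this lemma; its entire proof reads ``This is Lemma~C.2 in \cite{xu2023cross}.'' Your proposal therefore supplies strictly more than the paper does. The telescoping strategy you outline---evaluate $Q^{\pi_2}_{\mathcal{M}_2}$ along a $(\pi_1,\mathcal{M}_1)$-trajectory, substitute its Bellman equation under $(P_2,\pi_2)$ to peel off the rewards, and convert the residual discounted sum into an expectation over $\rho^{\pi_1}_{\mathcal{M}_1}$---is exactly the standard derivation of such extended performance-difference identities and is correct. Your caution about the boundary term and the $(1-\gamma)$ bookkeeping is well placed; in particular, the identity as displayed is consistent with the \emph{unnormalized} return convention $\eta_{\mathcal{M}}(\pi)=\mathbb{E}_{s_0\sim\mu}[V^{\pi}_{\mathcal{M}}(s_0)]$ used in the cited source, rather than the normalized $\eta=\mathbb{E}_{(s,a)\sim\rho}[r(s,a)]$ defined in Section~\ref{sec:problem-definition}, so when you carry out the final rearrangement you should work under that convention.
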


\begin{lemma}
\label{lemma:performance_difference}
Denote $\mathcal{M} = (\mathcal{S}, \mathcal{A}, P, r, \gamma)$ as the underlying MDP. Suppose we have two policies $\pi_1, \pi_2$, then the performance difference of these policies in the MDP gives:
\[
\eta_{\mathcal{M}}(\pi_1) - \eta_{\mathcal{M}}(\pi_2) = \frac{1}{1 - \gamma} \, 
\mathbb{E}_{\rho^{\pi_1}_{\mathcal{M}}(s,a),\, s' \sim P} \left[
\mathbb{E}_{a' \sim \pi_1} \left[ Q^{\pi_2}_{\mathcal{M}}(s', a') \right]
-
\mathbb{E}_{a' \sim \pi_2} \left[ Q^{\pi_2}_{\mathcal{M}}(s', a') \right]
\right].
\]
\textbf{Proof.} This is Lemma B.3 in \cite{lyu2024cross}. \hfill $\square$
\end{lemma}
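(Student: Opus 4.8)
The plan is to derive Lemma~\ref{lemma:performance_difference} as the single-environment specialization of the extended telescoping lemma (Lemma~\ref{lemma:extended_telescoping}), which we are free to assume. I would instantiate Lemma~\ref{lemma:extended_telescoping} with $\mathcal{M}_1=\mathcal{M}_2=\mathcal{M}$, so that $P_1=P_2=P$, $Q^{\pi_2}_{\mathcal{M}_2}=Q^{\pi_2}_{\mathcal{M}}$, and $\rho^{\pi_1}_{\mathcal{M}_1}=\rho^{\pi_1}_{\mathcal{M}}$. The left-hand side then collapses to exactly $\eta_{\mathcal{M}}(\pi_1)-\eta_{\mathcal{M}}(\pi_2)$, the quantity of interest, while the right-hand side becomes $\frac{1}{1-\gamma}\,\mathbb{E}_{\rho^{\pi_1}_{\mathcal{M}}(s,a)}\!\left[\mathbb{E}_{s'\sim P,\,a'\sim\pi_1}[Q^{\pi_2}_{\mathcal{M}}(s',a')]-\mathbb{E}_{s'\sim P,\,a'\sim\pi_2}[Q^{\pi_2}_{\mathcal{M}}(s',a')]\right]$. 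Before applying the lemma I would confirm that its proof (Lemma C.2 in~\cite{xu2023cross}) never uses the assumption $P_1\neq P_2$, so that the degenerate choice $P_1=P_2$ is admissible; this is the only point that needs a moment's verification.

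The remaining work is purely algebraic. After the specialization, the two inner expectations integrate the same integrand $Q^{\pi_2}_{\mathcal{M}}(s',\cdot)$ against the same one-step kernel $P(\cdot\mid s,a)$ over the dummy variable $s'$, so by linearity of expectation the shared draw $s'\sim P$ can be pulled out of the difference: $\mathbb{E}_{s'\sim P}[\mathbb{E}_{a'\sim\pi_1}[Q^{\pi_2}_{\mathcal{M}}(s',a')]]-\mathbb{E}_{s'\sim P}[\mathbb{E}_{a'\sim\pi_2}[Q^{\pi_2}_{\mathcal{M}}(s',a')]]=\mathbb{E}_{s'\sim P}\!\left[\mathbb{E}_{a'\sim\pi_1}[Q^{\pi_2}_{\mathcal{M}}(s',a')]-\mathbb{E}_{a'\sim\pi_2}[Q^{\pi_2}_{\mathcal{M}}(s',a')]\right]$. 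Folding this single draw $s'\sim P$ into the outer measure $\rho^{\pi_1}_{\mathcal{M}}(s,a)$ reproduces verbatim the right-hand side of Lemma~\ref{lemma:performance_difference}, which finishes the argument.

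As a self-contained alternative that does not cite Lemma~\ref{lemma:extended_telescoping}, I would run the standard Kakade--Langford telescoping directly: along a trajectory generated by $\pi_1$ in $\mathcal{M}$, insert the telescoping identity for $V^{\pi_2}_{\mathcal{M}}$, use that $r(s,a)+\gamma\,\mathbb{E}_{s'\sim P}[V^{\pi_2}_{\mathcal{M}}(s')]=Q^{\pi_2}_{\mathcal{M}}(s,a)$ (which holds precisely because both policies are evaluated under the shared dynamics $P$), and regroup the discounted sum according to the occupancy $\rho^{\pi_1}_{\mathcal{M}}$ and a one-step transition $s'\sim P$ to expose the next-state difference of $Q^{\pi_2}_{\mathcal{M}}$ under $\pi_1$ versus $\pi_2$. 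In either route there is no genuinely hard step: the main thing to get right is the legitimacy of the $P_1=P_2$ specialization together with the linearity collapse of the two inner integrals into one shared draw $s'\sim P$.
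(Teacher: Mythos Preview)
Your proposal is correct. The paper does not actually give an argument: its entire ``proof'' is the single sentence ``This is Lemma B.3 in \cite{lyu2024cross}.'' So any valid derivation you supply is already more than what the paper does.

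Your route---instantiating Lemma~\ref{lemma:extended_telescoping} with $\mathcal{M}_1=\mathcal{M}_2=\mathcal{M}$ and then pulling the common draw $s'\sim P$ out of the difference by linearity---is a clean and legitimate specialization; the extended telescoping identity is purely algebraic and nowhere requires $P_1\neq P_2$, so the degenerate choice is admissible, as you correctly flag. The alternative Kakade--Langford telescoping you sketch is essentially how the lemma is proved in \cite{lyu2024cross} to begin with, so either path lands in the same place. Compared with the paper's bare citation, your approach has the modest advantage of making the result self-contained within the appendix (both lemmas are already stated there), at the cost of one extra sentence of justification.
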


\begin{proof}
We have access to the offline  data $\mathcal{M}_{\text{off}}$ and the empirical behavioral policy $\pi_{\mathcal{D}_{\text{off}}}$, so we bound the performance between $\eta_{\mathcal{M}_{\text{on}}}(\pi)$ and $\eta_{\mathcal{M}_{\text{off}}}(\pi_{\mathcal{D}_{\text{off}}})$. We have
\begin{align}
\eta_{\mathcal{M}_{\text{on}}}(\pi) - \eta_{\pi_{\mathcal{D}_{\text{off}}}}(\pi)
= \underbrace{\eta_{\mathcal{M}_{\text{on}}}(\pi) - \eta_{\mathcal{M}_{\text{off}}}(\pi_{\mathcal{D}_{\text{off}}})}_{(a)}
+ \underbrace{\eta_{\pi_{\mathcal{D}_{\text{off}}}}(\pi_{\mathcal{D}_{\text{off}}}) - \eta_{\pi_{\mathcal{D}_{\text{off}}}}(\pi)}_{(b)}. 
\end{align}

To bound (a) term, we use Lemma ~\ref{lemma:extended_telescoping} in the second equality 

\begin{align*}
&\eta_{\mathcal{M}_{\text{on}}}(\pi) - \eta_{\mathcal{M}_{\text{off}}}(\mathcal{D}_{\text{off}})\\
&= -\left(\eta_{\mathcal{M}_{\text{off}}}(\mathcal{D}_{\text{off}}) - \eta_{\mathcal{M}_{\text{on}}}(\pi)\right) \\
&= -\frac{1}{1 - \gamma} \, \mathbb{E}_{(s,a) \sim \rho_{\mathcal{M}_{\text{off}}}^{\mathcal{D}_{\text{off}}}} \left[
\mathbb{E}_{s'_{\text{off}} \sim p_{\mathcal{M}_{\text{off}}}, \, a' \sim \pi_{\mathcal{D}_{\text{off}}}} \left[ Q^{\pi}_{\mathcal{M}_{\text{on}}}(s'_{\text{off}}, a') \right]
- \mathbb{E}_{s'_{\text{on}} \sim p_{\mathcal{M}_{\text{on}}}, \, a' \sim \pi} \left[ Q^{\pi}_{\mathcal{M}_{\text{on}}}(s'_{\text{on}}, a') \right]
\right] \\
&= -\frac{1}{1 - \gamma} \, \mathbb{E}_{(s,a) \sim \rho_{\mathcal{M}_{\text{off}}}^{\mathcal{D}_{\text{off}}}} \left[ \right. \\
&\quad \underbrace{
\left( 
\mathbb{E}_{s'_{\text{off}} \sim p_{\mathcal{M}_{\text{off}}}, \, a' \sim \pi_{\mathcal{D}_{\text{off}}}} \left[ Q^{\pi}_{\mathcal{M}_{\text{on}}}(s'_{\text{off}}, a') \right]
- \mathbb{E}_{s'_{\text{off}} \sim p_{\mathcal{M}_{\text{off}}}, \, a' \sim \pi} \left[ Q^{\pi}_{\mathcal{M}_{\text{on}}}(s'_{\text{off}}, a') \right]
\right)
}_{\text{(c)}} \\
&\quad + 
\underbrace{
\left(
\mathbb{E}_{s'_{\text{off}} \sim p_{\mathcal{M}_{\text{off}}}, \, a' \sim \pi} \left[ Q^{\pi}_{\mathcal{M}_{\text{on}}}(s'_{\text{off}}, a') \right]
- \mathbb{E}_{s'_{\text{on}} \sim p_{\mathcal{M}_{\text{on}}}, \, a' \sim \pi} \left[ Q^{\pi}_{\mathcal{M}_{\text{on}}}(s'_{\text{on}}, a') \right]
\right)
}_{\text{(d)}} \left. \vphantom{\mathbb{E}_{(s,a) \sim \rho_{\mathcal{M}_{\text{off}}}^{\pi_D}}} \right].
\end{align*}

To bound term (c), we use 
\begin{align*}
& 
\mathbb{E}_{s'_{\text{off}} \sim p_{\mathcal{M}_{\text{off}}}, \, a' \sim \pi_{\mathcal{D}_{\text{off}}}} \left[ Q^{\pi}_{\mathcal{M}_{\text{on}}}(s'_{\text{off}}, a') \right]
- \mathbb{E}_{s'_{\text{off}} \sim p_{\mathcal{M}_{\text{off}}}, \, a' \sim \pi} \left[ Q^{\pi}_{\mathcal{M}_{\text{on}}}(s'_{\text{off}}, a') 
\right] \\  
& \leq \mathbb{E}_{s'_{\text{off}} \sim p_{\mathcal{M}_{\text{off}}}} \left[
\sum_{a' \in \mathcal{A}} \left| \mathcal{D}_{\text{off}}(a' \mid s'_{\text{off}}) - \pi(a' \mid s'_{\text{off}}) \right| 
\cdot \left| Q^{\pi}_{\mathcal{M}_{\text{on}}}(s'_{\text{off}}, a') \right|
\right] \\
&\leq \frac{2 r_{\max}}{1 - \gamma} \, \mathbb{E}_{s'_{\text{off}} \sim p_{\mathcal{M}_{\text{off}}}} 
\left[ D_{\text{TV}} \left( \pi_{\mathcal{D}_{\text{off}}}(\cdot \mid s'_{\text{off}}) \,\|\, \pi(\cdot \mid s'_{\text{off}}) \right) \right],
\end{align*}
where the last inequality comes from the fact that $|Q^{\pi}_{\mathcal{M}_{\text{on}}}(s'_{\text{off}}, a')| \leq \frac{r_{\rm max}}{1-\gamma}$ and the definition of TV distance.

\begin{align*}
(d) &= \mathbb{E}_{s' \sim p_{\mathcal{M}_{\text{off}}},\, a' \sim \pi} \left[ Q^{\pi}_{\mathcal{M}_{\text{on}}}(s', a') \right]
- \mathbb{E}_{s' \sim p_{\mathcal{M}_{\text{on}}},\, a' \sim \pi} \left[ Q^{\pi}_{\mathcal{M}_{\text{on}}}(s', a') \right] \\
&= 
\int_{\mathcal{S}} \left( p_{\mathcal{M}_{\text{off}}}(s' \mid s, a) - p_{\mathcal{M}_{\text{on}}}(s' \mid s, a) \right)
\left( \sum_{a'} \pi(a' \mid s') Q^{\pi}_{\mathcal{M}_{\text{on}}}(s', a') \right) ds' \\
&\leq \frac{r_{\max}}{1 - \gamma} \int_{\mathcal{S}} \left|p_{\mathcal{M}_{\text{off}}}(s' \mid s, a) - p_{\mathcal{M}_{\text{on}}}(s' \mid s, a) \right|ds' \\
&= \frac{2 r_{\max}}{1 - \gamma} \left[ D_{\text{TV}} \left( p_{\mathcal{M}_{\text{off}}}(\cdot \mid s, a) \,\|\, p_{\mathcal{M}_{\text{on}}}(\cdot \mid s, a) \right) \right],
\end{align*}
where the inequality comes from the fact that $Q^{\pi}_{\mathcal{M}_{\text{on}}}(s', a') $ weighted by probability is bounded by $\frac{r_{\rm max}}{1-\gamma}$, the upper bound for all Q-values.

Combine the bounds for (c) and (d), we obtain a bound for the (a) term:
\begin{align*}
\eta_{\mathcal{M}_{\text{on}}}(\pi) - \eta_{\mathcal{M}_{\text{off}}}(\pi_{\mathcal{D}_{\text{off}}})
&\geq 
- \frac{2 r_{\max}}{(1 - \gamma)^2} \, 
\mathbb{E}_{(s,a) \sim \rho^{\pi_{\mathcal{D}_{\text{off}}}}_{\mathcal{M}_{\text{off}}}, \, s' \sim p_{\mathcal{M}_{\text{off}}}} 
\left[ D_{\text{TV}} \left( \pi_{\mathcal{D}_{\text{off}}}(\cdot \mid s') \,\|\, \pi(\cdot \mid s') \right) \right] \\
&\quad 
- \frac{2 r_{\max}}{(1 - \gamma)^2} \, 
\mathbb{E}_{(s,a) \sim \rho^{\pi_{\mathcal{D}_{\text{off}}}}_{\mathcal{M}_{\text{off}}}} 
\left[ D_{\text{TV}} \left( p_{\mathcal{M}_{\text{off}}}(\cdot \mid s,a) \,\|\, p_{\mathcal{M}_{\text{on}}}(\cdot \mid s,a) \right) \right].
\end{align*}
Now we try to bound term (b).
\begin{align*}
&\eta_{\mathcal{M}_{\text{off}}}(\pi_{\mathcal{D}_{\text{off}}}) - \eta_{\mathcal{M}_{\text{off}}}(\pi)\\
&= \frac{1}{1 - \gamma} \, \mathbb{E}_{(s,a) \sim \rho^{\pi_{\mathcal{D}_{\text{off}}}}_{\mathcal{M}_{\text{off}}}, \, s' \sim p_{\mathcal{M}_{\text{off}}}(\cdot \mid s,a)}
\left[ \mathbb{E}_{a' \sim \pi_{\mathcal{D}_{\text{off}}}} \left[ Q^{\pi}_{\mathcal{M}_{\text{off}}}(s', a') \right]
- \mathbb{E}_{a' \sim \pi} \left[ Q^{\pi}_{\mathcal{M}_{\text{off}}}(s', a') \right] \right] \\
&\geq - \frac{1}{1 - \gamma} \, \mathbb{E}_{(s,a) \sim \rho^{\pi_{\mathcal{D}_{\text{off}}}}_{\mathcal{M}_{\text{off}}}, \, s' \sim p_{\mathcal{M}_{\text{off}}}(\cdot \mid s,a)}
\left| \mathbb{E}_{a' \sim \pi_{\mathcal{D}_{\text{off}}}} \left[ Q^{\pi}_{\mathcal{M}_{\text{off}}}(s', a') \right]
- \mathbb{E}_{a' \sim \pi} \left[ Q^{\pi}_{\mathcal{M}_{\text{off}}}(s', a') \right] \right| \\
&\geq - \frac{1}{1 - \gamma} \, \mathbb{E}_{(s,a) \sim \rho^{\pi_{\mathcal{D}_{\text{off}}}}_{\mathcal{M}_{\text{off}}}, \, s' \sim p_{\mathcal{M}_{\text{off}}}(\cdot \mid s,a)}
\left| \sum_{a' \in \mathcal{A}} \left( \pi_{\mathcal{D}_{\text{off}}}(a' \mid s') - \pi(a' \mid s') \right)
Q^{\pi}_{\mathcal{M}_{\text{off}}}(s', a') \right| \\
&\geq - \frac{r_{\max}}{(1 - \gamma)^2} \, \mathbb{E}_{(s,a) \sim \rho^{\pi_{\mathcal{D}_{\text{off}}}}_{\mathcal{M}_{\text{off}}}, \, s' \sim p_{\mathcal{M}_{\text{off}}}(\cdot \mid s,a)}
\left| \sum_{a' \in \mathcal{A}} \left( \pi_{\mathcal{D}_{\text{off}}}(a' \mid s') - \pi(a' \mid s') \right) \right| \\
&= - \frac{2 r_{\max}}{(1 - \gamma)^2} \, \mathbb{E}_{(s,a) \sim \rho^{\pi_{\mathcal{D}_{\text{off}}}}_{\mathcal{M}_{\text{off}}}, \, s' \sim p_{\mathcal{M}_{\text{off}}}(\cdot \mid s,a)}
\left[ D_{\text{TV}} \left( \pi_{\mathcal{D}_{\text{off}}}(\cdot \mid s') \,\|\, \pi(\cdot \mid s') \right) \right],
\end{align*}
 where the first equality is a direct application of Lemma ~\ref{lemma:performance_difference}.
Combine the bound for term (a) and (b), we conclude that 
 \begin{align*}
\eta_{\mathcal{M}_{\text{on}}}(\pi) - \eta_{\mathcal{M}_{\text{off}}}(\pi) \geq\;
& - 2C_1 \, \mathbb{E}_{(s,a) \sim \rho_{\mathcal{M}}^{\pi_{\mathcal{D}_{\text{off}}}}, s' \sim p_{\text{off}}} \left[ D_{\mathrm{TV}}(\pi(\cdot |s') \parallel \pi_{\mathcal{D}_{\text{off}}}(\cdot | s')) \right] \\
& - C_1 \, \mathbb{E}_{(s,a) \sim \rho_{\mathcal{M}}^{\pi_{\mathcal{D}_{\text{off}}}}} \left[ D_{\mathrm{TV}}(p_{\text{on}}(\cdot | s,a) \parallel p_{\text{off}}(\cdot|s,a)) \right],
\end{align*}
where $C_1 = \frac{2r_{\max}}{(1 - \gamma)^2}$.
\end{proof}

\section{Proof of Theorem ~\ref{theorem:compound}}

\begin{definition}
     For any distributions $p_0, p_1$ on $R^d$ with finite second moments, define $D_2(p_0, p_1):= E||X_1-X_0||_2^2, X_0\sim p_0, X_1 \sim p_1$ independent. 
\end{definition}

\compound* 
\begin{proof}[Proof]

We consider \emph{linear-path} flow matching (FM) trained with squared loss.
A single training example is generated by: $t \sim \mathrm{Unif}[0,1]$; $X_0 \sim p_0$ (the \emph{start} distribution), independently of $t$; $X_1 \sim p_{\rm on}$, independently of $(t,X_0)$; The interpolation $X_t := (1-t) X_0 + t X_1$ and the label $Y := X_1 - X_0 \in \mathbb{R}^d$.

Let $\mathcal{H}$ be a hypothesis class of (measurable) vector fields $v:\mathbb{R}^d \times [0,1]\to \mathbb{R}^d$ and define the population risk
\begin{equation}
\label{eq:risk-def-new}
R(v) := \mathbb{E}\!\left[\, \norm{ v(X_t,t) - Y }_2^2 \,\right],
\end{equation}
where the expectation is over the sampling mechanism above.
Let
\begin{equation}
\label{eq:bayes-def-new}
v^\star(x,t) := \mathbb{E}\!\left[\, Y \,\middle|\, X_t=x,\, t \,\right]
\end{equation}
denote the Bayes (population) minimizer.
Given $n$ i.i.d.\ samples, let $\hat v$ be any empirical risk minimizer  over $\mathcal{H}$ for the squared loss.

\medskip
\noindent\textbf{Assumptions.} \textbf{A1} (\emph{Uniform boundedness}) There exists $B\in(0,\infty)$ such that
  $\sup_{v\in\mathcal{H}} \sup_{x\in\mathbb{R}^d,\,t\in[0,1]} \norm{v(x,t)}_2 \le B$.
  This ensures integrability and controls the Lipschitz constants used below.  


\paragraph{FM as regression; Bayes predictor and excess-risk identity.}
By definition,
\[
R(v) \;=\; \mathbb{E}\,\norm{v(X_t,t) - Y}_2^2,
\quad
Y = X_1 - X_0.
\]
For any measurable $v$, the \emph{Bayes} (population) minimizer for the squared loss is the conditional mean
\[
v^\star(x,t) \;=\; \mathbb{E}\!\left[\, Y \,\middle|\, X_t=x,\, t\,\right].
\]
A standard identity for squared loss (we derive it fully) is
\begin{equation}
\label{eq:excess-identity-new}
R(v) - R(v^\star) \;=\; \mathbb{E}\,\norm{\, v(X_t,t) - v^\star(X_t,t) \,}_2^2.
\end{equation}
\emph{Derivation of \eqref{eq:excess-identity-new}.}
Expand and add/subtract $v^\star(X_t,t)$:
\begin{align*}
R(v)
&= \mathbb{E}\,\norm{\,v(X_t,t) - v^\star(X_t,t) + v^\star(X_t,t) - Y\,}_2^2\\
&= \mathbb{E}\,\norm{v - v^\star}_2^2
   \;+\;
   2\,\mathbb{E}\,\langle v - v^\star,\, v^\star - Y \rangle
   \;+\;
   \mathbb{E}\,\norm{v^\star - Y}_2^2,
\end{align*}
where all functions are evaluated at $(X_t,t)$ but notationally suppressed for readability.
Condition on $(X_t,t)$: by definition, $\mathbb{E}[Y\mid X_t,t] = v^\star(X_t,t)$, hence
$\mathbb{E}[\,v^\star - Y \mid X_t,t\,] = 0$ and the cross term vanishes after taking expectations.
Therefore
\[
R(v) = \mathbb{E}\,\norm{v - v^\star}_2^2 + R(v^\star),
\]
which rearranges to \eqref{eq:excess-identity-new}.

  In Lemma~\ref{lem:gen-gap-proof-new}, we prove that there exists an absolute constant $c>0$ such that, for all $\delta\in(0,1)$, with probability at least $1-\delta$ (over the sample draw),
  \begin{equation}
  \label{eq:gen-gap-new}
  R(\hat v) \;\le\; R(v^\star) \;+\; c\,\Big( B + \sqrt{\mathbb{E}\,\norm{Y}_2^2} \Big)\,\Gamma_{n,\delta},
  \qquad
  \Gamma_{n,\delta} := \mathfrak{R}_n(\mathcal{H}) \;+\; \sqrt{\tfrac{\log(1/\delta)}{n}},
  \end{equation}
  where $\mathfrak{R}_n(\mathcal{H})$ is the (data-independent) Rademacher complexity of $\mathcal{H}$.
   In short: symmetrization $+$ vector contraction $+$ Lipschitzness of the squared loss on the $B$-ball yields \eqref{eq:gen-gap-new}.)

    Subtract $R(v^\star)$ on both sides of ~\eqref{eq:gen-gap-new} and apply \eqref{eq:excess-identity-new} with $v=\hat v$:
    \begin{equation}
    \label{eq:excess-bound-pre-new}
    \mathbb{E}\,\norm{\hat v - v^\star}_2^2
    \;\le\;
    c\,\Big(B + \sqrt{\mathbb{E}\,\norm{Y}_2^2}\Big)\,\Gamma_{n,\delta}.
    \end{equation}

\paragraph{Identify $\mathbb{E}\,\norm{Y}_2^2$ as $D_2(p_0,p_{\rm on})$ and linearize the square root.}
By independence of $X_0\!\sim p_0$ and $X_1\!\sim p_{\rm on}$,
\begin{equation}
\label{eq:Y-second-moment-new}
\mathbb{E}\,\norm{Y}_2^2 \;=\; \mathbb{E}\,\norm{X_1 - X_0}_2^2 \;=:\; D_2(p_0,p_{\rm on}).
\end{equation}

By applying Lemma \ref{lemma:D_2_W_2}, we have
\begin{equation}
\label{eq:Y-second-moment-new}
\sqrt{\mathbb{E}\,\norm{Y}_2^2}  \leq \sqrt{\Tr(\Sigma_{\rm on})+ C_{\rm TR}} + W_2(p_0, p_{\rm on}).
\end{equation}

Therefore, we obtain
\begin{align}
\mathbb{E}\,\norm{\hat v - v^\star}_2^2
&\le c\,\Big(B + \sqrt{\Tr(\Sigma_{\rm on})+ C_{\rm TR}} + W_2(p_0, p_{\rm on}) \Big)\,\Gamma_{n,\delta}
\end{align}

This gives the following master bound \eqref{eq:master-bound-new} with $C_0 := c(B+\sqrt{\tr(\Sigma_{\rm on}) + C_{\rm TR}}$ and $C_1 := c$.

\medskip
\noindent\textbf{Master bound.}
Under \textbf{A1}, for any start $p_0$ (with fixed target $p_{\rm on}$), with probability at least $1-\delta$,
\begin{equation}
\label{eq:master-bound-new}
\mathbb{E}\,\norm{\hat v - v^\star}_2^2
\;\le\;
\underbrace{C_0\,\Gamma_{n,\delta}}_{\text{$p_0$-independent}}
\;+\;
\underbrace{C_1\,W_2(p_0,p_{\rm on})\,\Gamma_{n,\delta}}_{\text{depends on the start }p_0},
\qquad
C_0 := c(B+\sqrt{\tr(\Sigma_{\rm on}) + C_{\rm TR}}),\;\; C_1 := c.
\end{equation}
\medskip
\noindent\textbf{Composite vs.\ direct.}
Consider two trainings sharing the same online $p_{\rm on}$ and class $\mathcal{H}$:
\begin{align}
\textbf{(Composite step)}&\quad p_0 \;=\; \hat p_{\rm off}(\cdot\mid s,a),
\label{eq:comp-case-new}
\\
\textbf{(Direct step)}&\quad p_0 \;=\; p_G \;=\; \mathcal{N}(0, I_d).
\label{eq:direct-case-new}
\end{align}
Applying \eqref{eq:master-bound-new} to \eqref{eq:comp-case-new} and \eqref{eq:direct-case-new} yields
\begin{align}
\mathbb{E}\,\norm{\hat v_{\mathrm{comp}} - v^\star_{\mathrm{comp}}}_2^2
&\;\le\; C_0\,\Gamma_{n,\delta} + C_1\,W_2\!\big(\hat p_{\rm off}, p_{\rm on}\big)\,\Gamma_{n,\delta},
\label{eq:comp-bound-new}
\\
\mathbb{E}\,\norm{\hat v_G - v^\star_G}_2^2
&\;\le\; C_0\,\Gamma_{n,\delta} + C_1\,W_2\!\big(p_G, p_{\rm on}\big)\,\Gamma_{n,\delta}.
\label{eq:direct-bound-new}
\end{align}
\paragraph{When is the composite bound strictly tighter?}
Applying \eqref{eq:master-bound-new} to $p_0=\hat p_{\rm off}$ (composite) and $p_0=p_G$ (direct) gives \eqref{eq:comp-bound-new} and \eqref{eq:direct-bound-new}.
Since $C_0$ and $\Gamma_{n,\delta}$ are identical across the two trainings (they depend on $(B,c, \Sigma_{\rm on}, C_{\rm TR})$, $n$, $\delta$, and $\mathcal{H}$, but not on $p_0$), the \emph{only} difference in the right-hand sides is the factor $W_2(\cdot,p_{\rm on})$.

Therefore the composite bound \eqref{eq:comp-bound-new} is \emph{strictly smaller} than the direct bound \eqref{eq:direct-bound-new} if and only if
\begin{equation}
\label{eq:closeness-D2-new}
W_2\!\big(\hat p_{\rm off}, p_{\rm on}\big) \;<\; W_2\!\big(p_G, p_{\rm on}\big).
\end{equation}
\end{proof}

\begin{remark}[Optional $D_2$ strengthening]
One can analogously replace $W_2(p_0,p_{\rm on})$ by $D_2(p_0,p_{\rm on})$ when bounding \ref{eq:excess-bound-pre-new}, using the standard inequality trick $\sqrt{x}\le \tfrac12(x+1)$ for all $x \geq 0$. We use $W_2$ metric because it's more commonly adopted in the literature, but $D_2$ leads to a tighter bound since we no longer needs the constant $C_{\rm TR}$. We do not expand the $D_2$ case here, as the argument mirrors the $W_2$ case.
\end{remark}


\begin{lemma}[Generalization gap for squared loss under uniform boundedness]
\label{lem:gen-gap-proof-new}
Assume \textbf{A1}. Let $\ell_v(z) := \norm{v(x,t) - y}_2^2$ for $z=(x,t,y)$ generated as in the theorem.
Then there exists a positive constant $c>0$ such that, for any $\delta\in(0,1)$, with probability at least $1-\delta$ over an i.i.d.\ sample of size $n$,
\[
R(\hat v) \;\le\; R(v^\star) \;+\; c\,\Big( B + \sqrt{\mathbb{E}\,\norm{Y}_2^2} \Big)\,\Gamma_{n,\delta},
\qquad
\Gamma_{n,\delta} := \mathfrak{R}_n(\mathcal{H}) + \sqrt{\tfrac{\log(1/\delta)}{n}}.
\]
\emph{Proof.}
We sketch the standard steps and make constants explicit where needed:

\smallskip
\noindent\emph{(i) Symmetrization.}
Let $\hat R(v)$ denote the empirical risk. For ERM $\hat v$,
\[
R(\hat v) - R(v^\star) \;\le\; 2\,\sup_{v\in\mathcal{H}}\bigl( R(v)-\hat R(v) \bigr)
\]
up to negligible terms; this is standard (e.g., by a one-sided symmetrization plus a chaining step).
Thus it suffices to bound $\sup_{v\in\mathcal{H}}(R(v)-\hat R(v))$.

\smallskip
\noindent\emph{(ii) Lipschitz envelope via contraction.}
For fixed $(x,t,y)$ with $\norm{v(x,t)}\le B$ and any $u\in\mathbb{R}^d$ with $\norm{u}\le B$,
\[
\Big|\,\norm{u-y}^2 - \norm{v(x,t)-y}^2\,\Big|
= \big|\,\langle u-v(x,t),\, (u+v(x,t)-2y) \rangle \,\big|
\le 2\,(B+\norm{y})\,\norm{u-v(x,t)}.
\]
Thus, as a function of $u$, the map $u\mapsto \norm{u-y}^2$ is $(2(B+\norm{y}))$-Lipschitz on the $B$-ball.
Applying the vector contraction inequality to the class $\{v(\cdot,\cdot)\in\mathcal{H}\}$ gives
\[
\mathbb{E}\,\sup_{v\in\mathcal{H}}\bigl( R(v)-\hat R(v) \bigr)
\;\lesssim\;
\Big( \, \mathbb{E}\,[\,B+\norm{Y}\,] \, \Big)\, \mathfrak{R}_n(\mathcal{H}).
\]
By Cauchy–Schwarz and Jensen, $\mathbb{E}\norm{Y}\le\sqrt{\mathbb{E}\norm{Y}^2}$.

\smallskip
\noindent\emph{(iii) High-probability upgrade.}
A standard bounded differences (or Bernstein-type) argument for Lipschitz, sub-quadratic losses upgrades the expected sup-gap to a high-probability bound, adding the usual $\sqrt{\log(1/\delta)/n}$ term.
Collecting constants and using $\mathbb{E}\norm{Y}\le \sqrt{\mathbb{E}\norm{Y}^2}$ yields the claimed form with some absolute $c>0$:
\[
\sup_{v\in\mathcal{H}}\bigl( R(v)-\hat R(v) \bigr)
\;\le\;
c\,\Big( B + \sqrt{\mathbb{E}\norm{Y}^2} \Big)\,\Gamma_{n,\delta}.
\]
Since $\hat v$ is an ERM, $\hat R(\hat v)\le \hat R(v^\star)$, and therefore $R(\hat v)-R(v^\star)\le 2\sup_{v}(R(v)-\hat R(v))$ in the final step, absorbing the factor 2 into $c$.
\hfill$\square$
\end{lemma}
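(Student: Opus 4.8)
The plan is to prove this as a textbook excess-risk bound for empirical risk minimization under the squared loss, following the three classical stages of symmetrization, contraction, and concentration, while tracking how the unboundedness of the label $Y = X_1 - X_0$ propagates into the constants. Write $\hat R(v) := \tfrac1n\sum_{i}\norm{v(X^{(i)}_{t},t_i)-Y^{(i)}}_2^2$ for the empirical risk, with $\hat v$ an ERM over $\mathcal H$. The first step is to reduce $R(\hat v)-R(v^\star)$ to a uniform deviation. Using the ERM optimality $\hat R(\hat v)\le \hat R(v^\star)$ (reading $v^\star$ as realizable in $\mathcal H$, or else replacing $R(v^\star)$ by $\inf_{v\in\mathcal H}R(v)$ and carrying the approximation term separately), one decomposes $R(\hat v)-R(v^\star)\le \bigl(R(\hat v)-\hat R(\hat v)\bigr)+\bigl(\hat R(v^\star)-R(v^\star)\bigr)$, and bounds each summand by the two-sided uniform deviation, giving $R(\hat v)-R(v^\star)\le 2\sup_{v\in\mathcal H}\bigl|R(v)-\hat R(v)\bigr|$. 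It then suffices to control this supremum, first in expectation and then in probability.

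For the expectation, I would apply the ghost-sample symmetrization lemma to pass to the Rademacher complexity of the composed loss class $\ell\circ\mathcal H:=\{z\mapsto \norm{v(x,t)-y}_2^2:v\in\mathcal H\}$, yielding $\mathbb E\,\sup_{v\in\mathcal H}|R(v)-\hat R(v)|\le 2\,\mathbb E\,\mathfrak R_n(\ell\circ\mathcal H)$. The key step is to peel off the squared loss by a contraction argument. Under assumption \textbf{A1} every $v(x,t)$ lies in the $B$-ball, so for fixed $y$ the map $u\mapsto \norm{u-y}_2^2$ has gradient $2(u-y)$ and hence local Lipschitz constant $\sup_{\norm{u}_2\le B}\norm{2(u-y)}_2\le 2(B+\norm{y}_2)$ on that ball. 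Maurer's vector-valued contraction inequality then removes the (vector) loss and leaves $\mathfrak R_n(\mathcal H)$ of the hypothesis class itself, with the per-sample Lipschitz factor $2(B+\norm{Y}_2)$ entering; taking expectations and using Jensen/Cauchy--Schwarz, $\mathbb E\,\norm{Y}_2\le\sqrt{\mathbb E\,\norm{Y}_2^2}$, collapses this to the claimed prefactor, producing $\mathbb E\,\sup_{v\in\mathcal H}|R(v)-\hat R(v)|\lesssim \bigl(B+\sqrt{\mathbb E\,\norm{Y}_2^2}\bigr)\,\mathfrak R_n(\mathcal H)$.

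The final stage upgrades this in-expectation bound to a high-probability statement, producing the additive $\sqrt{\log(1/\delta)/n}$ term of $\Gamma_{n,\delta}$, and this is where I expect the main obstacle. Viewing $\Phi(z_{1:n}):=\sup_{v\in\mathcal H}|R(v)-\hat R(v)|$ as a function of the $n$ i.i.d.\ samples, a bounded-differences/McDiarmid argument changes $\Phi$ by at most $\tfrac1n\sup_v|\ell_v(z)-\ell_v(z')|$ when one sample is replaced. However, when the start distribution $p_0$ is Gaussian or otherwise heavy-tailed (the direct-flow case), $Y=X_1-X_0$ is unbounded, the squared loss is unbounded, and plain McDiarmid does not apply. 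The plan is to control this via tail assumptions on $\norm{Y}_2$: assuming $\norm{Y}_2$ is sub-Gaussian (or sub-exponential), either a truncation-plus-union-bound scheme or a Bernstein-type concentration for the Lipschitz, sub-quadratic loss yields a deviation of order $\bigl(B+\sqrt{\mathbb E\,\norm{Y}_2^2}\bigr)\sqrt{\log(1/\delta)/n}$, matching the scaling of the Rademacher term. Combining the two stages and absorbing the factor $2$ together with all universal constants into a single absolute $c>0$ gives $R(\hat v)-R(v^\star)\le c\,\bigl(B+\sqrt{\mathbb E\,\norm{Y}_2^2}\bigr)\,\Gamma_{n,\delta}$, as claimed. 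The genuinely delicate point is forcing the concentration term to come out with exactly the prefactor $B+\sqrt{\mathbb E\,\norm{Y}_2^2}$ rather than a larger moment of $\norm{Y}_2$, which is what necessitates a tail hypothesis going beyond the mere second moment that appears explicitly in the statement.
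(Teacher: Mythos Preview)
Your proposal is correct and follows essentially the same three-step route as the paper: ERM reduction to a uniform deviation, vector contraction with the per-sample Lipschitz constant $2(B+\norm{y}_2)$ followed by Jensen to reach $B+\sqrt{\mathbb{E}\norm{Y}_2^2}$, and a concentration upgrade for the $\sqrt{\log(1/\delta)/n}$ term. You are in fact more careful than the paper at the concentration step---the paper simply invokes ``a standard bounded differences (or Bernstein-type) argument for Lipschitz, sub-quadratic losses'' without addressing the unboundedness of $Y$ that you correctly flag as the delicate point.
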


\begin{lemma}\label{lemma:D_2_W_2}
    Let $\mu_\cdot$ and $\Sigma_\cdot$ denote mean and covariance, respectively. We show that if $\tr(\Sigma_0) \leq C_{\rm TR}$ for some $C_{\rm TR} > 0$, we have:
    $$\sqrt{D_2(p_0, p_1)} \leq \sqrt{\Tr(\Sigma_{1})+ C_{\rm TR}} + W_2(p_0, p_1)$$
\end{lemma}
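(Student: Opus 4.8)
The plan is to reduce the claim to an elementary second-moment identity for a difference of \emph{independent} random variables, and then invoke the standard fact that the $2$-Wasserstein distance dominates the displacement of the means. Write $\mu_0,\mu_1$ and $\Sigma_0,\Sigma_1$ for the means and covariances of $p_0,p_1$. First I would expand
\[
D_2(p_0,p_1)=\E\norm{X_1-X_0}_2^2
=\norm{\mu_0-\mu_1}_2^2+\tr(\Sigma_0)+\tr(\Sigma_1),
\qquad X_0\sim p_0,\ X_1\sim p_1\ \text{independent},
\]
obtained by inserting $\pm\mu_0$ and $\pm\mu_1$, expanding the square, and noting that every cross term containing a centered variable vanishes because the two variables are independent and centered (and the constant $\mu_0-\mu_1$ is deterministic).

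Next I would control the mean term using $W_2$. For \emph{any} coupling $\pi\in\Pi(p_0,p_1)$, Jensen's inequality gives $\E_\pi\norm{X_1-X_0}_2^2\ge\norm{\E_\pi[X_1-X_0]}_2^2=\norm{\mu_0-\mu_1}_2^2$; taking the infimum over couplings yields $\norm{\mu_0-\mu_1}_2^2\le W_2^2(p_0,p_1)$. Combining this with the hypothesis $\tr(\Sigma_0)\le C_{\rm TR}$ and the identity above gives
\[
D_2(p_0,p_1)\;\le\; W_2^2(p_0,p_1)+\Tr(\Sigma_1)+C_{\rm TR}.
\]

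Finally, I would take square roots of both sides and apply subadditivity of $t\mapsto\sqrt t$, i.e.\ $\sqrt{a+b}\le\sqrt a+\sqrt b$ for $a,b\ge 0$, with $a=W_2^2(p_0,p_1)$ and $b=\Tr(\Sigma_1)+C_{\rm TR}$, which produces exactly the asserted inequality $\sqrt{D_2(p_0,p_1)}\le\sqrt{\Tr(\Sigma_1)+C_{\rm TR}}+W_2(p_0,p_1)$. I do not expect a genuine obstacle here: the only points needing care are getting the independence-based second-moment decomposition right and applying Jensen in the correct direction so that $W_2$ \emph{upper}-bounds the mean gap. It is worth noting that the bound is deliberately loose — it replaces the exact identity $D_2=\norm{\mu_0-\mu_1}_2^2+\tr\Sigma_0+\tr\Sigma_1$ with a form in which $W_2(p_0,p_1)$ appears as a clean additive term, which is the shape required when this lemma is plugged into the proof of Theorem~\ref{theorem:compound}.
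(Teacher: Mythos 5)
Your proposal is correct and follows essentially the same route as the paper: the independence-based identity $D_2(p_0,p_1)=\|\mu_1-\mu_0\|_2^2+\tr(\Sigma_0)+\tr(\Sigma_1)$, the Jensen/coupling argument showing $\|\mu_1-\mu_0\|_2^2\le W_2^2(p_0,p_1)$, and subadditivity of the square root. The only cosmetic difference is the order of operations (you bound before taking the square root, the paper takes the square root of the exact identity first), which changes nothing.
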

\begin{proof}

Let $X_0\sim p_0$ and $X_1\sim p_1$ be independent with means $\mu_0,\mu_1$ and covariances $\Sigma_0,\Sigma_1$.
We compute
\begin{align*}
D_2(p_0,p_1)
&= \mathbb{E}\,\norm{X_1 - X_0}_2^2
= \mathbb{E}\,\norm{X_1}_2^2 + \mathbb{E}\,\norm{X_0}_2^2 - 2\,\mathbb{E}\,\langle X_1, X_0 \rangle.
\end{align*}
By independence, $\mathbb{E}\,\langle X_1, X_0\rangle = \langle \mathbb{E}X_1, \mathbb{E}X_0\rangle = \langle \mu_1,\mu_0\rangle$.
Moreover, for any random vector $Z$ with mean $\mu$ and covariance $\Sigma$, $\mathbb{E}\,\norm{Z}_2^2 = \norm{\mu}_2^2 + \tr(\Sigma)$.
Thus
\begin{align*}
D_2(p_0,p_1)
&= \big(\norm{\mu_1}_2^2 + \tr(\Sigma_1)\big) + \big(\norm{\mu_0}_2^2 + \tr(\Sigma_0)\big) - 2\,\langle \mu_1,\mu_0\rangle\\
&= \norm{\mu_1 - \mu_0}_2^2 + \tr(\Sigma_1 + \Sigma_0),
\end{align*}

Let $\Gamma(p_0, p_1)$ be all couplings of $p_0, p_1$. 
For any $\pi \in \Gamma(p_0, p_1)$ with $(X_0, X_1) \sim \pi$, we have
\[
\mathbb{E}_\pi \|X_1 - X_0\|_2^2 
\;\ge\; 
\big\|\mathbb{E}_\pi[X_1 - X_0]\big\|_2^2
\;=\;
\|\mu_1 - \mu_0\|_2^2,
\]
by Jensen’s inequality since $z \mapsto \|z\|_2^2$ is convex.
Taking the infimum over all couplings gives
\[
W_2^2(p_0, p_1)
= \inf_{\pi \in \Gamma(p_0, p_1)} \mathbb{E}_\pi \|X_1 - X_0\|_2^2
\;\ge\;
\|\mu_1 - \mu_0\|_2^2.
\]
Therefore, we have $\|\mu_1 - \mu_0\|_2^2 \leq W_2^2(p_0,p_1)$. Hence,
\begin{align*}
    \sqrt{D_2(p_0, p_1)} &= \sqrt{\norm{\mu_1 - \mu_0}_2^2 + \tr(\Sigma_1 + \Sigma_0)} \\
    & \leq \sqrt{\tr(\Sigma_0 + \Sigma_1)}+ \sqrt{\norm{\mu_1 - \mu_0}_2^2} \\
    & \leq \sqrt{\tr(\Sigma_1) + C_{\rm TR}} + W_2(p_0,p_1)
\end{align*}
This completes the proof.
\end{proof}
The assumption that $\Tr(\Sigma_0) \leq C_{\rm TR}$ makes sense because we assume bounded state space for all settings considered in our paper. 

\section{Proof of Theorem \ref{thm:pgap}}


\gap*
\begin{proof}[Proof]
Let's first list the assumptions:

\textbf{R1: Lipschitz reward.} For every $a\in\mathcal A$, $s\mapsto r(s,a)$ is $L_r$-Lipschitz.

\textbf{R2: Lipschitz dynamics.} For $m\in\{\rm off,\rm on\}$, $W_1\!\bigl(p_m(\cdot\mid s,a),p_m(\cdot\mid s',a)\bigr)\le L_p\,d(s,s')$ for all $s,s',a$.

\textbf{R3: Contraction.} $\gamma L_p<1$.

We proceed in four steps: (A) a policy-dependent model-gap bound; (B) a uniform high-probability W$_2$-gap estimation bound for the rectified flow (\(\alpha\)); (C) learning-gap bounds for $\pi_{\mathrm{bc}}$ and $\hat\pi$ (\(\beta\)); (D) assembly via the standard three-term decomposition.

\paragraph{Step A: Policy-dependent model-gap bound.}
For any fixed policy $\pi$, we cite Theorem~3.1 of~\cite{neufeld2023bounding} for the following bound. Under \textbf{R1}--\textbf{R3}, we have
\begin{align}
\left\| V^\pi_{\rm on} - V^\pi_{\rm off} \right\|_\infty &\leq 2L_r \Delta_{W_1} (1 + \gamma) \sum_{i=0}^{\infty} \gamma^i \sum_{j=0}^i (L_P)^j \nonumber \\
&= 2L_r \Delta_{W_1} (1 + \gamma) \sum_{j=0}^{\infty} (L_P)^j \sum_{i=j}^{\infty} \gamma^i \nonumber \\
&= 2L_r \Delta_{W_1} (1 + \gamma) \sum_{j=0}^{\infty} (L_P)^j \cdot \frac{\gamma^j}{1 - \gamma} \nonumber \\
&= \frac{2L_r \Delta_{W_1} (1 + \gamma)}{1 - \gamma} \sum_{j=0}^{\infty} (\gamma L_P)^j \nonumber \\
&= \frac{2L_r \Delta_{W_1} (1 + \gamma)}{(1 - \gamma)(1 - \gamma L_P)} \label{eq:bound}
\end{align}
Averaging over the initial state-distribution $\mu$ gives us a bound on what we call the model gap:
\begin{align*}
\bigl|\eta_{\mathcal{M}_{\rm on}}(\pi) - \eta_{\mathcal{M}_{\rm off}}(\pi)\bigr|
&=
\Bigl|\int_{\mathcal S}\bigl(V^\pi_{\rm on}(s)-V^\pi_{\rm off}(s)\bigr)\,\mu(ds)\Bigr| \\[0.5ex]
&\le
\int_{\mathcal S}\bigl|V^\pi_{\rm on}(s)-V^\pi_{\rm off}(s)\bigr|\,\mu(ds).\\
& \le \left\| V^\pi_{\rm on} - V^\pi_{\rm off} \right\|_{\infty} \\
& \leq \frac{2 L_r \Delta_{W_1} (1 + \gamma)}{(1 - \gamma)(1 - \gamma L_p)}.
\end{align*}

Moreover, the bounded state space assumption in Theorem ~\ref{theorem:compound} implies bounded second moment for both $p_{\rm off}$ and $p_{\rm on}$, so we have $W_1(p_{\rm off},p_{\rm on}) \leq W_2(p_{\rm off},p_{\rm on}) \ \forall p_{\rm off}, p_{\rm on}$. Hence, we obtain that 
\begin{align}
|\eta_{\mathcal{M}_{\rm on}}(\pi) - \eta_{\mathcal{M}_{\rm off}}(\pi)|
&\;\le\;
 \frac{2 L_r (1 + \gamma)}{(1 - \gamma)(1 - \gamma L_p)} \Delta_{W_1} \\
&\;\le\;
\frac{2 L_r (1 + \gamma)}{(1 - \gamma)(1 - \gamma L_p)} \Delta_{W_2}.\label{eq:term1-new}
\end{align}


\paragraph{Step B: FM-based uniform $W_2$ estimation bound (construction of $\alpha$).}
We need a high-probability \emph{uniform} control of the error of the rectified-flow $W_2$ estimator to ensure that thresholding by $\kappa$ actually caps the \emph{on-policy} mismatch for $\hat\pi$.

\smallskip
\noindent\emph{B.1. Flow-to-map error.}
Let $T$ be the Brenier OT map from $\hat{p}_{\rm off}(\cdot\mid s,a)$ to $p_{\rm on}(\cdot\mid s,a)$ (guaranteed by standard assumptions), and let $\widehat T$ be the terminal map obtained by integrating the learned FM velocity field $\hat v$ from $t=0$ to $t=1$. Under the linear path and Lipschitz regularity of $v^\star$ and $\hat v$ (Theorem~\ref{theorem:compound} assumptions), a stability argument (Gronwall) yields
\begin{equation}
\label{eq:terminal-from-velocity-new}
\mathbb{E}_{X_0\sim \hat{p}_{\rm off}}\bigl\|\widehat T(X_0)-T(X_0)\bigr\|_2^2
\;\le\; K_{\mathrm{stab}}\;\int_0^1 \mathbb{E}\bigl\|\hat v(X_t,t)-v^\star(X_t,t)\bigr\|_2^2\,dt,
\end{equation}
for some finite $K_{\mathrm{stab}}$ depending on Lipschitz constants of the flow.
By the “master bound” (Eq.~(7) in Theorem~\ref{theorem:compound}), uniformly over $t\in[0,1]$, with prob.\ $\ge 1-\delta$,
\begin{equation}
\label{eq:fm-excess-new}
\mathbb{E}\bigl\|\hat v-v^\star\bigr\|_2^2 \;\le\; C_0\,\Gamma_{N_{\rm on},\delta}
+ C_1\,W_2(\hat{p}_{\rm off},p_{\rm on})\,\Gamma_{N_{\rm on},\delta}.
\end{equation}
So we have
\begin{align}
\underbrace{\mathbb{E}\|\widehat T(X_0)-T(X_0)\|_2^2}_{\text{terminal map MSE}}
&\overset{\text{Gronwall}}{\le}
K_{\mathrm{stab}}\int_0^1 \underbrace{\mathbb{E}\|\hat v-v^\star\|_2^2}_{\text{FM excess risk at }t}dt\\
& \le
K_{\mathrm{stab}}\left(C_0 + C_1\,W_2(\hat{p}_{\rm off},p_{\rm on})\right) \Gamma_{N_{\rm on},\delta}
\end{align}

\smallskip
\noindent\emph{B.2. From map MSE to W$_2$ error.}
By the coupling construction $(\hat s,s)=(\widehat T(X_0),T(X_0))$ with $X_0\sim \hat{p}_{\rm off}$ and the triangle inequality for $W_2$,
\begin{equation}
\label{eq:w2-from-mse-new}
\sup_{(s,a)}\;\Bigl|\,\widehat{\Delta}_{W_2}(s,a)-\Delta_{W_2}(s,a)\,\Bigr|
\;\le\;
\sup_{(s,a)}\;W_2\bigl(\hat p_{\rm on},p_{\rm on}\bigr)
\end{equation}

Define the coupling $\gamma_{(s,a)}:=(T,\widehat T)_{\#}\hat{p}_{\rm off}(\cdot\mid s,a)$, i.e., if
$S:=T(X_0)$ and $\widehat S:=\widehat T(X_0)$ then $(S,\widehat S)\sim\gamma_{(s,a)}$ and
$\gamma_{(s,a)}\in\Pi\!\big(p_{\rm on}(\cdot\mid s,a),\hat p_{\rm on}(\cdot\mid s,a)\big)$.
By the definition of the $2$-Wasserstein distance,
\begin{align}
W_2^2\!\big(p_{\rm on}(\cdot\mid s,a),\hat p_{\rm on}(\cdot\mid s,a)\big)
&=\inf_{\gamma\in\Pi(p_{\rm on},\hat p_{\rm on})}\int\!\|x-y\|_2^2\,\mathrm d\gamma(x,y)\\
&\le\;\int\!\|x-y\|_2^2\,\mathrm d\gamma_{(s,a)}(x,y)
=\mathbb{E}\,\|\widehat T(X_0)-T(X_0)\|_2^2.    
\end{align}

Taking square roots gives, for each $(s,a)$,
\[
W_2\!\big(p_{\rm on}(\cdot\mid s,a),\hat p_{\rm on}(\cdot\mid s,a)\big)
\;\le\;\sqrt{\mathbb{E}\,\|\widehat T(X_0)-T(X_0)\|_2^2}.
\]
Finally, taking the supremum over $(s,a)$ preserves the inequality:
\begin{align}
\sup_{(s,a)}W_2\!\big(\hat p_{\rm on}(\cdot\mid s,a),p_{\rm on}(\cdot\mid s,a)\big)
\;\le\;
\sup_{(s,a)}\sqrt{\mathbb{E}\,\|\widehat T(X_0)-T(X_0)\|_2^2}.\label{eq:mse-map-new}    
\end{align}

Combining \eqref{eq:mse-map-new} and \eqref{eq:w2-from-mse-new} gives
\[
\sup_{(s,a)}\Bigl|\,\widehat{\Delta}_{W_2}(s,a)-\Delta_{W_2}(s,a)\,\Bigr|
\;\le\;\alpha_{N_{\rm on},\delta}:=\sqrt{K_{\mathrm{stab}}\left(C_0 + C_1\,W_2(\hat{p}_{\rm off},p_{\rm on})\right) \Gamma_{N_{\rm on},\delta}},
\]
with probability at least $1-\delta$.

\smallskip
\noindent\emph{B.3. Capping the on-policy gap for $\hat\pi$.}
By construction of the replacement rule, for any $(s,a)$ encountered during fine-tuning of $\hat\pi$ we have either
$\widehat{\Delta}_{W_2}(s,a)\le\kappa$ (kept offline sample) or we replaced it by an \emph{online} sample. In either case, on those $(s,a)$ we ensure
\[
\Delta_{W_2}(s,a)\;\le\;\widehat{\Delta}_{W_2}(s,a)+\alpha_{N_{\rm on},\delta}\;\le\;\kappa+\alpha_{N_{\rm on},\delta}.
\]
Therefore the \emph{on-policy} gap of $\hat\pi$ obeys
\begin{equation}
\label{eq:pi-hat-on-policy-gap}
\Delta_{W_2}^{(\hat\pi)}\;\le\;\kappa+\alpha_{N_{\rm on},\delta}.
\end{equation}

\paragraph{Step C: Learning-gap bounds for $\pi_{\mathrm{bc}}$ and $\widehat\pi$ (construction of $\beta$).}
Recall the three-term decomposition for any policy $\pi$:
\begin{equation}
\label{eq:3term-new}
\eta_{\rm on}(\pi_{\rm on}^\star)-\eta_{\rm on}(\pi)
=\underbrace{\bigl(\eta_{\rm on}(\pi_{\rm on}^\star)-\eta_{\rm off}(\pi_{\rm on}^\star)\bigr)}_{\text{model(a)}}
+\underbrace{\bigl(\eta_{\rm off}(\pi_{\rm on}^\star)-\eta_{\rm off}(\pi)\bigr)}_{\text{learning(b)}}
+\underbrace{\bigl(\eta_{\rm off}(\pi)-\eta_{\rm on}(\pi)\bigr)}_{\text{model(c)}}.
\end{equation}
We now bound the \emph{learning} term (b) by ERM generalization.

Policies are learned by ERM on a surrogate imitation loss $\mathcal L(\pi)$ over a policy class $\Pi$ with Rademacher complexity $\mathfrak R_N(\Pi)$. There exists a calibration constant $C_{\mathrm{val}}$ (depends on concentrability/Lipschitzness; fixed for the class) such that, for any $\pi$,
\begin{equation}
\label{eq:value-calibration-new}
\eta_{\rm off}(\pi_{\rm on}^\star)-\eta_{\rm off}(\pi)\;\le\;C_{\mathrm{val}}\Bigl(\mathcal L(\pi)-\inf_{\pi'\in\Pi}\mathcal L(\pi')\Bigr).
\end{equation}
Moreover, for datasets of sizes $N_{\rm off},N_{\mathrm{mod}}$, with probability at least $1-\delta$,
\begin{equation}
\label{eq:policy-gen-new}
\mathcal L(\hat\pi)-\inf_{\pi'\in\Pi}\mathcal L(\pi') \;\le\; C_\Pi\Bigl(\mathfrak R_{N}(\Pi)+\sqrt{\tfrac{\log(1/\delta)}{N}}\Bigr),
\qquad N\in\{N_{\rm off},N_{\mathrm{mod}}\},
\end{equation}
for an absolute constant $C_\Pi$.

Let $\pi_{\mathrm{bc}}$ be the behavior cloning policy trained on the \emph{offline} dataset of size $N_{\rm off}$, and let $\hat\pi$ be ERM on the modified dataset (size $N_{\mathrm{mod}}$): for any \((s,a)\) encountered, if the FM-estimated gap satisfies $\widehat{\Delta}_{W_2}(s,a)>\kappa$, replace the \emph{offline} $(s,a,s')$ sample by an \emph{online} $(s,a,s'_{\rm on})$ sample (collecting $N_{\rm on}$ such online transitions), leaving all others unchanged. Denote $N_{\mathrm{mod}}$ the resulting (modified) dataset size.

By calibration \eqref{eq:value-calibration-new} and the generalization inequality \eqref{eq:policy-gen-new},
with probability at least $1-\delta$ we have
\[
\eta_{\rm off}(\pi_{\rm on}^\star)-\eta_{\rm off}(\pi_{\mathrm{bc}})
\;\le\;
C_{\mathrm{val}}\;C_\Pi\Bigl(\mathfrak R_{N_{\rm off}}(\Pi)+\sqrt{\tfrac{\log(1/\delta)}{N_{\rm off}}}\Bigr),
\]
and similarly
\[
\eta_{\rm off}(\pi_{\rm on}^\star)-\eta_{\rm off}(\hat\pi)
\;\le\;
C_{\mathrm{val}}\;C_\Pi\Bigl(\mathfrak R_{N_{\mathrm{mod}}}(\Pi)+\sqrt{\tfrac{\log(1/\delta)}{N_{\mathrm{mod}}}}\Bigr).
\]
By a union bound (probability $1-2\delta$) and adding these two upper bounds we obtain a common envelope
\begin{equation}
\label{eq:beta-bound-new}
\max\Bigl\{\eta_{\rm off}(\pi_{\rm on}^\star)-\eta_{\rm off}(\pi_{\mathrm{bc}}),\;\eta_{\rm off}(\pi_{\rm on}^\star)-\eta_{\rm off}(\hat\pi)\Bigr\}
\;\le\;
\beta,
\end{equation}
with 
\begin{align}
\beta:=C_{\mathrm{val}}C_\Pi\!\left(\mathfrak R_{N_{\rm off}}(\Pi)+\mathfrak R_{N_{\mathrm{mod}}}(\Pi)+\sqrt{\tfrac{\log(2/\delta)}{N_{\rm off}}}+\sqrt{\tfrac{\log(2/\delta)}{N_{\mathrm{mod}}}}\right).    
\end{align}

\paragraph{Step D: Assemble bounds and compare.}
Incorporate \eqref{eq:term1-new} and \eqref{eq:beta-bound-new} to \eqref{eq:3term-new} with $\pi=\pi_{\mathrm{bc}}$, we have with high probability,
\[
\eta_{\rm on}(\pi_{\rm on}^\star)-\eta_{\rm on}(\pi_{\mathrm{bc}})\le \bigl(\eta_{\rm on}(\pi_{\rm on}^\star)-\eta_{\rm off}(\pi_{\rm on}^\star)\bigr) + \beta + \frac{2 L_r (1 + \gamma)}{(1 - \gamma)(1 - \gamma L_p)} \Delta_{W_2}
\]

Incorporate \eqref{eq:term1-new} and \eqref{eq:beta-bound-new} to \eqref{eq:3term-new} with $\pi=\hat{\pi}$, we have with high probability

\[
\eta_{\rm on}(\pi_{\rm on}^\star)-\eta_{\rm on}(\widehat\pi)
\;\le\; \bigl(\eta_{\rm on}(\pi_{\rm on}^\star)-\eta_{\rm off}(\pi_{\rm on}^\star)\bigr) + \beta + \frac{2 L_r (1 + \gamma)}{(1 - \gamma)(1 - \gamma L_p)} (\kappa+\alpha_{N_{\rm on},\delta})
\]

The difference between these two upper bounds is:

\[
 \frac{2 L_r (1 + \gamma)}{(1 - \gamma)(1 - \gamma L_p)} \bigl(\Delta_{W_2}-\kappa-\alpha_{N_{\rm on},\delta}\bigr).
\]
Absorbing $K_{\mathrm{stab}}$ into $C_0$ and $C_1$ gets the result.
\end{proof}

\section{Proof of Proposition~\ref{prop:latent_optimal_new}}
\label{appdx:prop_latent}

We provide a rigorous justification for Proposition~\ref{prop:latent_optimal_new}.
The key technical point is that the augmented optimal transport (OT) problem on triples $(s,a,s')$
with cost
\[
c_\eta\big((s,a,x),(s',a',x')\big)
=
\|x-x'\|_2^2
+
\eta\big(\|s-s'\|_2^2+\|a-a'\|_2^2\big)
\]
reduces, as $\eta\to\infty$, to \emph{conditional} quadratic OT on the next-state variable $x=s'$
given each fixed $(s,a)$, provided that the $(s,a)$-marginals match.

\subsection{Formal Proposition}

\begin{proposition}[Shared-latent coupling recovers conditional $W_2$]
\label{prop:w2_shared_latent_rigorous}

\smallskip
\noindent\textbf{Setup.}
Let $\mathcal{Y}:=\mathcal{S}\times\mathcal{A}$ denote the conditioning space and
$\mathcal{X}:=\mathcal{S}$ denote the next-state space.
Let $\mu$ be a probability measure on $\mathcal{Y}$.
Let $\alpha,\beta$ be probability measures on $\mathcal{Y}\times\mathcal{X}$ such that their
$\mathcal{Y}$-marginals coincide:
\begin{equation}
\pi_{sa}\#\alpha=\pi_{sa}\#\beta=\mu,
\label{eq:shared_marginal_mu}
\end{equation}
where $\pi_{sa}(y,x)=y$.
Assume $\alpha$ and $\beta$ admit disintegrations with respect to $\mu$:
\begin{equation}
\alpha(dy,dx)=\mu(dy)\,\alpha_y(dx),
\qquad
\beta(dy,dx)=\mu(dy)\,\beta_y(dx),
\label{eq:disintegration_alpha_beta}
\end{equation}
where $y=(s,a)\in\mathcal{Y}$ and $x\in\mathcal{X}$.

\smallskip
\noindent\textbf{Augmented OT problem.}
For $\eta>0$, define the augmented quadratic cost on $(\mathcal{Y}\times\mathcal{X})^2$ by
\begin{equation}
c_\eta\big((y,x),(y',x')\big)
:=
\|x-x'\|_2^2+\eta\|y-y'\|_2^2,
\label{eq:cost_eta_compact}
\end{equation}
where $\|y-y'\|_2^2:=\|s-s'\|_2^2+\|a-a'\|_2^2$.
Define the corresponding OT value
\begin{equation}
W_{2,\eta}^2(\alpha,\beta)
:=
\inf_{\gamma\in\Pi(\alpha,\beta)}
\int c_\eta\big((y,x),(y',x')\big)\,d\gamma,
\label{eq:W2_eta_def}
\end{equation}
where $\Pi(\alpha,\beta)$ is the set of couplings on $(\mathcal{Y}\times\mathcal{X})^2$
with marginals $\alpha$ and $\beta$.

\smallskip
\noindent\textbf{Assumptions (for $\eta\to\infty$ reduction).}
Assume $\alpha$ and $\beta$ have finite second moments, that $W_2(\alpha_y,\beta_y)<\infty$ for $\mu$-a.e.\ $y$,
and that the conditional squared Wasserstein cost is $\mu$-integrable:
\begin{equation}
\int_{\mathcal{Y}} W_2^2(\alpha_y,\beta_y)\,\mu(dy)<\infty.
\label{eq:integrable_conditional_W2}
\end{equation}

\smallskip
\noindent\textbf{Claim 1 (conditional decomposition).}
As $\eta\to\infty$,
\begin{equation}
\lim_{\eta\to\infty} W_{2,\eta}^2(\alpha,\beta)
=
\int_{\mathcal{Y}} W_2^2(\alpha_y,\beta_y)\,\mu(dy).
\label{eq:eta_limit_value}
\end{equation}

\smallskip
\noindent\textbf{Claim 2 (structure of limiting couplings).}
Let $\gamma_\eta\in\arg\min_{\gamma\in\Pi(\alpha,\beta)}\int c_\eta\,d\gamma$ be any sequence of optimal couplings.
Then there exists a subsequence $\gamma_{\eta_n}\Rightarrow \gamma_\infty$ weakly such that:
\begin{enumerate}[leftmargin=*, itemsep=2pt]
\item[(i)] $\gamma_\infty$ is supported on the diagonal set $\{(y,x;y,x'):\ y=y'\}$;
\item[(ii)] $\gamma_\infty$ admits a disintegration
\[
\gamma_\infty(dy,dx,dx')=\mu(dy)\,\gamma_y(dx,dx'),
\]
where $\gamma_y\in\Pi(\alpha_y,\beta_y)$ is $W_2$-optimal for $\mu$-a.e.\ $y$.
\end{enumerate}

\smallskip
\noindent\textbf{OT-FM implication (per fixed conditioning $y$).}
Fix any $y=(s,a)$ and let $p_0=\mathcal{N}(0,I)$.
Assume the offline flow satisfies
\[
\psi^{\rm off}_\theta(\cdot,1\mid y)\#p_0=\alpha_y .
\]
For $\eta>0$ and $k\in\mathbb{N}$, let $\psi^{{\rm on},k,\eta}_\phi(\cdot,2\mid y)$ denote the online flow
obtained by running OT-FM \emph{to convergence} with minibatch size $k$, where each iteration computes an
optimal transport plan between two empirical measures of size $k$ under the \emph{augmented cost}~\eqref{eq:cost_eta_compact}.

Assume the standard OT-FM regularity conditions hold for this fixed $y$ (e.g., those in
Theorem~4.2 of~\cite{pooladian2023multisample}), including:
\begin{enumerate}[leftmargin=*, itemsep=2pt]
\item[(A1)] $\alpha_y$ and $\beta_y$ have bounded support;
\item[(A2)] $\alpha_y$ admits a density and the quadratic OT map from $\alpha_y$ to $\beta_y$ exists and is continuous;
\item[(A3)] the minibatch OT plan is solved exactly at each iteration.
\end{enumerate}
Then, for $\mu$-a.e.\ $y$, we have the two-limit convergence
\begin{equation}
\lim_{\eta\to\infty}\;\lim_{k\to\infty}\;
\mathbb{E}_{x_0\sim p_0}
\Big[
\big\|
\psi^{\rm off}_\theta(x_0,1\mid y)
-
\psi^{{\rm on},k,\eta}_\phi\!\big(\psi^{\rm off}_\theta(x_0,1\mid y),2\mid y\big)
\big\|_2^2
\Big]
=
W_2^2(\alpha_y,\beta_y).
\label{eq:fixed_sa_limit}
\end{equation}

\end{proposition}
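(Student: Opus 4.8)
The plan is to establish Claim~1 by a matching two‑sided estimate: a coupling‑based upper bound that already equals $\int_{\mathcal Y} W_2^2(\alpha_y,\beta_y)\,\mu(dy)$ for \emph{every} $\eta$, and a lower bound obtained from compactness of the optimal couplings together with lower semicontinuity of the transport functional. Claim~2 then drops out of the lower‑bound argument. The OT‑FM implication follows by taking the two limits in the stated order: first $k\to\infty$ (consistency of minibatch OT‑FM, via the analysis behind Theorem~4.2 of~\cite{pooladian2023multisample}), then $\eta\to\infty$ (the reduction of the augmented cost to conditional quadratic OT supplied by Claims~1--2).

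\textbf{Step 1 (upper bound).} Using~\eqref{eq:shared_marginal_mu}--\eqref{eq:disintegration_alpha_beta}, I would define the ``glued'' coupling
\[
\gamma^{\mathrm{g}}(dy,dx,dy',dx') \;:=\; \mu(dy)\,\delta_{y}(dy')\,\gamma_y(dx,dx'),
\]
where $\gamma_y\in\Pi(\alpha_y,\beta_y)$ is a quadratic‑OT plan between the conditionals. One checks $\gamma^{\mathrm{g}}\in\Pi(\alpha,\beta)$, and since $\gamma^{\mathrm g}$ is concentrated on $\{y=y'\}$ the term $\eta\|y-y'\|_2^2$ vanishes, so for all $\eta>0$
\[
W_{2,\eta}^2(\alpha,\beta)\ \le\ \int c_\eta\,d\gamma^{\mathrm g}\ =\ \int_{\mathcal Y}\!\Big(\int\|x-x'\|_2^2\,d\gamma_y\Big)\mu(dy)\ =\ \int_{\mathcal Y} W_2^2(\alpha_y,\beta_y)\,\mu(dy)\ =:\ C,
\]
which is finite by~\eqref{eq:integrable_conditional_W2}.

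\textbf{Step 2 (lower bound and structure of limits).} Let $\gamma_\eta$ be optimal for $W_{2,\eta}^2$. Step~1 gives $\int\|x-x'\|_2^2\,d\gamma_\eta\le C$ and $\eta\int\|y-y'\|_2^2\,d\gamma_\eta\le C$, hence $\int\|y-y'\|_2^2\,d\gamma_\eta\le C/\eta\to0$. The marginals $\alpha,\beta$ are fixed with finite second moments, so $\{\gamma_\eta\}$ is tight; extract $\gamma_{\eta_n}\Rightarrow\gamma_\infty$. Since $\|y-y'\|_2^2$ is nonnegative and lower semicontinuous, Portmanteau gives $\int\|y-y'\|_2^2\,d\gamma_\infty\le\liminf_n\int\|y-y'\|_2^2\,d\gamma_{\eta_n}=0$, so $\gamma_\infty$ lives on $\{y=y'\}$; its marginals being $\alpha,\beta$, it disintegrates as $\gamma_\infty(dy,dx,dx')=\mu(dy)\,\gamma_y(dx,dx')$ with $\gamma_y\in\Pi(\alpha_y,\beta_y)$ for $\mu$‑a.e.\ $y$. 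Applying Portmanteau again to $\|x-x'\|_2^2$ and using $c_{\eta_n}\ge\|x-x'\|_2^2$,
\[
\liminf_n W_{2,\eta_n}^2(\alpha,\beta)\ \ge\ \int\|x-x'\|_2^2\,d\gamma_\infty\ =\ \int_{\mathcal Y}\!\Big(\int\|x-x'\|_2^2\,d\gamma_y\Big)\mu(dy)\ \ge\ \int_{\mathcal Y} W_2^2(\alpha_y,\beta_y)\,\mu(dy)=C.
\]
Combined with Step~1 this yields $\lim_{\eta\to\infty}W_{2,\eta}^2(\alpha,\beta)=C$ (every subsequence admits a further subsequence with value tending to $C$), i.e.~\eqref{eq:eta_limit_value}; and equality throughout the last display, together with the pointwise bound $\int\|x-x'\|_2^2\,d\gamma_y\ge W_2^2(\alpha_y,\beta_y)$, forces $\gamma_y$ to be $W_2$‑optimal for $\mu$‑a.e.\ $y$, which is Claim~2.

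\textbf{Step 3 (OT-FM implication) and main obstacle.} Fix $y$. For fixed $\eta$, as $k\to\infty$ the minibatch‑OT plans in~\eqref{eq:target_flow_training} converge to the population augmented‑OT coupling $q^*_\eta\in\Pi(\alpha,\beta)$, and under (A1)--(A3) the argument of Theorem~4.2 in~\cite{pooladian2023multisample} shows the learned online flow converges to the rectified flow transporting along $q^*_\eta$. Sending $\eta\to\infty$ and invoking Claims~1--2 — with the limiting coupling unique under (A2) by Brenier's theorem, so the whole $\eta$‑family converges — $q^*_\eta$ becomes $y$‑diagonal with $y$‑conditional $\gamma_y=(\mathrm{id},T_y)\#\alpha_y$, $T_y$ the Brenier map from $\alpha_y$ to $\beta_y$. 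Then the online flow conditioned on $y$ transports $\alpha_y$ — which is exactly the output law of $\psi^{\rm off}_\theta(\cdot,1\mid y)$ by hypothesis — to $\beta_y$ along $\gamma_y$, so its terminal map is $T_y$ and the composite displacement converges to $\E_{x\sim\alpha_y}\|x-T_y(x)\|_2^2=W_2^2(\alpha_y,\beta_y)$, which is~\eqref{eq:fixed_sa_limit}. \textbf{The main obstacle} is Step~2 and its interaction with Step~3: besides making the measurable disintegration and the Tonelli interchange rigorous, one must handle the fact that for finite $\eta$ the coupling $q^*_\eta$ does \emph{not} preserve $y$, so at composite‑flow test time the online network is fed $\alpha_y$ while it was trained on a $y$‑conditional of $q^*_\eta$ that differs from $\alpha_y$; this mismatch vanishes only in the limit, which is precisely why the limits must be taken as $k\to\infty$ first and $\eta\to\infty$ second.
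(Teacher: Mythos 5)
Your proposal is correct and follows essentially the same route as the paper's proof: the same diagonal ("glued") coupling for the upper bound, the same tightness-plus-vanishing-$y$-cost argument for the lower bound and the diagonal support of the limit (your Portmanteau step is the paper's truncation-plus-monotone-convergence step), and the same two-limit OT-FM argument ending in the conditional Brenier map. Your closing remark about the train/test mismatch at finite $\eta$ is exactly the issue the paper handles via the barycentric projection $b_\eta(y,x)$ and its $L^2$ weak limit, so no substantive difference remains.
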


\subsection{Proof}

\begin{proof}
We prove~\eqref{eq:eta_limit_value} by matching an upper bound and a lower bound, which also implies
that any limiting optimal coupling concentrates on the diagonal set $\{y=y'\}$.
Finally, we connect the resulting conditional OT structure to OT-FM trained with the augmented matching
cost~\eqref{eq:cost_eta_compact}.

\medskip
\noindent\textbf{Step 1 (Upper bound via a diagonal coupling).}
Since $\alpha$ and $\beta$ share the same $\mathcal{Y}$-marginal $\mu$, we may couple them by matching
the conditioning variable $y$ exactly.
For $\mu$-a.e.\ $y$, let $q_y^\star\in\Pi(\alpha_y,\beta_y)$ be an optimal coupling achieving
$W_2^2(\alpha_y,\beta_y)$.
A measurable selection $y\mapsto q_y^\star$ can be chosen (see, e.g.,
\cite[Chapter~5]{villani2008optimal} or \cite[Section~1.4]{santambrogio2015optimal}).
Define $\bar\gamma\in\Pi(\alpha,\beta)$ by
\begin{equation}
\bar\gamma(dy,dx,dy',dx')
:=
\mu(dy)\,q_y^\star(dx,dx')\,\delta_y(dy'),
\label{eq:diag_coupling}
\end{equation}
where $\delta_y$ is the Dirac measure at $y$.
Because $\bar\gamma$ is supported on $\{y=y'\}$, for every $\eta>0$,
\[
\int c_\eta\,d\bar\gamma
=
\int_{\mathcal{Y}} \int_{\mathcal{X}\times\mathcal{X}} \|x-x'\|_2^2\,dq_y^\star(x,x')\,\mu(dy)
=
\int_{\mathcal{Y}} W_2^2(\alpha_y,\beta_y)\,\mu(dy).
\]
Therefore,
\begin{equation}
\limsup_{\eta\to\infty} W_{2,\eta}^2(\alpha,\beta)
\le
\int_{\mathcal{Y}} W_2^2(\alpha_y,\beta_y)\,\mu(dy).
\label{eq:limsup}
\end{equation}

\medskip
\noindent\textbf{Step 2 ($y$-transport vanishes as $\eta\to\infty$).}
Let $\gamma_\eta$ be an optimal coupling attaining $W_{2,\eta}^2(\alpha,\beta)$ and define
\[
C_\star := \int_{\mathcal{Y}} W_2^2(\alpha_y,\beta_y)\,\mu(dy) < \infty.
\]
By Step~1, $W_{2,\eta}^2(\alpha,\beta)\le C_\star$, hence
\[
\int c_\eta\,d\gamma_\eta = W_{2,\eta}^2(\alpha,\beta)\le C_\star.
\]
Since $c_\eta((y,x),(y',x'))\ge \eta\|y-y'\|_2^2$, it follows that
\begin{equation}
\int \|y-y'\|_2^2\,d\gamma_\eta \le \frac{C_\star}{\eta}.
\label{eq:y_penalty_vanishes}
\end{equation}

\medskip
\noindent\textbf{Step 3 (Extract a weak limit supported on the diagonal).}
Because $\alpha$ and $\beta$ have finite second moments, $\Pi(\alpha,\beta)$ is tight, hence
$\{\gamma_\eta\}_{\eta>0}$ is tight.
Thus there exist $\eta_n\to\infty$ and $\gamma_\infty$ such that
$\gamma_{\eta_n}\Rightarrow \gamma_\infty$ weakly.
Let $f(y,y'):=\|y-y'\|_2^2$ and define the bounded continuous truncation $f_M:=\min\{f,M\}$.
Since $f_M$ is bounded and continuous,
\[
\int f_M\,d\gamma_{\eta_n}\to \int f_M\,d\gamma_\infty.
\]
Moreover, $0\le f_M\le f$, so by~\eqref{eq:y_penalty_vanishes},
\[
\int f_M\,d\gamma_\infty
=
\lim_{n\to\infty}\int f_M\,d\gamma_{\eta_n}
\le
\lim_{n\to\infty}\int f\,d\gamma_{\eta_n}
=
0.
\]
Letting $M\to\infty$ and applying monotone convergence yields $\int f\,d\gamma_\infty=0$,
which implies $y=y'$ holds $\gamma_\infty$-a.s.
Hence $\gamma_\infty$ is supported on $\{(y,x;y,x'):\ y=y'\}$.

\medskip
\noindent\textbf{Step 4 (Lower bound and convergence of values).}
Since $\gamma_\infty$ is supported on $\{y=y'\}$ and has marginals $\alpha$ and $\beta$,
it disintegrates as
\[
\gamma_\infty(dy,dx,dx')
=
\mu(dy)\,\gamma_y(dx,dx'),
\]
where $\gamma_y\in\Pi(\alpha_y,\beta_y)$ for $\mu$-a.e.\ $y$.
Let $g(x,x'):=\|x-x'\|_2^2$ and $g_M:=\min\{g,M\}$.
Because $0\le g_M \le c_{\eta_n}$,
\[
W_{2,\eta_n}^2(\alpha,\beta)
=
\int c_{\eta_n}\,d\gamma_{\eta_n}
\ge
\int g_M\,d\gamma_{\eta_n}.
\]
By weak convergence, $\int g_M\,d\gamma_{\eta_n}\to \int g_M\,d\gamma_\infty$.
Letting $M\to\infty$ and using monotone convergence yields
\[
\liminf_{n\to\infty} W_{2,\eta_n}^2(\alpha,\beta)
\ge
\int g\,d\gamma_\infty
=
\int_{\mathcal{Y}}
\left(\int \|x-x'\|_2^2\,d\gamma_y(x,x')\right)\mu(dy)
\ge
\int_{\mathcal{Y}} W_2^2(\alpha_y,\beta_y)\,\mu(dy).
\]
Combining this with~\eqref{eq:limsup} proves~\eqref{eq:eta_limit_value}.
Moreover, the last inequality can only be tight if
\[
\int \|x-x'\|_2^2\,d\gamma_y(x,x') = W_2^2(\alpha_y,\beta_y)
\quad\text{for $\mu$-a.e.\ $y$},
\]
which implies $\gamma_y$ is a $W_2$-optimal coupling between $\alpha_y$ and $\beta_y$ for $\mu$-a.e.\ $y$.
This proves Claim~2.

\medskip
\noindent\textbf{Step 5 (OT-FM with augmented matching cost; then $\eta\to\infty$).}
Fix any conditioning value $y=(s,a)$ and recall that the offline flow satisfies
\[
\psi^{\rm off}_\theta(\cdot,1\mid y)\#p_0=\alpha_y.
\]
Let $x_1 := \psi^{\rm off}_\theta(x_0,1\mid y)$ with $x_0\sim p_0$, so that $x_1\sim\alpha_y$.

\smallskip
\noindent\emph{(a) OT-FM population limit for a fixed $\eta$.}
Fix $\eta>0$.
Our OT-FM procedure builds minibatch OT matchings between \emph{joint} samples
$(y_i,x_i)\sim\alpha$ and $(y'_j,x'_j)\sim\beta$ using the augmented cost $c_\eta$,
and then trains a \emph{conditional} map $x\mapsto \psi^{{\rm on},k,\eta}_\phi(x,2\mid y)$
by regressing onto the matched $x'$ values.

Let $\gamma_\eta\in\Pi(\alpha,\beta)$ be an optimal coupling attaining $W_{2,\eta}^2(\alpha,\beta)$.
Disintegrate $\gamma_\eta$ with respect to its first marginal $\alpha$:
\[
\gamma_\eta(dy,dx,dy',dx')
=
\alpha(dy,dx)\,\Gamma^\eta_{y,x}(dy',dx'),
\]
where $\Gamma^\eta_{y,x}$ is a probability kernel on $\mathcal{Y}\times\mathcal{X}$.
Define the induced conditional law of $x'$ given $(y,x)$ by marginalizing out $y'$:
\[
\Lambda^\eta_{y,x}(dx')
:=
\int_{\mathcal{Y}} \Gamma^\eta_{y,x}(dy',dx').
\]
The associated barycentric projection is
\[
b_\eta(y,x)
:=
\int_{\mathcal{X}} x'\,\Lambda^\eta_{y,x}(dx').
\]
Under assumptions (A1)--(A3) and OT-FM convergence guarantees
(e.g., Theorem~4.2 of~\cite{pooladian2023multisample}, applied with the quadratic cost in the augmented space),
training OT-FM to convergence with minibatch size $k$ yields the population limit
\begin{equation}
\lim_{k\to\infty}
\mathbb{E}_{(y,x)\sim\alpha}
\Big[
\big\|
\psi^{{\rm on},k,\eta}_\phi(x,2\mid y)-b_\eta(y,x)
\big\|_2^2
\Big]
=
0.
\label{eq:OTFM_barycentric_limit}
\end{equation}
In particular, by conditioning on $y$, for $\mu$-a.e.\ fixed $y$,
\begin{equation}
\lim_{k\to\infty}
\mathbb{E}_{x\sim\alpha_y}
\Big[
\big\|
\psi^{{\rm on},k,\eta}_\phi(x,2\mid y)-b_\eta(y,x)
\big\|_2^2
\Big]
=
0.
\label{eq:OTFM_barycentric_limit_fixed_y}
\end{equation}

\smallskip
\noindent\emph{(b) Send $\eta\to\infty$ (reduction to conditional OT).}
Take a subsequence $\eta_n\to\infty$ such that $\gamma_{\eta_n}\Rightarrow\gamma_\infty$ as in Step~3.
Let $b_{\eta_n}$ be the barycentric projection induced by $\gamma_{\eta_n}$ as above.
Since $\beta$ has finite second moment, $\{b_{\eta_n}\}_n$ is uniformly bounded in $L^2(\alpha)$, so by extracting
a further subsequence (not relabeled) we may assume
\begin{equation}
b_{\eta_n}\rightharpoonup b_\infty \quad \text{weakly in } L^2(\alpha).
\label{eq:barycentric_L2_weak}
\end{equation}
Because $\gamma_\infty$ is supported on $\{y=y'\}$ and disintegrates as
\[
\gamma_\infty(dy,dx,dx')=\mu(dy)\,\gamma_y(dx,dx'),
\qquad \gamma_y\in\Pi(\alpha_y,\beta_y)\ \text{$W_2$-optimal},
\]
the conditional law of $x'$ given $(y,x)$ under $\gamma_\infty$ is supported on the conditional coupling $\gamma_y$.

Under assumption (A2), for $\mu$-a.e.\ $y$, the optimal quadratic-cost coupling $\gamma_y$
is induced by a Monge map $T_y:\mathcal{X}\to\mathcal{X}$ (i.e., $x'=T_y(x)$ $\gamma_y$-a.s.).
Hence the limiting barycentric projection satisfies
\[
b_\infty(y,x)=T_y(x)
\quad \text{for $\alpha$-a.e.\ $(y,x)$}.
\]
Therefore, for $\mu$-a.e.\ $y$,
\begin{equation}
\mathbb{E}_{x\sim\alpha_y}\!\big[\|x-b_\infty(y,x)\|_2^2\big]
=
\mathbb{E}_{x\sim\alpha_y}\!\big[\|x-T_y(x)\|_2^2\big]
=
W_2^2(\alpha_y,\beta_y).
\label{eq:barycentric_equals_W2_fixed}
\end{equation}

Finally, combining \eqref{eq:OTFM_barycentric_limit_fixed_y} with the identification $b_\infty(y,x)=T_y(x)$
and \eqref{eq:barycentric_equals_W2_fixed}, we obtain for $\mu$-a.e.\ $y$,
\[
\lim_{\eta\to\infty}\;\lim_{k\to\infty}\;
\mathbb{E}_{x\sim\alpha_y}
\Big[
\big\|
x-\psi^{{\rm on},k,\eta}_\phi(x,2\mid y)
\big\|_2^2
\Big]
=
W_2^2(\alpha_y,\beta_y).
\]
Substituting $x=\psi^{\rm off}_\theta(x_0,1\mid y)$ with $x_0\sim p_0$ yields~\eqref{eq:fixed_sa_limit}.
This completes the proof.
\end{proof}

\begin{tcolorbox}[colback=gray!6,colframe=gray!60,
title={Intuition: Why do we need matched $(s,a)$ marginals?}]
When $(s,a)$ is continuous, it is unlikely to observe many online transitions $s'_{\rm on}$
with exactly the same conditioning pair $(s,a)$.
As a result, the OT-FM objective must couple two sets of transitions whose conditioning variables
differ across samples.
To recover the conditional Wasserstein distance
$W_2\!\left(p_{\rm off}(\cdot\mid s,a),\,p_{\rm on}(\cdot\mid s,a)\right)$ for a fixed $(s,a)$,
we add $(s,a)$ into the OT cost:
\[
\|s'_{\rm off}-s'_{\rm on}\|_2^2
+
\eta\!\left(\|s_{\rm off}-s_{\rm on}\|_2^2+\|a_{\rm off}-a_{\rm on}\|_2^2\right).
\]
As $\eta$ grows, transporting mass across different $(s,a)$ becomes prohibitively expensive,
so the optimal coupling is forced to match samples with similar $(s,a)$ and transport occurs
primarily in the next-state space.

This reduction requires the two empirical measures to share the same marginal distribution over $(s,a)$;
otherwise, the coupling is forced to move mass in $(s,a)$ even when $\eta$ is large.
In practice, we ensure this by sampling $(s_{\rm off},a_{\rm off})$ from the \textbf{online} dataset $\mathcal{D}_{\rm on}$
to construct the offline-generated batch (then sampling $s'_{\rm off}\sim p_{\rm off}(\cdot\mid s,a)$ using the offline flow),
while sampling online transitions $(s_{\rm on},a_{\rm on},s'_{\rm on})$ independently from $\mathcal{D}_{\rm on}$.
\end{tcolorbox}

\section{Additional Experimental Results}
\label{sec:appendix:additional_results}

To further increase the task difficulty, we manually modified the benchmark configurations to amplify the dynamic differences beyond the given settings. For example, in the morphology task, we drastically reduced thigh segment size—making the legs much shorter, in the kinematic task, we almost immobilized the back-thigh joint by severely limiting its rotation range. As shown in Table~\ref{tab:extreme}, our method clearly outperforms all other baselines in 5 out of 6 settings, while achieving ties in the remaining one. 

\begin{table}[h]
\centering
\resizebox{\linewidth}{!}{%
\begin{tabular}{llcccccc}
\toprule
Dataset & Task Name & SAC & BC-SAC & H2O & BC-VGDF & BC-PAR & Ours \\
\midrule
MR & HalfCheetah (Morphology)
& \colorbox{white}{$1239 \pm 107$}
& \colorbox{white}{$1896 \pm 357$}
& \colorbox{white}{$1294 \pm 153$}
& \colorbox{white}{$1342 \pm 252$}
& \colorbox{white}{$1967 \pm 123$}
& \colorbox{green!25}{$2267 \pm 280$} \\
MR & HalfCheetah (Kinematic)
& \colorbox{white}{$2511 \pm 476$}
& \colorbox{white}{$4275 \pm 129$}
& \colorbox{white}{$3787 \pm 205$}
& \colorbox{white}{$3821 \pm 187$}
& \colorbox{white}{$3751 \pm 113$}
& \colorbox{green!25}{$4448 \pm 227$} \\
\midrule
M & HalfCheetah (Morphology)
& \colorbox{white}{$1078 \pm 165$}
& \colorbox{white}{$1199 \pm 184$}
& \colorbox{white}{$1277 \pm 154$}
& \colorbox{white}{$1014 \pm 141$}
& \colorbox{white}{$1263 \pm 163$}
& \colorbox{green!25}{$1654 \pm 136$} \\
M & HalfCheetah (Kinematic)
& \colorbox{white}{$2331 \pm 392$}
& \colorbox{yellow!25}{$4650 \pm 169$}
& \colorbox{yellow!25}{$4572 \pm 84$}
& \colorbox{white}{$4207 \pm 108$}
& \colorbox{white}{$4304 \pm 231$}
& \colorbox{yellow!25}{$4651 \pm 89$} \\
\midrule
ME & HalfCheetah (Morphology)
& \colorbox{white}{$1115 \pm 207$}
& \colorbox{white}{$1067 \pm 208$}
& \colorbox{white}{$1034 \pm 206$}
& \colorbox{white}{$1048 \pm 202$}
& \colorbox{white}{$1196 \pm 203$}
& \colorbox{green!25}{$1834 \pm 193$} \\
ME & HalfCheetah (Kinematic)
& \colorbox{white}{$1843 \pm 1185$}
& \colorbox{white}{$2650 \pm 447$}
& \colorbox{white}{$3739 \pm 196$}
& \colorbox{white}{$3409 \pm 207$}
& \colorbox{white}{$3108 \pm 500$}
& \colorbox{green!25}{$4537 \pm 74$} \\
\bottomrule
\end{tabular}
}
\caption{Comparison of return under the extreme difficult settings.  MR = Medium Replay, M = Medium, ME = Medium Expert.
A cell is green if the method has the highest mean and improves over the second best by at least 2\%. Cells within 2\% of the top mean are marked in yellow.}
\label{tab:extreme}
\end{table}

\section{Experimental Details of Gym-MuJoCo}
\label{sec:appendix:exp-details}

In this section, we describe the detailed experimental setup as well as the hyperparameter setup used in this work.

\subsection{Environment Setting}

\subsubsection{Offline Dataset}

We use the MuJoCo datasets from D4RL~\cite{fu2020d4rl} as our offline data. These datasets are collected from continuous control environments in Gym~\cite{brockman2016openai}, simulated using the MuJoCo physics engine~\cite{todorov2012mujoco}. We focus on three benchmark tasks: \textit{HalfCheetah}, \textit{Hopper}, and \textit{Walker2d}, and evaluate across three dataset types: \textit{medium}, \textit{medium-replay}, and \textit{medium-expert}.

\begin{itemize}
    \item The \textit{medium} datasets consist of trajectories generated by an SAC policy trained for 1M steps and then early stopped.
    \item The \textit{medium-replay} datasets capture the replay buffer of a policy trained to the performance level of the medium agent.
    \item The \textit{medium-expert} datasets are formed by mixing equal proportions of medium and expert data (50-50).
\end{itemize}

\subsubsection{Kinematic Shift Tasks}

We use Kinematic Shift Tasks from the benchmark~\cite{lyu2024odrl}. We select most shift level 'hard' to make the tasks more challenging

$\bullet$ \textbf{HalfCheetah Kinematic Shift:} The rotation range of the foot joint is modified to be:

\begin{lstlisting}
<joint axis="0 1 0" damping="3" name="bfoot" pos="0 0 0" range="-.08 .157" stiffness="120" type="hinge"/>
<joint axis="0 1 0" damping="1.5" name="ffoot" pos="0 0 0" range="-.1 .1" stiffness="60" type="hinge"/>
\end{lstlisting}

$\bullet$ \textbf{Hopper Kinematic Shift:} the rotation range of the foot joint is modified from $[-45, 45]$ to $[-9, 9]$:

\begin{lstlisting}
<joint axis="0 -1 0" name="foot_joint" pos="0 0 0.1" range="-9 9" type="hinge"/>
\end{lstlisting}

$\bullet$ \textbf{Walker2D Kinematic Shift:} the rotation range of the foot joint is modified from $[-45, 45]$ to $[-9, 9]$:

\begin{lstlisting}
<joint axis="0 -1 0" name="foot_joint" pos="0 0 0.1" range="-9 9" type="hinge"/>
<joint axis="0 -1 0" name="foot_left_joint" pos="0 0 0.1" range="-9 9" type="hinge"/>
\end{lstlisting}

\subsubsection{Morphology  Shift Tasks}

We use Morphology Shift Tasks from the benchmark~\cite{lyu2024odrl}. We select most shift level 'hard' to make the tasks more challenging

$\bullet$ \textbf{HalfCheetah Morphology  Shift:} the front thigh size and the back thigh size are modified to be:
\begin{lstlisting}
<geom fromto="0 0 0.02 0 -0.02" name="bthigh" size="0.046" type="capsule"/>
<body name="fshin" pos="0.02 0 -0.02">
  <geom fromto="0 0 0 -.13 0 -.15" name="bshin" rgba="0.9 0.6 0.6 1" size="0.046" type="capsule"/>
</body>
<body name="bfoot" pos="-.13 0 -.15">
  <geom fromto="0 0 0 -.04 0 -0.05" name="fthigh" size="0.046" type="capsule"/>
</body>
<body name="fshin" pos="0 -.04 0 -0.05">
  <geom fromto="0 0 0 .11 0 -.13" name="fshin" rgba="0.9 0.6 0.6 1" size="0.046" type="capsule"/>
</body>
<body name="ffoot" pos=".11 0 -.13"/>
\end{lstlisting}

$\bullet$ \textbf{Hopper Morphology Shift:} the foot size is revised to be 0.4 times of that within the source domain:

\begin{lstlisting}
<geom friction="2.0" fromto="-0.052 0 0.1 0.104 0 0.1" name="foot_geom" size="0.024" type="capsule"/>
\end{lstlisting}

$\bullet$ \textbf{Walker2D Morphology Shift:} the leg size of the robot is revised to be 0.2 times of that in the source domain.

\begin{lstlisting}
<geom friction="0.9" fromto="0 0 1.05 0 0 0.2" name="thigh_geom" size="0.05" type="capsule"/>
<joint axis="0 -1 0" name="leg_joint" pos="0 0 0.2" range="-150 0" type="hinge"/>
<geom friction="0.9" fromto="0 0 0.2 0 0 0.1" name="leg_geom" size="0.04" type="capsule"/>
<geom friction="0.9" fromto="0 0 1.05 0 0 0.2" name="thigh_left_geom" rgba=".7 .3 .6 1" size="0.05" type="capsule"/>
<joint axis="0 -1 0" name="leg_left_joint" pos="0 0 0.2" range="-150 0" type="hinge"/>
<geom friction="0.9" fromto="0 0 0.2 0 0 0.1" name="leg_left_geom" rgba=".7 .3 .6 1" size="0.04" type="capsule"/>
\end{lstlisting}

\subsubsection{Friction Shift Tasks}
Following \cite{lyu2024odrl}, the friction shift is implemented by altering the $\mathsf{friction}$ attribute in the $\mathsf{geom}$ elements. The frictional components are adjusted to 5.0 times the frictional components in the offline environment. The following is an example for the Hopper robot.

\begin{lstlisting}[caption=Geometry Definitions for Walker2D, label=lst:walker2d-geom]
# torso
<geom friction="4.5" fromto="0 0 1.45 0 0 1.05" name="torso_geom" size="0.05" type="capsule"/>
# thigh
<geom friction="4.5" fromto="0 0 1.05 0 0 0.6" name="thigh_geom" size="0.05" type="capsule"/>
# leg
<geom friction="4.5" fromto="0 0 0.6 0 0 0.1" name="leg_geom" size="0.04" type="capsule"/>
# foot
<geom friction="10.0" fromto="-0.13 0 0.1 0.26 0 0.1" name="foot_geom" size="0.06" type="capsule"/>
\end{lstlisting}

\subsection{Implementation Details}

\noindent\textbf{BC-SAC:} This baseline leverages both offline  and online transitions for policy learning. Since learning from offline data requires conservatism while online data does not, we incorporate a behavior cloning term into the actor update of the SAC algorithm. Specifically, the critic is updated using standard Bellman loss on the combined offline and online datasets, and the actor is optimized as:
\begin{equation}
\mathcal{L}_{\text{actor}} = \lambda \cdot \mathbb{E}_{s \sim \mathcal{D}_{\text{off}} \cup \mathcal{D}_{\text{on}},\ a \sim \pi_\varphi(\cdot|s)} \left[ \min_{i=1,2} Q_{\varsigma_i}(s,a) - \alpha \log \pi_\varphi(\cdot|s) \right] + \mathbb{E}_{(s,a) \sim \mathcal{D}_{\text{off}},\ \hat{a} \sim \pi_\varphi(\cdot|s)} \left[ (a - \hat{a})^2 \right],
\label{eq:bc-sac}
\end{equation}
where \(\lambda = \frac{\omega}{\frac{1}{N} \sum_{(s_j, a_j)} \min_{i=1,2} Q_{\varsigma_i}(s_j, a_j)}\) and \(\omega \in \mathbb{R}^{+}\) is a normalization coefficient. We train BC-SAC for 400K gradient steps, collecting online data every 10 steps. We use the hyperparameters recommended in~\cite{lyu2024odrl}.

\noindent\textbf{H2O:} H2O~\cite{niu2022trust} trains domain classifiers to estimate dynamics gaps and uses them as importance sampling weights during critic training. It also incorporates a CQL loss to encourage conservatism. Since the original H2O is designed for the Online-Offline setting, we adapt the objective to the Offline-Online setting. The critic loss is:
\begin{align}
\mathcal{L}_{\text{critic}} = &\ \mathbb{E}_{\mathcal{D}_{\text{on}}} \left[ (Q_{\varsigma_i}(s,a) - y)^2 \right] + \mathbb{E}_{\mathcal{D}_{\text{off}}} \left[ \omega(s,a,s')(Q_{\varsigma_i}(s,a) - y)^2 \right] \nonumber \\
&+ \beta_{\text{CQL}} \left( \mathbb{E}_{s \sim \mathcal{D}_{\text{off}},\ \hat{a} \sim \pi_\varphi(\cdot|s)} \left[ \omega(s,a,s') Q_{\varsigma_i}(s,\hat{a}) \right] - \mathbb{E}_{\mathcal{D}_{\text{off}}} \left[ \omega(s,a,s') Q_{\varsigma_i}(s,a) \right] \right), \quad i \in \{1,2\},
\label{eq:h2o}
\end{align}
where \(\omega(s,a,s')\) is the dynamics-based importance weight, and \(\beta_{\text{CQL}}\) is the penalty coefficient. We set \(\beta_{\text{CQL}} = 10.0\), which performs better than the default 0.01. We reproduce H2O using the official codebase,\footnote{\url{https://github.com/t6-thu/H2O}} and adopt the suggested hyperparameters. H2O is trained for 40K environment steps, with 10 gradient updates per step.

\noindent\textbf{BC-VGDF:} BC-VGDF~\cite{xu2023cross} filters offline transitions whose estimated values align closely with those from the online environment. It trains an ensemble of dynamics models to predict next states from raw state-action pairs under the online dynamics. Each predicted next state is evaluated by the policy to obtain a value ensemble \(\{Q(s'_i, a'_i)\}_{i=1}^{M}\), forming a Gaussian distribution. A fixed percentage (\(\xi\%\)) of offline transitions with the highest likelihood under this distribution are retained. The critic loss is:
\begin{align}
\mathcal{L}_{\text{critic}} = &\ \mathbb{E}_{(s,a,r,s') \sim \mathcal{D}_{\text{on}}} \left[ \left( Q_{\varsigma_i}(s,a) - y \right)^2 \right] \nonumber \\
& + \mathbb{E}_{(s,a,r,s') \sim \mathcal{D}_{\text{off}}} \left[ \mathbf{1}\left( \Lambda(s,a,s') > \Lambda_{\xi\%} \right) \left( Q_{\varsigma_i}(s,a) - y \right)^2 \right], \quad i \in \{1,2\},
\label{eq:vgdf}
\end{align}
where \(\Lambda(s,a,s')\) denotes the fictitious value proximity (FVP), and \(\Lambda_{\xi\%}\) is the \(\xi\)-quantile threshold. VGDF also trains an exploration policy. Actor training includes a behavior cloning term, as in BC-SAC. We follow the official implementation,\footnote{\url{https://github.com/Kavka1/VGDF}} use the recommended hyperparameters, and train for 40K environment steps with 10 gradient updates per step.

\noindent\textbf{BC-PAR:} BC-PAR~\cite{lyu2024cross} addresses dynamics mismatch through representation mismatch, measured as the deviation between the encoded source state-action pair and its next state. It employs a state encoder \(f_\psi\) and a state-action encoder \(g_\xi\), both trained on the target domain. The encoder loss is:
\begin{equation}
    \mathcal{L}(\psi, \xi) = \mathbb{E}_{(s,a,s') \sim \mathcal{D}_{\text{on}}} \left[ \left( g_\xi(f_\psi(s), a) - \text{SG}(f_\psi(s')) \right)^2 \right],
    \label{eq:par-loss}
\end{equation}
where \(\text{SG}\) is the stop-gradient operator.  Rewards of the offline data are adjusted as:
\begin{equation}
    \hat{r}_{\text{PAR}} = r_{\text{off}} - \beta \cdot \left\| g_\xi(f_\psi(s_{\text{off}}), a_{\text{off}}) - f_\psi(s'_{\text{off}}) \right\|^2,
    \label{eq:par-reward}
\end{equation}
where \(\beta\) controls the penalty strength. The actor (\(\pi_\varphi\)) and critic (\(Q_{\varsigma_i}\)) are jointly trained using both offline and online data. Actor training includes a behavior cloning term, similar to BC-SAC. We implement BC-PAR using the official codebase,\footnote{\url{https://github.com/dmksjfl/PAR}} adopt the suggested hyperparameters, and train for 40K environment steps with 10 gradient updates per step.

\ours. When training the target flow, we use a quadratic cost function and employ the Python Optimal Transport (POT) library~\cite{flamary2021pot} to compute the optimal transport plan for each minibatch using the \texttt{exact} solver. Additional hyperparameters are provided in Table~\ref{tab:hyperparams} and Table~\ref{tab:flow-hyperparams}. Since the exploration bonus term is closely tied to properties of the environment—such as the state space, action space, and reward structure—it is expected that the optimal exploration strength \(\beta\) varies across tasks. We perform a sweep over \(\beta \in \{0.01, 0.1, 0.2\}\), and select the offline data selection ratio \(\xi\%\) from \(\{30\%, 50\%\}\).

\begin{table}[t]
\centering
\begin{tabular}{ll}
\toprule
\textbf{Hyperparameter} & \textbf{Value} \\
\midrule
Actor network architecture & (256, 256) \\
Critic network architecture & (256, 256) \\
Batch size & 128\\
Learning rate & \(3 \times 10^{-4}\) \\
Optimizer & Adam~\cite{kingma2015adam} \\
Discount factor $(\gamma)$ & 0.99 \\
Replay buffer size & \(10^6\) \\
Warmup steps & \(10^5\) \\
Activation & ReLU \\
Target update rate & \(5 \times 10^{-3}\) \\
SAC temperature coefficient (\(\alpha\)) & 0.2 \\
Maximum log standard deviation & 2 \\
Minimum log standard deviation & $-20$ \\
Normalization coefficient (\(\omega\)) & 5 \\
\bottomrule
\end{tabular}
\caption{Hyperparameters for RL training.}
\label{tab:hyperparams}
\end{table}

\begin{table}[h]
\centering
\begin{tabular}{@{}lcc@{}}
\toprule
\textbf{Hyperparameter}        & \textbf{Offline Flow} & \textbf{Online Flow} \\
\midrule
Number of hidden layers        & 6       & 6       \\
Hidden dimension               & 256     & 256     \\
Activation                     & ReLU    & ReLU    \\
Batch size                     & 1024    & 1024    \\
ODE solver method              & Euler   & Euler   \\
ODE solver steps               & 10      & 10      \\
Training frequency             & ---     & 5000    \\
Optimizer                      & Adam    & Adam    \\
\bottomrule
\end{tabular}
\caption{Hyperparameter setup for the offline and online flows}
\label{tab:flow-hyperparams}
\end{table}

\section{Experimental Details of Wildlife Conservation}
\label{app:simulator}

\begin{figure}[t]
    \centering
\includegraphics[width=0.6\linewidth]{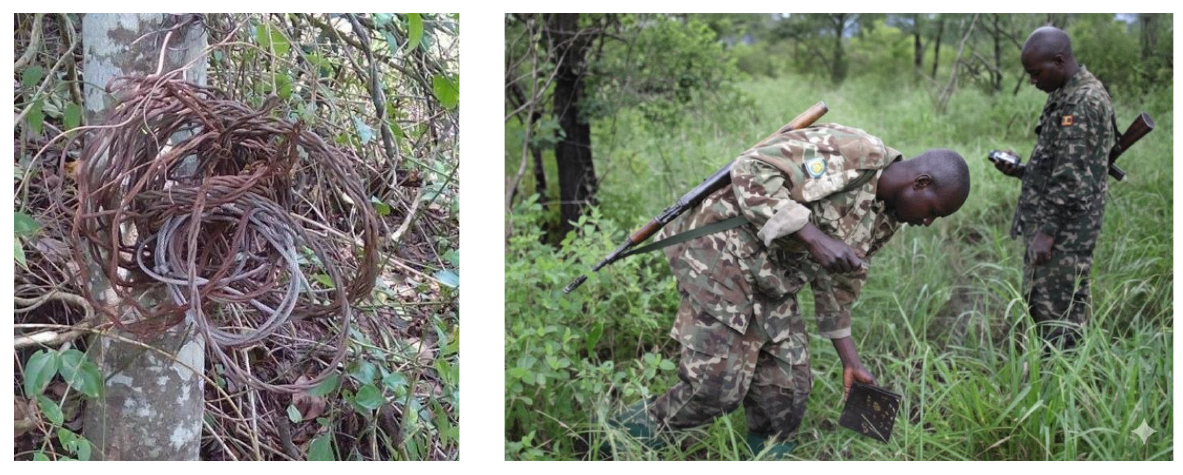}
    \caption{Well-hidden snares and rangers conducting a patrol to locate them. Photos credit: Uganda Wildlife Authority}
    \label{fig:snares}
\end{figure}

We use the green security simulator in \cite{xu2021robust}. The model is a Markov decision process with state, action, transitions, and a terminal return. We summarize the parts needed to reproduce our experiments.

\paragraph{State and action.}
At time \(t\), the state is
\[
s_t = \big(a_{t-1},\, w_{t-1},\, t\big), \qquad s_0 = (0,\, w_0,\, 0),
\]
where \(w_t = (w_t^1,\dots,w_t^N)\) is wildlife across \(N\) cells and \(a_t = (a_t^1,\dots,a_t^N)\) is patrol effort. The defender chooses \(a_t \in [0,1]^N\) under the budget \(\sum_{i=1}^N a_t^i \le B\).

\paragraph{Adversary behavior.}
At each step, the poacher places a snare in cell \(i\) with probability
\[
p_t^i \;=\; \operatorname{logistic}\!\Big(
z^i \;+\; \beta\, a_{t-1}^i \;+\; \eta \!\!\sum_{j\in \mathcal{N}(i)} a_{t-1}^j
\Big),
\]
where \(z^i\) is the baseline attractiveness of cell \(i\). The parameters \(\beta<0\) and \(\eta>0\) capture deterrence from prior patrol and displacement from neighboring patrols. The realized attack is
\[
k_t^i \sim \mathrm{Bernoulli}\!\big(p_t^i\big).
\]

\paragraph{Wildlife transition.}
After attacks, wildlife in each cell evolves by natural growth and poaching losses:
\[
w_t^i \;=\; \max\!\Big\{0,\;
\big(w_{t-1}^i\big)^{\phi}
\;-\; \alpha\, k_{t-1}^i\,\big(1 - a_t^i\big)
\Big\},
\]
where \(\phi>1\) is the growth rate and \(\alpha>0\) is the loss per uncovered attack. This defines the transition kernel \(T_z\) over states given actions:
\[
s_{t+1} \sim T_z\!\big(s_t,\, a_t\big).
\]

\paragraph{Return.}
The episode return is the total wildlife at the horizon,
\[
R(s_T) \;=\; \sum_{i=1}^N w_T^i,
\]
and the expected return of a policy \(\pi\) under environment parameters \(z\) is
\[
r(\pi, z) \;=\; \mathbb{E}\!\left[ R(s_T) \right],
\qquad s_{t+1} \sim T_z\!\big(s_t,\, \pi(s_t)\big),\; s_0=(0, w_0, 0).
\]

We assume access to an offline dataset of 100{,}000 transitions collected in Murchison Falls National Park using a well-trained SAC policy. For the online environment in Queen Elizabeth National Park, we are allowed a budget of 40{,}000 interactions.

\section{Discussion on Computational Cost}

The practical cost of data filtering is relatively small. For example, with a batch size of $256$ for training the policy network and a Monte Carlo sample size of $30$, the entire filtering process takes just $0.03$ seconds on an A100 GPU. We will highlight in the paper that this efficiency is due to the simplicity of our flow training objective, which follows a linear path. As a result, solving the corresponding ODE at inference time is very easy—A basic Euler method with just $10$ time steps is sufficient.

\newpage

\newpage

\section*{NeurIPS Paper Checklist}

\begin{enumerate}

\item {\bf Claims}
    \item[] Question: Do the main claims made in the abstract and introduction accurately reflect the paper's contributions and scope?
    \item[] Answer: \answerYes{} 
    \item[] Justification:  We propose a new method for reinforcement learning with shifted-dynamics data based on flow matching.
    \item[] Guidelines:
    \begin{itemize}
        \item The answer NA means that the abstract and introduction do not include the claims made in the paper.
        \item The abstract and/or introduction should clearly state the claims made, including the contributions made in the paper and important assumptions and limitations. A No or NA answer to this question will not be perceived well by the reviewers. 
        \item The claims made should match theoretical and experimental results, and reflect how much the results can be expected to generalize to other settings. 
        \item It is fine to include aspirational goals as motivation as long as it is clear that these goals are not attained by the paper. 
    \end{itemize}

\item {\bf Limitations}
    \item[] Question: Does the paper discuss the limitations of the work performed by the authors?
    \item[] Answer: \answerYes{} 
    \item[] Justification: We discuss the limitations of our method in Section~\ref{sec:conclusion- limitations}.
    \item[] Guidelines:
    \begin{itemize}
        \item The answer NA means that the paper has no limitation while the answer No means that the paper has limitations, but those are not discussed in the paper. 
        \item The authors are encouraged to create a separate "Limitations" section in their paper.
        \item The paper should point out any strong assumptions and how robust the results are to violations of these assumptions (e.g., independence assumptions, noiseless settings, model well-specification, asymptotic approximations only holding locally). The authors should reflect on how these assumptions might be violated in practice and what the implications would be.
        \item The authors should reflect on the scope of the claims made, e.g., if the approach was only tested on a few datasets or with a few runs. In general, empirical results often depend on implicit assumptions, which should be articulated.
        \item The authors should reflect on the factors that influence the performance of the approach. For example, a facial recognition algorithm may perform poorly when image resolution is low or images are taken in low lighting. Or a speech-to-text system might not be used reliably to provide closed captions for online lectures because it fails to handle technical jargon.
        \item The authors should discuss the computational efficiency of the proposed algorithms and how they scale with dataset size.
        \item If applicable, the authors should discuss possible limitations of their approach to address problems of privacy and fairness.
        \item While the authors might fear that complete honesty about limitations might be used by reviewers as grounds for rejection, a worse outcome might be that reviewers discover limitations that aren't acknowledged in the paper. The authors should use their best judgment and recognize that individual actions in favor of transparency play an important role in developing norms that preserve the integrity of the community. Reviewers will be specifically instructed to not penalize honesty concerning limitations.
    \end{itemize}

\item {\bf Theory assumptions and proofs}
    \item[] Question: For each theoretical result, does the paper provide the full set of assumptions and a complete (and correct) proof?
    \item[] Answer: \answerYes{} 
    \item[] Justification: We provide all proofs for the theorems, propositions, and lemmas in the Appendix.
    \item[] Guidelines:
    \begin{itemize}
        \item The answer NA means that the paper does not include theoretical results. 
        \item All the theorems, formulas, and proofs in the paper should be numbered and cross-referenced.
        \item All assumptions should be clearly stated or referenced in the statement of any theorems.
        \item The proofs can either appear in the main paper or the supplemental material, but if they appear in the supplemental material, the authors are encouraged to provide a short proof sketch to provide intuition. 
        \item Inversely, any informal proof provided in the core of the paper should be complemented by formal proofs provided in appendix or supplemental material.
        \item Theorems and Lemmas that the proof relies upon should be properly referenced. 
    \end{itemize}

    \item {\bf Experimental result reproducibility}
    \item[] Question: Does the paper fully disclose all the information needed to reproduce the main experimental results of the paper to the extent that it affects the main claims and/or conclusions of the paper (regardless of whether the code and data are provided or not)?
    \item[] Answer: \answerYes{} 
    \item[] Justification: We provide all the experimental details in Appendix~\ref{sec:appendix:exp-details}.
    \item[] Guidelines:
    \begin{itemize}
        \item The answer NA means that the paper does not include experiments.
        \item If the paper includes experiments, a No answer to this question will not be perceived well by the reviewers: Making the paper reproducible is important, regardless of whether the code and data are provided or not.
        \item If the contribution is a dataset and/or model, the authors should describe the steps taken to make their results reproducible or verifiable. 
        \item Depending on the contribution, reproducibility can be accomplished in various ways. For example, if the contribution is a novel architecture, describing the architecture fully might suffice, or if the contribution is a specific model and empirical evaluation, it may be necessary to either make it possible for others to replicate the model with the same dataset, or provide access to the model. In general. releasing code and data is often one good way to accomplish this, but reproducibility can also be provided via detailed instructions for how to replicate the results, access to a hosted model (e.g., in the case of a large language model), releasing of a model checkpoint, or other means that are appropriate to the research performed.
        \item While NeurIPS does not require releasing code, the conference does require all submissions to provide some reasonable avenue for reproducibility, which may depend on the nature of the contribution. For example
        \begin{enumerate}
            \item If the contribution is primarily a new algorithm, the paper should make it clear how to reproduce that algorithm.
            \item If the contribution is primarily a new model architecture, the paper should describe the architecture clearly and fully.
            \item If the contribution is a new model (e.g., a large language model), then there should either be a way to access this model for reproducing the results or a way to reproduce the model (e.g., with an open-source dataset or instructions for how to construct the dataset).
            \item We recognize that reproducibility may be tricky in some cases, in which case authors are welcome to describe the particular way they provide for reproducibility. In the case of closed-source models, it may be that access to the model is limited in some way (e.g., to registered users), but it should be possible for other researchers to have some path to reproducing or verifying the results.
        \end{enumerate}
    \end{itemize}

\item {\bf Open access to data and code}
    \item[] Question: Does the paper provide open access to the data and code, with sufficient instructions to faithfully reproduce the main experimental results, as described in supplemental material?
    \item[] Answer: \answerYes{} 
    \item[] Justification: We have provided the code in the paper. 
    \item[] Guidelines:
    \begin{itemize}
        \item The answer NA means that paper does not include experiments requiring code.
        \item Please see the NeurIPS code and data submission guidelines (\url{https://nips.cc/public/guides/CodeSubmissionPolicy}) for more details.
        \item While we encourage the release of code and data, we understand that this might not be possible, so “No” is an acceptable answer. Papers cannot be rejected simply for not including code, unless this is central to the contribution (e.g., for a new open-source benchmark).
        \item The instructions should contain the exact command and environment needed to run to reproduce the results. See the NeurIPS code and data submission guidelines (\url{https://nips.cc/public/guides/CodeSubmissionPolicy}) for more details.
        \item The authors should provide instructions on data access and preparation, including how to access the raw data, preprocessed data, intermediate data, and generated data, etc.
        \item The authors should provide scripts to reproduce all experimental results for the new proposed method and baselines. If only a subset of experiments are reproducible, they should state which ones are omitted from the script and why.
        \item At submission time, to preserve anonymity, the authors should release anonymized versions (if applicable).
        \item Providing as much information as possible in supplemental material (appended to the paper) is recommended, but including URLs to data and code is permitted.
    \end{itemize}

\item {\bf Experimental setting/details}
    \item[] Question: Does the paper specify all the training and test details (e.g., data splits, hyperparameters, how they were chosen, type of optimizer, etc.) necessary to understand the results?
    \item[] Answer: \answerYes{} 
    \item[] Justification:  We provide all the experimental details in Appendix~\ref{sec:appendix:exp-details}.
    \item[] Guidelines:
    \begin{itemize}
        \item The answer NA means that the paper does not include experiments.
        \item The experimental setting should be presented in the core of the paper to a level of detail that is necessary to appreciate the results and make sense of them.
        \item The full details can be provided either with the code, in appendix, or as supplemental material.
    \end{itemize}

\item {\bf Experiment statistical significance}
    \item[] Question: Does the paper report error bars suitably and correctly defined or other appropriate information about the statistical significance of the experiments?
    \item[] Answer: \answerYes{} 
    \item[] Justification: We report the standard deviation across different random seeds for all tasks.
    \item[] Guidelines:
    \begin{itemize}
        \item The answer NA means that the paper does not include experiments.
        \item The authors should answer "Yes" if the results are accompanied by error bars, confidence intervals, or statistical significance tests, at least for the experiments that support the main claims of the paper.
        \item The factors of variability that the error bars are capturing should be clearly stated (for example, train/test split, initialization, random drawing of some parameter, or overall run with given experimental conditions).
        \item The method for calculating the error bars should be explained (closed form formula, call to a library function, bootstrap, etc.)
        \item The assumptions made should be given (e.g., Normally distributed errors).
        \item It should be clear whether the error bar is the standard deviation or the standard error of the mean.
        \item It is OK to report 1-sigma error bars, but one should state it. The authors should preferably report a 2-sigma error bar than state that they have a 96\% CI, if the hypothesis of Normality of errors is not verified.
        \item For asymmetric distributions, the authors should be careful not to show in tables or figures symmetric error bars that would yield results that are out of range (e.g. negative error rates).
        \item If error bars are reported in tables or plots, The authors should explain in the text how they were calculated and reference the corresponding figures or tables in the text.
    \end{itemize}

\item {\bf Experiments compute resources}
    \item[] Question: For each experiment, does the paper provide sufficient information on the computer resources (type of compute workers, memory, time of execution) needed to reproduce the experiments?
    \item[] Answer: \answerYes{} 
    \item[] Justification:  All the computing infrastructure information has been provided Appendix~\ref{sec:appendix:exp-details}.
    \item[] Guidelines:
    \begin{itemize}
        \item The answer NA means that the paper does not include experiments.
        \item The paper should indicate the type of compute workers CPU or GPU, internal cluster, or cloud provider, including relevant memory and storage.
        \item The paper should provide the amount of compute required for each of the individual experimental runs as well as estimate the total compute. 
        \item The paper should disclose whether the full research project required more compute than the experiments reported in the paper (e.g., preliminary or failed experiments that didn't make it into the paper). 
    \end{itemize}
    
\item {\bf Code of ethics}
    \item[] Question: Does the research conducted in the paper conform, in every respect, with the NeurIPS Code of Ethics \url{https://neurips.cc/public/EthicsGuidelines}?
    \item[] Answer: \answerYes{} 
    \item[] Justification: We conform with the NeurIPS code of Ethics.
    \item[] Guidelines:
    \begin{itemize}
        \item The answer NA means that the authors have not reviewed the NeurIPS Code of Ethics.
        \item If the authors answer No, they should explain the special circumstances that require a deviation from the Code of Ethics.
        \item The authors should make sure to preserve anonymity (e.g., if there is a special consideration due to laws or regulations in their jurisdiction).
    \end{itemize}

\item {\bf Broader impacts}
    \item[] Question: Does the paper discuss both potential positive societal impacts and negative societal impacts of the work performed?
    \item[] Answer: \answerYes{} 
    \item[] Justification: We have discussed both potential positive societal impacts and negative societal impacts in Appendix~\ref{sec:appendix:impacts}.
    \item[] Guidelines:
    \begin{itemize}
        \item The answer NA means that there is no societal impact of the work performed.
        \item If the authors answer NA or No, they should explain why their work has no societal impact or why the paper does not address societal impact.
        \item Examples of negative societal impacts include potential malicious or unintended uses (e.g., disinformation, generating fake profiles, surveillance), fairness considerations (e.g., deployment of technologies that could make decisions that unfairly impact specific groups), privacy considerations, and security considerations.
        \item The conference expects that many papers will be foundational research and not tied to particular applications, let alone deployments. However, if there is a direct path to any negative applications, the authors should point it out. For example, it is legitimate to point out that an improvement in the quality of generative models could be used to generate deepfakes for disinformation. On the other hand, it is not needed to point out that a generic algorithm for optimizing neural networks could enable people to train models that generate Deepfakes faster.
        \item The authors should consider possible harms that could arise when the technology is being used as intended and functioning correctly, harms that could arise when the technology is being used as intended but gives incorrect results, and harms following from (intentional or unintentional) misuse of the technology.
        \item If there are negative societal impacts, the authors could also discuss possible mitigation strategies (e.g., gated release of models, providing defenses in addition to attacks, mechanisms for monitoring misuse, mechanisms to monitor how a system learns from feedback over time, improving the efficiency and accessibility of ML).
    \end{itemize}
    
\item {\bf Safeguards}
    \item[] Question: Does the paper describe safeguards that have been put in place for responsible release of data or models that have a high risk for misuse (e.g., pretrained language models, image generators, or scraped datasets)?
    \item[] Answer: \answerNA{} 
    \item[] Justification: This paper poses no such risks.
    \item[] Guidelines:
    \begin{itemize}
        \item The answer NA means that the paper poses no such risks.
        \item Released models that have a high risk for misuse or dual-use should be released with necessary safeguards to allow for controlled use of the model, for example by requiring that users adhere to usage guidelines or restrictions to access the model or implementing safety filters. 
        \item Datasets that have been scraped from the Internet could pose safety risks. The authors should describe how they avoided releasing unsafe images.
        \item We recognize that providing effective safeguards is challenging, and many papers do not require this, but we encourage authors to take this into account and make a best faith effort.
    \end{itemize}

\item {\bf Licenses for existing assets}
    \item[] Question: Are the creators or original owners of assets (e.g., code, data, models), used in the paper, properly credited and are the license and terms of use explicitly mentioned and properly respected?
    \item[] Answer: \answerYes{} 
    \item[] Justification: We have cited all code, data, and models we used in the paper.
    \item[] Guidelines:
    \begin{itemize}
        \item The answer NA means that the paper does not use existing assets.
        \item The authors should cite the original paper that produced the code package or dataset.
        \item The authors should state which version of the asset is used and, if possible, include a URL.
        \item The name of the license (e.g., CC-BY 4.0) should be included for each asset.
        \item For scraped data from a particular source (e.g., website), the copyright and terms of service of that source should be provided.
        \item If assets are released, the license, copyright information, and terms of use in the package should be provided. For popular datasets, \url{paperswithcode.com/datasets} has curated licenses for some datasets. Their licensing guide can help determine the license of a dataset.
        \item For existing datasets that are re-packaged, both the original license and the license of the derived asset (if it has changed) should be provided.
        \item If this information is not available online, the authors are encouraged to reach out to the asset's creators.
    \end{itemize}

\item {\bf New assets}
    \item[] Question: Are new assets introduced in the paper well documented and is the documentation provided alongside the assets?
    \item[] Answer: \answerNA{} 
    \item[] Justification: This paper does not release new assets.
    \item[] Guidelines:
    \begin{itemize}
        \item The answer NA means that the paper does not release new assets.
        \item Researchers should communicate the details of the dataset/code/model as part of their submissions via structured templates. This includes details about training, license, limitations, etc. 
        \item The paper should discuss whether and how consent was obtained from people whose asset is used.
        \item At submission time, remember to anonymize your assets (if applicable). You can either create an anonymized URL or include an anonymized zip file.
    \end{itemize}

\item {\bf Crowdsourcing and research with human subjects}
    \item[] Question: For crowdsourcing experiments and research with human subjects, does the paper include the full text of instructions given to participants and screenshots, if applicable, as well as details about compensation (if any)? 
    \item[] Answer: \answerNA{} 
    \item[] Justification: This paper does not involve crowdsourcing nor research with human subjects.
    \item[] Guidelines:
    \begin{itemize}
        \item The answer NA means that the paper does not involve crowdsourcing nor research with human subjects.
        \item Including this information in the supplemental material is fine, but if the main contribution of the paper involves human subjects, then as much detail as possible should be included in the main paper. 
        \item According to the NeurIPS Code of Ethics, workers involved in data collection, curation, or other labor should be paid at least the minimum wage in the country of the data collector. 
    \end{itemize}

\item {\bf Institutional review board (IRB) approvals or equivalent for research with human subjects}
    \item[] Question: Does the paper describe potential risks incurred by study participants, whether such risks were disclosed to the subjects, and whether Institutional Review Board (IRB) approvals (or an equivalent approval/review based on the requirements of your country or institution) were obtained?
    \item[] Answer: \answerNA{} 
    \item[] Justification:  This paper does not involve crowdsourcing nor research with human subjects.
    \item[] Guidelines:
    \begin{itemize}
        \item The answer NA means that the paper does not involve crowdsourcing nor research with human subjects.
        \item Depending on the country in which research is conducted, IRB approval (or equivalent) may be required for any human subjects research. If you obtained IRB approval, you should clearly state this in the paper. 
        \item We recognize that the procedures for this may vary significantly between institutions and locations, and we expect authors to adhere to the NeurIPS Code of Ethics and the guidelines for their institution. 
        \item For initial submissions, do not include any information that would break anonymity (if applicable), such as the institution conducting the review.
    \end{itemize}

\item {\bf Declaration of LLM usage}
    \item[] Question: Does the paper describe the usage of LLMs if it is an important, original, or non-standard component of the core methods in this research? Note that if the LLM is used only for writing, editing, or formatting purposes and does not impact the core methodology, scientific rigorousness, or originality of the research, declaration is not required.
    \item[] Answer: \answerNA{} 
    \item[] Justification: We did not use LLMs.
    \item[] Guidelines:
    \begin{itemize}
        \item The answer NA means that the core method development in this research does not involve LLMs as any important, original, or non-standard components.
        \item Please refer to our LLM policy (\url{https://neurips.cc/Conferences/2025/LLM}) for what should or should not be described.
    \end{itemize}

\end{enumerate}





\end{document}